\pgfplotsset{compat=1.11}
\crefname{hypothesis}{Hypothesis}{Hypotheses}
\title{Basis Pursuit and Orthogonal Matching Pursuit for Subspace-preserving Recovery: Theoretical Analysis\thanks{Submitted to the editors 12/30/2019. Part of the results have appeared in \cite{You:ICML15,You:CVPR16-SSCOMP}. 
		\funding{This work was funded by the National Science Foundation under contracts 1447822, 1704458, 1736448, and 1934931.}}}
\author{
	Daniel P. Robinson\thanks{Department of Industrial and Systems Engineering, Lehigh University, Bethlehem, PA, USA (\email{daniel.p.robinson@gmail.com}).}
	\and Ren\'e Vidal\thanks{Mathematical Institute for Data Science, Department of Biomedical Engineering, The Johns Hopkins University, Baltimore, MD, USA (\email{rvidal@jhu.edu}).}
	\and Chong You\thanks{Department of Electrical Engineering and Computer Sciences, University of California at Berkeley, Berkeley, CA, USA
		(\email{chong.you1987@gmail.com}). The vast majority of the work was done while this author was at The Johns Hopkins University.}
}
\def\0{\mathbf{0}}
\def\1{\mathbf{1}}
\def\a{\mathbf{a}}
\def\c{\mathbf{c}}
\def\b{\mathbf{b}}
\def\h{\mathbf{h}}
\def\p{\mathbf{p}}
\def\q{\mathbf{q}}
\def\u{\mathbf{u}}
\def\v{\mathbf{v}}
\def\w{\mathbf{w}}
\def\A{\mathbf{A}}
\def\Q{\mathbf{Q}}
\def\cA{\mathcal{A}}
\def\cD{\mathcal{D}}
\def\cP{\mathcal{P}}
\def\cK{\mathcal{K}}
\def\cR{\mathcal{R}}
\def\cS{\mathcal{S}}
\def\cV{\mathcal{V}}
\def\cW{\mathcal{W}}
\def\transpose{\top} 
\def\st{\hspace{1em} \mathrm{s.t.} \hspace{0.5em}}
\def\Re{\mathbb{R}}
\def\Sp{\mathbb{S}}
\tikzset{cross/.style={cross out, draw=black, minimum size=2*(#1-\pgflinewidth), inner sep=0pt, outer sep=0pt},
	cross/.default={3pt}}
\tikzset{
	box/.style={rectangle, rounded corners=6pt,
		minimum width=50pt, minimum height=20pt, inner sep=6pt,
		draw=black,thick, fill=white}
}
\tikzset{viewport/.style 2 args={
		x={({cos(-#1)*1cm},{sin(-#1)*sin(#2)*1cm})},
		y={({-sin(-#1)*1cm},{cos(-#1)*sin(#2)*1cm})},
		z={(0,{cos(#2)*1cm})}
}}
\pgfplotsset{only foreground/.style={
		restrict expr to domain={rawx*\CameraX + rawy*\CameraY + rawz*\CameraZ}{-0.05:100},
}}
\pgfplotsset{only background/.style={
		restrict expr to domain={rawx*\CameraX + rawy*\CameraY + rawz*\CameraZ}{-100:0.05}
}}
\newcommand{\ViewAzimuth}{10}
\newcommand{\ViewElevation}{35}
\pgfmathsetmacro{\CameraX}{sin(\ViewAzimuth)*cos(\ViewElevation)}
\pgfmathsetmacro{\CameraY}{-cos(\ViewAzimuth)*cos(\ViewElevation)}
\pgfmathsetmacro{\CameraZ}{sin(\ViewElevation)}
\newcommand{\myparagraph}[1]{\smallskip\noindent\textbf{#1.}}
\definecolor{tiffanyblue}{rgb}{0.04, 0.73, 0.71}
\definecolor{cadmiumgreen}{rgb}{0.0, 0.42, 0.24}
\definecolor{caribbeangreen}{rgb}{0.0, 0.8, 0.6}
\definecolor{amber}{rgb}{1.0, 0.49, 0.0}
\colorlet{pPntColor}{blue} 
\colorlet{dPntColor}{red} 
\colorlet{oPntColor}{orange} 
\colorlet{iPntColor}{caribbeangreen} 
\colorlet{cColor}{orange}
\colorlet{pRgnColor}{yellow} 
\colorlet{dRgnColor}{yellow} 
\DeclareMathOperator*{\card}{card}
\DeclareMathOperator*{\conv}{conv}
\DeclareMathOperator*{\ns}{null}
\DeclareMathOperator*{\range}{range}
\DeclareMathOperator*{\sgn}{sgn}
\DeclareMathOperator*{\spann}{span}
\DeclareMathOperator*{\argmin}{arg\,min}
\DeclareMathOperator*{\Argmin}{Arg\,min}
\DeclareMathOperator*{\argmax}{arg\,max}
\DeclareMathOperator*{\Argmax}{Arg\,max}
\DeclareMathOperator{\BP}{BP}
\DeclareMathOperator{\Dual}{dual}
\DeclareMathOperator{\OMP}{OMP}
\begin{document}
	
	\maketitle
	
	\begin{abstract}
		Given an overcomplete dictionary $\A$ and a signal 
		$\b = \A\c^*$ for some sparse vector $\c^*$ whose nonzero entries correspond to \emph{linearly independent} columns of $\A$,    
		classical sparse signal recovery theory considers the problem of whether $\c^*$ can be recovered 
		as the unique sparsest solution to $\b = \A \c$. 
		It is now well-understood that such recovery is possible by practical algorithms when the dictionary $\A$ is incoherent or restricted isometric. 
		In this paper, we consider the more general case where $\b$ lies in a 
		subspace $\cS_0$ spanned by a subset of \emph{linearly dependent} columns of $\A$, and   the remaining columns are outside of the subspace. 
		In this case, the sparsest representation may not be unique, and the dictionary may not be incoherent or restricted isometric. 
		The goal is to have the representation $\c$ correctly identify the subspace, i.e. the nonzero entries of $\c$ should correspond to columns of $\A$ that are in the subspace $\cS_0$. 
		Such a representation $\c$ is called \textit{subspace-preserving}, a key concept that has found important applications for learning low-dimensional structures in high-dimensional data.
		We present various geometric conditions that guarantee subspace-preserving recovery. 
		Among them, the major results are characterized by the covering radius and the angular distance, which capture the distribution of points in the subspace and the similarity between points in the subspace and points outside the subspace, respectively. 
		Importantly, these conditions do not require the dictionary to be incoherent or restricted isometric. 
		By establishing that the subspace-preserving recovery problem and the classical sparse signal recovery problem are equivalent under common assumptions on the latter, we show that several of our proposed conditions are generalizations of some well-known conditions in the sparse signal recovery literature.
	\end{abstract}
	
	\begin{keywords}
		subspace learning, subspace clustering, sparse recovery, compressive sensing
	\end{keywords}
	
	\begin{AMS}
		62-07, 68Q25, 68R10, 68U05
	\end{AMS}
	
	\section{Introduction}
	Sparse representation is a popular mathematical tool for many applications in areas such as geophysics \cite{Taylor:Geo79}, statistics \cite{Tibshirani:RSS96}, signal processing \cite{Chen:SIAM98} and computer vision \cite{Mairal:FT14}.
	Given a matrix $\A \in \Re^{D \times N}$ called the dictionary and a vector $\b \in \Re^D$, the problem of sparse recovery is to find a vector $\c \in \Re^N$ with the fewest nonzero entries that solves the under-determined linear system $\b = \A\c$. 
	In the field of \emph{compressive sensing}, tools from signal processing, numerical optimization, statistics and applied mathematics have significantly advanced the theoretical understanding of the sparse recovery problem \cite{Candes:SPM08,Elad:SIAM09,Foucart:13}.
	The key insight is that it is often  possible to recover an $N$-dimensional sparse signal $\c$ from $D$ ($D \ll N$) linear measurements $\b = \A \c$ (i.e., the row vectors of $\A$ are viewed as the means to obtain linear measurements).
	It is now well-known that such a $\c$ is the unique sparsest solution to $\b = \A \c$ if $\A$ satisfies the \emph{spark} condition  \cite{DonohoElad:PNAS03}. 
	Moreover, the reconstruction of such a sparse vector $\c$ is not only feasible in theory but also computationally efficient via numerical algorithms such as Basis Pursuit (BP) \cite{Chen:SIAM98} and Orthogonal Matching Pursuit (OMP) \cite{Pati:Asilomar93}. 
	Specifically, such algorithms are guaranteed to recover $\c$ if the dictionary $\A$ satisfies the \emph{incoherence} \cite{Tropp:TIT04,Donoho:TIT06} or \emph{restricted isometry} \cite{Candes:TIT05,Candes:CRAS08,Davenport:TIT10,Mo:ACHA11,Mo:TIT12-RIPinOMP,Cai:TIT14} properties.

	The mathematical theory for sparse recovery triggered the design of new techniques for broader applications \cite{Qaisar:JCN13}. 
	One of the most successful instances is the task of learning {parsimonious representations of} high-dimensional data such as images, videos, audios and genetics \cite{Wright:IEEEProc10,Mairal:FT14}. 
	For such data, the intrinsic dimension is often much smaller than the dimension of the ambient space. 
	For example, while face images taken by a mega-pixel camera are million-dimensional, images of a particular face exhibit low-dimensional structure since their variations can be described by a few factors such as the pose of the head, the expression of the face and the lighting conditions.
	To see how low-dimensionality is connected to sparse recovery, let $\A$ be a data matrix with each column corresponding to a data point, and assume that a subset of the columns is drawn from a low-dimensional linear subspace. 
	Then, for any data point $\b$ drawn from this subspace, there exist sparse solutions to the linear system $\b = \A \c$ whose nonzero entries correspond to columns of $\A$ that lie in the same subspace as $\b$; we say that such a sparse solution $\c$ is \emph{subspace-preserving}. 
	Finding subspace-preserving solutions plays a central role in a range of learning tasks 
	such as  classification \cite{Wright:PAMI09,Elhamifar:TSP12}, clustering \cite{Elhamifar:CVPR09,Dyer:JMLR13,You:CVPR16-SSCOMP,Yang:ECCV16,Tschannen:TIT18}, outlier detection \cite{Soltanolkotabi:AS12,You:CVPR17}, and subset selection \cite{Elhamifar:CVPR12,You:ECCV18}.
	
	As a concrete example, consider the task of face recognition where the goal is to classify a given face image to its correct group by using a collection of face images as a training set.  
	For each face, training images are collected under various illumination conditions. 
	Each image for each face is represented as a column vector that contains its pixel intensity values, and this collection of vectors becomes the columns of the dictionary $\A$. 
	Since the images of a given face under various illumination conditions lie approximately in a low-dimensional linear subspace \cite{Basri:PAMI03}, the columns of  $\A$ approximately lie in a union of subspaces with each subspace corresponding to one of the faces.
	In the test phase, a test image $\b$ is expressed as a sparse linear combination of the training images, i.e. $\b = \A \c$. 
	Owing to the subspace structure, the representation $\c$ is expected to be subspace-preserving, whereby the label of $\b$ is given by the label of the training images that correspond to nonzero entries in $\c$. This method for face recognition is known as the sparse representation-based classification (SRC) \cite{Wright:PAMI09}.
	
	
	The \emph{subspace-preserving recovery} problem (the focus of this paper) needs to be clearly distinguished from the \emph{sparse signal recovery} problem extensively studied in the literature \cite{Candes:SPM08,Elad:SIAM09,Foucart:13}. 
	Although both problems concern solving a sparse recovery problem, their objectives and standard for correctness are different. 
	In sparse signal recovery, the goal is to recover a sparse signal $\c$ from the measurements $\b = \A \c$, and the computation is carried out under the presumption that $\c$ is the unique sparsest solution.
	Consequently, the uniqueness of the sparsest solution to $\b = \A \c$ is a fundamental property in the theoretical study of sparse signal recovery. 
	This is in sharp contrast to subspace-preserving recovery for which the uniqueness generally does not hold.
	Instead of aiming to recover a unique sparsest solution, subspace-preserving recovery is regarded as successful when the nonzero entries in the recovered solution correspond to columns of $\A$ that lie in the same subspace as $\b$.
	This sets the stage for the development of radically new theory for subspace-preserving recovery.

	\subsection{Motivation for the proposed research}
	
	It is surprising that sparse recovery methods such as SRC work well for subspace structured data since the spark, coherence, and restricted isometry conditions are often violated.
	For example, if the training set contains two images of the same face taken under similar illumination conditions, then the coherence of the dictionary is close to one. 
	The gap between the existing theory and the empirical success of sparse recovery for subspace structured data calls for the study of new theoretical guarantees for subspace-preserving recovery. 
	

	\subsection{Problems studied}
	\label{sec:problem_formulation}
	
	Let $\cA := \{ \a_j \}_{j=1}^N \subset \Re^D $ be a finite dictionary composed of a subset of points $\cA_0 \subseteq \cA$ that span a $d_0$-dimensional subspace $\cS_0$\footnote{This paper focuses on the case where the subspace $\cS$ is \emph{linear} (i.e., passes through the origin). We refer the reader to \cite{Li:JSTSP18,You:ICCV19} for an account of the case where $\cS$ is \emph{affine}.}, with the remaining points $\cA_- := \cA \setminus \cA_0$ being outside the subspace $\cS_0$. 
	We use $\A$ to denote the matrix that contains all points from $\cA$ as its columns, and likewise for $\A_0$ and $\A_-$. We assume throughout that all vectors in $\cA$ have unit $\ell_2$ norm.
	For any $\0 \neq \b\in\cS_0$, we formally define a subspace-preserving representation as follows.
	\begin{definition}[subspace-preserving representation]\label{def:subspace-preserving}
		Given a dictionary $\cA$, matrix $\A$, and vector $\b$ as above, a vector $\c$ satisfying  $\b = \A \c$ is called a \emph{subspace-preserving representation} if $c_j \ne 0$ implies that $\a_j \in \cA_0$ for all $j\in\{1,\dots, N\}$. 
	\end{definition}
	
	In words, a $\c$ satisfying $\b = \A \c$ is subspace-preserving if the points in $\cA$ that correspond to the nonzero entries of $\c$ lie in $\cS_0$.
	For the face recognition application described earlier, we can think of $\cA_0$ as the set of training images for a particular face that spans the subspace $\cS_0$, and $\cA_-$ as the set of training data from all other faces. 
	Then, roughly speaking, determining whether the vector $\b$ belongs to the subspace $\cS_0$ is equivalent to deciding whether a subspace-preserving representation of $\b$ exists. 
	
	An important observation for recovering a subspace-preserving representation is that there always exist such representations with at most $d_0$ nonzero entries.
	Indeed, one can always find a set of $d_0$ linearly independent data points from $\cA_0$ that spans $\cS_0$, and can thus linearly represent all points $\b \in \cS_0$.
	When $d_0$ is small relative to $N$, such representations are sparse.
	This suggests that, among all representation vectors $\c$ that satisfy $\b = \A \c$, the sparsest ones should be subspace-preserving.
	
	In this paper, we provide a theoretical justification for such ideas. 
	Since the problem of finding the sparsest solution is NP-hard in general \cite{Natarajan:SIAM95}, we focus our theoretical analysis on sparse recovery using algorithms BP and OMP (see \cref{sec:BPOMP} for an overview of these methods). 
	We consider the following theoretical question:
	What conditions on $\A$ and $\b$ ensure that BP and OMP produce subspace-preserving solutions?
	In answering this question, we distinguish two different but related problems. 
	
	
	\begin{problem}[instance recovery problem]
		Given $\A$ and $\0 \neq \b\in\cS_0$ as above, under what conditions is subspace-preserving recovery guaranteed by BP and OMP?
	\end{problem}
	\begin{problem}[universal recovery problem]
		Given $\A$ as above, under what conditions is subspace-preserving recovery guaranteed by BP and OMP for all $\0 \neq \b\in\cS_0$?
	\end{problem}
	
	We derive instance and universal recovery conditions in \cref{sec:instance_recovery} and \cref{sec:universal_recovery}. 
	
	\begin{figure*}
		\small
		\centering
		\begin{tikzpicture}
		\node[box, text width=6.1cm, align=center] (1) at(0, 2.6) 
		{\text{$\forall\c \in \BP(\A, \b)$ are subspace-preserving}};
		\node[box, text width=6.1cm, align=center] (2) at(0, 1.2)
		{Tight \\
			Instance Dual Condition (T-IDC): \\$\exists \v \in \BP_{\Dual}(\A_0, \b)$ s.t.~$\| \A_-^\transpose \v \|_\infty < 1$};
		\node[box, text width=6.1cm, align=center] (7) at(0, -0.4)
		{Instance Dual Condition (IDC): \\
			\text{$\exists \v \in \cD(\A_0, \b)$ s.t.~$\| \A_-^\transpose \v \|_\infty < 1$}};
		\node[box, text width=6.1cm, align=center] (3) at(0, -2.0)
		{Geometric \\
			Instance Dual Condition (G-IDC): \\
			\text{$\exists \v \in \cD(\A_0, \b)$ s.t. $\gamma_0 > \theta(\{\pm\v\}, \cA_-)$}};
		\node[box, text width=5.8cm, align=center] (4) at(6.6, 2.6) 
		{\text{$\OMP(\A, \b)$ is subspace-preserving}};
		\node[box, text width=5.8cm, align=center] (5) at(6.6, -0.4) 
		{Instance Residual Condition (IRC): \\
			\text{$\forall \v \in \cR(\A_0, \b),\|\A_-^\transpose \v\|_\infty < \|\A_0^\transpose \v\|_\infty$}};
		\node[box, text width=5.8cm, align=center] (6) at(6.6, -2.0)
		{Geometric \\
			Instance Residual Condition (G-IRC): \\
			\text{$\forall \v \in \cR(\A_0, \b)$, $\gamma_0 > \theta(\{\pm\v\}, \cA_-)$}};
		\draw[implies-implies, double equal sign distance] (2) -- (1);
		\draw[       -implies, double equal sign distance] (3) -- (7);
		\draw[       -implies, double equal sign distance] (7) -- (2);		
		\draw[       -implies, double equal sign distance] (5) -- (4);
		\draw[       -implies, double equal sign distance] (6) -- (5);
		\end{tikzpicture}
		\caption{Instance recovery conditions for BP and OMP.
			The first row states instance subspace-preserving recovery by BP and OMP. 
			The second row has a tight condition for ensuring subspace-preserving recovery for BP characterized by the set of dual solutions $\BP_{\Dual}(\A_0, \b)$ (see \cref{thm:T-IDC}). 
			The third row gives the dual and residual conditions characterized by the dual points $\cD(\A_0, \b)$ (see \cref{thm:IDC}) and residual points $\cR(\A_0, \b)$ (see \cref{thm:IRC}), respectively. 
			The fourth row gives geometric conditions characterized by the covering radius $\gamma_0$ and the angular distance between $\cA_-$ and either (i) an arbitrary point from the set of dual points $\cD(\A_0, \b)$ for BP (see \cref{thm:G-IDC}), or (ii) all vectors from the set of residual points $\cR(\A_0, \b)$ for OMP (see \cref{thm:G-IRC}).}
		\label{fig:instance-recovery-conditions}
	\end{figure*}

	\begin{figure*}[h!]
		\small
		\centering
		\begin{tikzpicture}
		\node[box, text width=6.5cm, align=center] (1) at (0, 5.25)
		{\text{$\forall \b \in \cS_0\setminus \{\0\}$} \\
			\text{$\forall \c \in \BP(\A, \b)$ are subspace-preserving}};
		\node[box, text width=6.5cm, align=center] (3) at (0, 3.6)
		{Tight \\
			Universal Dual Condition (T-UDC): \\
			\text{$\forall \w \in \cD(\A_0), ~ \exists \v \in \w + \cS_0^\perp ~\text{s.t.}~ \|\A_-^\transpose \v\|_\infty < 1$}};
		\node[box, text width=6.25cm, align=center] (4) at (0, 1.95)
		{Universal Dual Condition (UDC): \\
			\text{$\forall \v \in \cD(\A_0)$, $\|\A_-^\transpose \v\|_\infty < 1$}};
		\node[box, text width=7.5cm, align=center] (5) at (3.0, 0.3)
		{Geometric  \\
			Universal Dual Condition (G-UDC): \\
			\text{$\forall \v \in \cD(\A_0)$, $\gamma_0 < \theta(\{\pm\v\}, \cA_-)$}};
		\node[box, text width=7.5cm, align=center] (6) at (3.0, -1.5) 
		{Geometric \\
			Universal Subspace Condition (G-USC): \\
			\text{$\forall \v \in \cS_0 \setminus \{\0\}$, $\gamma_0 < \theta(\{\pm\v\}, \cA_-)$}};
		\node[box, text width=5.4cm, align=center] (2) at (6.6, 5.25)
		{\text{$\forall \b \in \cS_0\setminus \{\0\}$} \\ 
			\text{$\OMP(\A, \b)$ is subspace-preserving}};
		\node[box, text width=5.4cm, align=center] (7) at (6.6, 1.95)
		{Universal Residual Condition (URC): \\
			\text{$\forall \v \in \cS_0 \setminus \{\0\}$, $\|\A_-^\transpose \v\|_\infty < \|\A_0^\transpose \v\|_\infty$}};
		\draw[implies-implies, double equal sign distance] (3) -- (1);
		\draw[       -implies, double equal sign distance] (4) -- (3);
		\draw[       -implies, double equal sign distance] (7) -- (2);
		\draw[implies-implies, double equal sign distance] (4) -- (7);
		\draw[       -implies, double equal sign distance] (5) -- (4);
		\draw[       -implies, double equal sign distance] (5) -- (7);		
		\draw[       -implies, double equal sign distance] (6) -- (5);
		\end{tikzpicture}
		\caption{Universal recovery conditions for BP and OMP. 
			The first row states universal subspace-preserving recovery by BP and OMP.
			The second row has a tight condition for ensuring universal subspace-preserving recovery for BP characterized by the set of dual points $\cD(\A_0)$ (see \cref{thm:T-UDC}).  
			The third row gives the dual and residual conditions characterized by the dual points $\cD(\A_0)$ (see \cref{thm:UDC}) and residual points (see \cref{thm:URC}).
			Geometric conditions characterized by the inradius $r_0$ and the coherence between $\cA_-$ and  (i) the set of dual points $\cD(\A_0)$ (see \cref{thm:G-UDC}) and (ii) the subspace $\cS_0$ (see \cref{thm:G-USC}), are given in rows four and five, respectively.
		}
		\label{fig:result-flowchart}
	\end{figure*}

	\subsection{Paper contributions} \label{subsec:summary}
	This paper makes the following contributions towards understanding the conditions for subspace-preserving recovery. 
	
	\begin{itemize}[topsep=0.2em,itemsep=0.2em,leftmargin=*]
		\item \myparagraph{Instance recovery conditions} We present conditions for ensuring instance recovery by both BP and OMP. 
		A summary of the conditions is given in Figure~\ref{fig:instance-recovery-conditions}.
		\begin{itemize}[topsep=0.2em,itemsep=0.2em,leftmargin=*]
			\item {We derive a Tight Instance Recovery Condition (T-IDC) for BP, which requires the existence of a point in $\BP_{\Dual}(\A_0, \b)$, the set of dual solutions to the BP problem with dictionary $\A_0$, that is well separated from points in $\A_{-}$.}
			\item {We derive the Instance Dual Condition (IDC) and the Instance Residual Condition (IRC) for BP and OMP, respectively. These \emph{sufficient conditions} require the set of dual points $\cD(\A_0, \b)$ for BP or the set of residual vectors $\cR(\A_0, \b)$ for OMP with dictionary $\A_0$ to be well separated from points in $\A_{-}$.}
			\item {We derive the Geometric Instance Dual Condition (G-IDC) and the Geometric Instance Residual Condition (G-IRC) for BP and OMP, respectively. These conditions are characterized by the covering radius $\gamma_0$ of points associated with $\cA_0$, and the angular distance between $\cA_-$ and either (i) the set of dual points $\cD(\A_0, \b)$ for BP, 			or (ii) the set of residual points $\cR(\A_0, \b)$ for OMP. }
		\end{itemize}
		\item \myparagraph{Universal recovery conditions}
		{We derive tight, sufficient, and geometric conditions for universal recovery by both BP and OMP (see Figure~\ref{fig:result-flowchart}) that parallel our conditions for instance recovery. The primary difference is that sufficient conditions for universal recovery by BP and OMP are equivalent, and are implied by a shared Geometric Universal Dual Condition (G-UDC) that depends on the covering radius of points associated with $\cA_0$ and the angular distance between the set of dual points $\cD(\A_0)$ and the data outside of the subspace $\cA_{-}$.} 
		%
		%
		\item \myparagraph{Connections to sparse signal recovery} We show that sparse signal recovery is a special case of subspace-preserving recovery. 
		By applying our instance and universal conditions to this specialized setting we obtain {new} conditions for sparse signal recovery.  
		We clarify the connections between these conditions and well-known sparse signal recovery conditions such as the exact recovery condition \cite{Tropp:TIT04}, the null space property \cite{Cohen:JAMS09} and the mutual coherence condition \cite{Donoho:TIT01}.
	\end{itemize}

	The problems of instance recovery by OMP, instance recovery by BP and universal recovery have been studied in the conference proceedings article \cite{You:CVPR16-SSCOMP}, the related work \cite{Soltanolkotabi:AS12}, and another conference proceedings article \cite{You:ICML15}, respectively. 
    This paper integrates part of the existing results developed under different settings and characterized by different mathematical concepts, and provides a unified study in which conditions for both BP and OMP, as well as for both instance and universal recovery, are derived and formulated in comparable forms. 
	This makes it easy to understand their relationship and to interpret their differences.
	In addition, we fill in missing components in existing works by introducing new conditions (e.g., the IDC and the URC), proving tightness of existing conditions (i.e., the T-IDC), and establishing equivalence of our new conditions (e.g., the T-UDC) with those in prior works.   
	This also allows us to identify the close relationship between subspace-preserving and sparse signal recovery conditions.

	\ifdefined\draft
	\else
	
	\begin{figure*}[t]
		\centering
		\def\nPoint{5}
		\def\primalAngle
		{{30.00, 57.60, 86.40, 126.00, 150.00, 210.00}}
		\def\dualRadius
		{{1.030, 1.032, 1.063, 1.022, 1.154, 1.030}}
		\def\dualAngle
		{{43.80, 72.00, 106.20, 138.00, 180.00, 223.80}}
		\def\instancePoint{195}
		\def\residualPoint{120}
		\def\instanceDualRadius{1.154}
		\def\instanceDualAngle{180.00}
		\def\instanceResidualAngle{120}
		\def\instanceResidualRadius{0.2588} 
		\subfloat[Illustration of the G-IDC.\label{fig:geometry-G-IDC}]
		{
			\begin{tikzpicture}[scale=1.9]
			\begin{axis}[
			hide axis,
			view={\ViewAzimuth}{\ViewElevation},     
			every axis plot/.style={very thin},
			disabledatascaling,                      
			anchor=origin,                           
			viewport={\ViewAzimuth}{\ViewElevation}, 
			xmin=-3,   xmax=3,
			ymin=-3,   ymax=3,
			]
			
			\def\Gamma{30}
			\draw [black, fill=black] plot [mark=*, mark size=0.3] coordinates{(0, 0, 0)};
			\addplot3 [domain = 0:360, samples = 60, samples y=1]
			({cos(x)}, {sin(x)}, {0});
			\foreach \i in {0, 30,...,150}
			\addplot3 [domain = 0:360, samples = 60, samples y=1, draw opacity = 0.1, only foreground]
			({cos \i*sin(x)}, {sin \i*sin(x)}, {cos(x)});
			\foreach \i in {0, 30,...,150}
			\addplot3 [domain = 0:360, samples = 60, samples y=1, draw opacity = 0.1, only background, dashed]
			({cos \i*sin(x)}, {sin \i*sin(x)}, {cos(x)});
			\foreach \j in {30, 60, ..., 150}
			\addplot3 [domain = 0:360, samples = 60, samples y=1, draw opacity = 0.1, only foreground]
			({cos(x)*sin \j}, {sin(x)*sin \j}, {cos \j});
			\foreach \j in {30, 60, ..., 150}
			\addplot3 [domain = 0:360, samples = 60, samples y=1, draw opacity = 0.1, only background, dashed]
			({cos(x)*sin \j}, {sin(x)*sin \j}, {cos \j});
			\def\Sx{1.3}         
			\def\Sy{1.7}
			\filldraw[draw=none,fill=gray!20, opacity=0.2] 
			(-\Sx,-\Sy,0) -- (-\Sx, \Sy, 0) -- (\Sx, \Sy, 0) -- (\Sx,  -\Sy, 0) -- cycle;
			\node[black] at (1.1, 1.1, 0) {\tiny$\mathcal{S}_0$};
			\addplot3[domain=0:360, samples=60, samples y=5, domain y=0:\Gamma, surf, shader=flat, color=dRgnColor, draw opacity = 0.0, fill opacity=0.5]
			(
			{cos \instanceDualAngle * cos(y) - sin \instanceDualAngle * sin(y) * cos(x)}, 
			{sin \instanceDualAngle * cos(y) + cos \instanceDualAngle * sin(y) * cos(x)}, 
			{sin(y) * sin(x)}                
			);
			\addplot3[domain=0:360, samples=60, samples y=5, domain y=0:\Gamma, surf, shader=flat, color=dRgnColor, draw opacity = 0.0, fill opacity=0.5]
			(
			{- cos \instanceDualAngle * cos(y) + sin \instanceDualAngle * sin(y) * cos(x)}, 
			{-sin \instanceDualAngle * cos(y) - cos \instanceDualAngle * sin(y) * cos(x)}, 
			{sin(y) * sin(x)}                
			);
			\foreach \j in {1, 2, ..., \nPoint}
			{
				\pgfmathsetmacro\angle{\primalAngle[\j-1]}
				\addplot3[only marks,mark=*,mark size=0.6,pPntColor, fill=pPntColor] coordinates { (cos \angle, sin \angle, 0) };
				\addplot3[only marks,mark=*,mark size=0.6,pPntColor, fill=pPntColor] coordinates { (-cos \angle, -sin \angle, 0) };
			}    
			\foreach \j in {1, 2, ..., \nPoint}
			{
				\pgfmathsetmacro\angle{\dualAngle[\j-1]}
				\pgfmathsetmacro\radius{\dualRadius[\j-1]}
				\addplot3[only marks,mark=*,mark size=0.6,dPntColor, fill=dPntColor] coordinates { (\radius * cos \angle, \radius * sin \angle, 0) };
				\addplot3[only marks,mark=*,mark size=0.6,dPntColor, fill=dPntColor] coordinates { (-\radius * cos \angle, -\radius * sin \angle, 0) };
			}
			\foreach \j in {1, 2, ..., \nPoint}
			{
				\pgfmathsetmacro\anglecurr{\primalAngle[\j-1]}
				\pgfmathsetmacro\anglenext{\primalAngle[\j]}
				\addplot3[solid, pPntColor] coordinates { 
					(cos \anglecurr, sin \anglecurr, 0) 
					(cos \anglenext, sin \anglenext, 0)};
				\addplot3[solid, pPntColor] coordinates { 
					(-cos \anglecurr, -sin \anglecurr, 0) 
					(-cos \anglenext, -sin \anglenext, 0)};
			}
			\foreach \j in {1, 2, ..., \nPoint}
			{
				\pgfmathsetmacro\anglecurr{\dualAngle[\j-1]}
				\pgfmathsetmacro\anglenext{\dualAngle[\j]}
				\pgfmathsetmacro\radiuscurr{\dualRadius[\j-1]}
				\pgfmathsetmacro\radiusnext{\dualRadius[\j]}
				
				\addplot3[solid, dPntColor] coordinates {
					(\radiuscurr * cos \anglecurr, \radiuscurr * sin \anglecurr, 0) 
					(\radiusnext * cos \anglenext, \radiusnext * sin \anglenext, 0)}; 
				\addplot3[solid, dPntColor] coordinates {
					(-1 * \radiuscurr * cos \anglecurr, -1 * \radiuscurr * sin \anglecurr, 0) 
					(-1 * \radiusnext * cos \anglenext, -1 * \radiusnext * sin \anglenext, 0)}; 
			}
			\def\gammaAngle{180}
			\addplot3[dashed, cColor] coordinates {
				(0,0,0) 
				(cos \gammaAngle, sin \gammaAngle, 0)};
			\addplot3[dashed, cColor] coordinates {
				(0,0,0)
				(cos \Gamma * cos \gammaAngle, cos \Gamma * sin \gammaAngle, sin \Gamma)};
			\addplot3[only marks,mark=*,mark size=0.6,cColor, fill=cColor] coordinates{(cos \gammaAngle, sin \gammaAngle, 0)};
			\addplot3[only marks,mark=*,mark size=0.6,cColor, fill=cColor] coordinates{(cos \Gamma * cos \gammaAngle, cos \Gamma * sin \gammaAngle, sin \Gamma)};
			\addplot3 [domain = 0:\Gamma, samples = 3, samples y=1, cColor]
			({0.3* cos \gammaAngle * cos(x)}, {0.3 * sin \gammaAngle * cos(x)}, {0.3* sin(x)});
			\node[above left, cColor] at (0.3* cos \gammaAngle * cos \Gamma, 0.3 * sin \gammaAngle * cos \Gamma, 0.3* sin \Gamma) {\tiny$\gamma_0$};
			\addplot3[only marks,mark=x,mark size=2, black, fill=black] coordinates { (cos \instancePoint, sin \instancePoint, 0.0) };
			\node[black, right] at (cos \instancePoint, sin \instancePoint, 0.0)  {\tiny$\b$}; 
			\addplot3[only marks,mark=*,mark size=1, iPntColor, fill=iPntColor] coordinates { (\instanceDualRadius * cos \instanceDualAngle, \instanceDualRadius * sin \instanceDualAngle, 0) };
			\end{axis}
			\end{tikzpicture}
		}
		~~
		\subfloat[Illustration of the G-IRC.\label{fig:geometry-G-IRC}] 
		{
			\begin{tikzpicture}[scale=1.9]
			\begin{axis}[
			hide axis,
			view={\ViewAzimuth}{\ViewElevation},     
			every axis plot/.style={very thin},
			disabledatascaling,                      
			anchor=origin,                           
			viewport={\ViewAzimuth}{\ViewElevation}, 
			xmin=-3,   xmax=3,
			ymin=-3,   ymax=3,
			]
			
			\def\Gamma{30}
			\draw [black, fill=black] plot [mark=*, mark size=0.3] coordinates{(0, 0, 0)};
			\addplot3 [domain = 0:360, samples = 60, samples y=1]
			({cos(x)}, {sin(x)}, {0});
			\foreach \i in {0, 30,...,150}
			\addplot3 [domain = 0:360, samples = 60, samples y=1, draw opacity = 0.1, only foreground]
			({cos \i*sin(x)}, {sin \i*sin(x)}, {cos(x)});
			\foreach \i in {0, 30,...,150}
			\addplot3 [domain = 0:360, samples = 60, samples y=1, draw opacity = 0.1, only background, dashed]
			({cos \i*sin(x)}, {sin \i*sin(x)}, {cos(x)});
			\foreach \j in {30, 60, ..., 150}
			\addplot3 [domain = 0:360, samples = 60, samples y=1, draw opacity = 0.1, only foreground]
			({cos(x)*sin \j}, {sin(x)*sin \j}, {cos \j});
			\foreach \j in {30, 60, ..., 150}
			\addplot3 [domain = 0:360, samples = 60, samples y=1, draw opacity = 0.1, only background, dashed]
			({cos(x)*sin \j}, {sin(x)*sin \j}, {cos \j});
			\def\Sx{1.3}         
			\def\Sy{1.7}
			\filldraw[draw=none,fill=gray!20, opacity=0.2] 
			(-\Sx,-\Sy,0) -- (-\Sx, \Sy, 0) -- (\Sx, \Sy, 0) -- (\Sx,  -\Sy, 0) -- cycle;
			\node[black] at (1.1, 1.1, 0) {\tiny$\mathcal{S}_0$};
			\foreach \j in {1, 2, ..., \nPoint}
			{
				\pgfmathsetmacro\angle{\primalAngle[\j-1]}
				\addplot3[only marks,mark=*,mark size=0.6,pPntColor, fill=pPntColor] coordinates { (cos \angle, sin \angle, 0) };
				\addplot3[only marks,mark=*,mark size=0.6,pPntColor, fill=pPntColor] coordinates { (-cos \angle, -sin \angle, 0) };
			}   
			\addplot3[domain=0:360, samples=60, samples y=5, domain y=0:\Gamma, surf, shader=flat, color=dRgnColor, draw opacity = 0.0, fill opacity=0.5]
			(
			{cos \instancePoint * cos(y) - sin \instancePoint * sin(y) * cos(x)}, 
			{sin \instancePoint * cos(y) + cos \instancePoint * sin(y) * cos(x)}, 
			{sin(y) * sin(x)}                
			);
			\addplot3[domain=0:360, samples=60, samples y=5, domain y=0:\Gamma, surf, shader=flat, color=dRgnColor, draw opacity = 0.0, fill opacity=0.5]
			(
			{- cos \instancePoint * cos(y) + sin \instancePoint * sin(y) * cos(x)}, 
			{-sin \instancePoint * cos(y) - cos \instancePoint * sin(y) * cos(x)}, 
			{sin(y) * sin(x)}                
			); 
			
			\addplot3[domain=0:360, samples=60, samples y=5, domain y=0:\Gamma, surf, shader=flat, color=dRgnColor, draw opacity = 0.0, fill opacity=0.5]
			(
			{cos \residualPoint* cos(y) - sin \residualPoint * sin(y) * cos(x)}, 
			{sin \residualPoint * cos(y) + cos \residualPoint * sin(y) * cos(x)}, 
			{sin(y) * sin(x)}                
			);
			\addplot3[domain=0:360, samples=60, samples y=5, domain y=0:\Gamma, surf, shader=flat, color=dRgnColor, draw opacity = 0.0, fill opacity=0.5]
			(
			{- cos \residualPoint * cos(y) + sin \residualPoint * sin(y) * cos(x)}, 
			{-sin \residualPoint * cos(y) - cos \residualPoint * sin(y) * cos(x)}, 
			{sin(y) * sin(x)}                
			); 
			\addplot3[dashed, cColor] coordinates {
				(0,0,0) 
				(cos \residualPoint, sin \residualPoint, 0)};
			\addplot3[dashed, cColor] coordinates {
				(0,0,0)
				(cos \Gamma * cos \residualPoint, cos \Gamma * sin \residualPoint, sin \Gamma)};
			\addplot3[only marks,mark=*,mark size=0.6,cColor, fill=cColor] coordinates{(cos \residualPoint, sin \residualPoint, 0)};
			\addplot3[only marks,mark=*,mark size=0.6,cColor, fill=cColor] coordinates{(cos \Gamma * cos \residualPoint, cos \Gamma * sin \residualPoint, sin \Gamma)};
			\addplot3 [domain = 0:\Gamma, samples = 3, samples y=1, cColor]
			({0.3* cos \residualPoint * cos(x)}, {0.3 * sin \residualPoint * cos(x)}, {0.3* sin(x)});
			\node[above left, cColor] at (0.3* cos \residualPoint * cos \Gamma, 0.3 * sin \residualPoint * cos \Gamma, 0.3* sin \Gamma) {\tiny$\gamma_0$};
			\addplot3[only marks,mark=x,mark size=2, black, fill=black] coordinates { (cos \instancePoint, sin \instancePoint, 0.0) };
			\node[black, right] at (cos \instancePoint, sin \instancePoint, 0.0)  {\tiny$\b$}; 
			\addplot3[only marks,mark=*,mark size=0.8, iPntColor, fill=iPntColor] coordinates { (cos \instancePoint, sin \instancePoint, 0) };
			\addplot3[only marks,mark=*,mark size=0.8, iPntColor, fill=iPntColor] coordinates { 			(\instanceResidualRadius * cos \instanceResidualAngle, \instanceResidualRadius * sin \instanceResidualAngle, 0) };
			\end{axis}
			\end{tikzpicture}
		}
		\\
		\subfloat[Illustration of the G-UDC.\label{fig:geometry-3D-DRC}] 
		{
			\begin{tikzpicture}[scale=1.9]
			\begin{axis}[
			hide axis,
			view={\ViewAzimuth}{\ViewElevation},     
			every axis plot/.style={very thin},
			disabledatascaling,                      
			anchor=origin,                           
			viewport={\ViewAzimuth}{\ViewElevation}, 
			xmin=-3,   xmax=3,
			ymin=-3,   ymax=3,
			]
			
			\def\Gamma{30}
			\draw [black, fill=black] plot [mark=*, mark size=0.3] coordinates{(0, 0, 0)};
			\addplot3 [domain = 0:360, samples = 60, samples y=1]
			({cos(x)}, {sin(x)}, {0});
			\foreach \i in {0, 30,...,150}
			\addplot3 [domain = 0:360, samples = 60, samples y=1, draw opacity = 0.1, only foreground]
			({cos \i*sin(x)}, {sin \i*sin(x)}, {cos(x)});
			\foreach \i in {0, 30,...,150}
			\addplot3 [domain = 0:360, samples = 60, samples y=1, draw opacity = 0.1, only background, dashed]
			({cos \i*sin(x)}, {sin \i*sin(x)}, {cos(x)});
			\foreach \j in {30, 60, ..., 150}
			\addplot3 [domain = 0:360, samples = 60, samples y=1, draw opacity = 0.1, only foreground]
			({cos(x)*sin \j}, {sin(x)*sin \j}, {cos \j});
			\foreach \j in {30, 60, ..., 150}
			\addplot3 [domain = 0:360, samples = 60, samples y=1, draw opacity = 0.1, only background, dashed]
			({cos(x)*sin \j}, {sin(x)*sin \j}, {cos \j});
			\def\Sx{1.3}         
			\def\Sy{1.7}
			\filldraw[draw=none,fill=gray!20, opacity=0.2] 
			(-\Sx,-\Sy,0) -- (-\Sx, \Sy, 0) -- (\Sx, \Sy, 0) -- (\Sx,  -\Sy, 0) -- cycle;
			\node[black] at (1.1, 1.1, 0) {\tiny$\mathcal{S}_0$};
			\foreach \j in {1, 2, ..., \nPoint}
			{
				\pgfmathsetmacro\angle{\dualAngle[\j-1]}
				\addplot3[domain=0:360, samples=60, samples y=5, domain y=0:\Gamma, surf, shader=flat, color=dRgnColor, draw opacity = 0.0, fill opacity=0.5]
				(
				{cos \angle * cos(y) - sin \angle * sin(y) * cos(x)}, 
				{sin \angle * cos(y) + cos \angle * sin(y) * cos(x)}, 
				{sin(y) * sin(x)}                
				);
				\addplot3[domain=0:360, samples=60, samples y=5, domain y=0:\Gamma, surf, shader=flat, color=dRgnColor, draw opacity = 0.0, fill opacity=0.5]
				(
				{- cos \angle * cos(y) + sin \angle * sin(y) * cos(x)}, 
				{-sin \angle * cos(y) - cos \angle * sin(y) * cos(x)}, 
				{sin(y) * sin(x)}                
				);
			}
			\foreach \j in {1, 2, ..., \nPoint}
			{
				\pgfmathsetmacro\angle{\primalAngle[\j-1]}
				\addplot3[only marks,mark=*,mark size=0.6,pPntColor, fill=pPntColor] coordinates { (cos \angle, sin \angle, 0) };
				\addplot3[only marks,mark=*,mark size=0.6,pPntColor, fill=pPntColor] coordinates { (-cos \angle, -sin \angle, 0) };
			}     
			\foreach \j in {1, 2, ..., \nPoint}
			{
				\pgfmathsetmacro\angle{\dualAngle[\j-1]}
				\pgfmathsetmacro\radius{\dualRadius[\j-1]}
				\addplot3[only marks,mark=*,mark size=0.6,dPntColor, fill=dPntColor] coordinates { (\radius * cos \angle, \radius * sin \angle, 0) };
				\addplot3[only marks,mark=*,mark size=0.6,dPntColor, fill=dPntColor] coordinates { (-\radius * cos \angle, -\radius * sin \angle, 0) };
			}
			\foreach \j in {1, 2, ..., \nPoint}
			{
				\pgfmathsetmacro\anglecurr{\primalAngle[\j-1]}
				\pgfmathsetmacro\anglenext{\primalAngle[\j]}
				\addplot3[solid, pPntColor] coordinates { 
					(cos \anglecurr, sin \anglecurr, 0) 
					(cos \anglenext, sin \anglenext, 0)};
				\addplot3[solid, pPntColor] coordinates { 
					(-cos \anglecurr, -sin \anglecurr, 0) 
					(-cos \anglenext, -sin \anglenext, 0)};
			}
			\foreach \j in {1, 2, ..., \nPoint}
			{
				\pgfmathsetmacro\anglecurr{\dualAngle[\j-1]}
				\pgfmathsetmacro\anglenext{\dualAngle[\j]}
				\pgfmathsetmacro\radiuscurr{\dualRadius[\j-1]}
				\pgfmathsetmacro\radiusnext{\dualRadius[\j]}
				
				\addplot3[solid, dPntColor] coordinates {
					(\radiuscurr * cos \anglecurr, \radiuscurr * sin \anglecurr, 0) 
					(\radiusnext * cos \anglenext, \radiusnext * sin \anglenext, 0)}; 
				\addplot3[solid, dPntColor] coordinates {
					(-1 * \radiuscurr * cos \anglecurr, -1 * \radiuscurr * sin \anglecurr, 0) 
					(-1 * \radiusnext * cos \anglenext, -1 * \radiusnext * sin \anglenext, 0)}; 
			}
			
			\def\gammaAngle{180}
			\addplot3[dashed, cColor] coordinates {
				(0,0,0) 
				(cos \gammaAngle, sin \gammaAngle, 0)};
			\addplot3[dashed, cColor] coordinates {
				(0,0,0)
				(cos \Gamma * cos \gammaAngle, cos \Gamma * sin \gammaAngle, sin \Gamma)};
			\addplot3[only marks,mark=*,mark size=0.6,cColor, fill=cColor] coordinates{(cos \gammaAngle, sin \gammaAngle, 0)};
			\addplot3[only marks,mark=*,mark size=0.6,cColor, fill=cColor] coordinates{(cos \Gamma * cos \gammaAngle, cos \Gamma * sin \gammaAngle, sin \Gamma)};
			\addplot3 [domain = 0:\Gamma, samples = 3, samples y=1, cColor]
			({0.3* cos \gammaAngle * cos(x)}, {0.3 * sin \gammaAngle * cos(x)}, {0.3* sin(x)});
			\node[above left, cColor] at (0.3* cos \gammaAngle * cos \Gamma, 0.3 * sin \gammaAngle * cos \Gamma, 0.3* sin \Gamma) {\tiny$\gamma_0$};
			\end{axis}
			\end{tikzpicture}
		}
		~~
		\subfloat[Illustration of the G-USC.\label{fig:geometry-3D-PRC}] 
		{
			\begin{tikzpicture}[scale=1.9]
			\begin{axis}[
			hide axis,
			view={\ViewAzimuth}{\ViewElevation},     
			every axis plot/.style={very thin},
			disabledatascaling,                      
			anchor=origin,                           
			viewport={\ViewAzimuth}{\ViewElevation}, 
			xmin=-3,   xmax=3,
			ymin=-3,   ymax=3,
			]
			
			\def\Gamma{30}
			\draw [black, fill=black] plot [mark=*, mark size=0.3] coordinates{(0, 0, 0)};
			\addplot3 [domain = 0:360, samples = 60, samples y=1]
			({cos(x)}, {sin(x)}, {0});
			\foreach \i in {0, 30,...,150}
			\addplot3 [domain = 0:360, samples = 60, samples y=1, draw opacity = 0.1, only foreground]
			({cos \i*sin(x)}, {sin \i*sin(x)}, {cos(x)});
			\foreach \i in {0, 30,...,150}
			\addplot3 [domain = 0:360, samples = 60, samples y=1, draw opacity = 0.1, only background, dashed]
			({cos \i*sin(x)}, {sin \i*sin(x)}, {cos(x)});
			\foreach \j in {30, 60, ..., 150}
			\addplot3 [domain = 0:360, samples = 60, samples y=1, draw opacity = 0.1, only foreground]
			({cos(x)*sin \j}, {sin(x)*sin \j}, {cos \j});
			\foreach \j in {30, 60, ..., 150}
			\addplot3 [domain = 0:360, samples = 60, samples y=1, draw opacity = 0.1, only background, dashed]
			({cos(x)*sin \j}, {sin(x)*sin \j}, {cos \j});
			\def\Sx{1.3}         
			\def\Sy{1.7}
			\filldraw[draw=none,fill=gray!20, opacity=0.2] 
			(-\Sx,-\Sy,0) -- (-\Sx, \Sy, 0) -- (\Sx, \Sy, 0) -- (\Sx,  -\Sy, 0) -- cycle;
			\node[black] at (1.1, 1.1, 0) {\tiny$\mathcal{S}_0$};
			\addplot3[domain=0:360, samples=51, samples y=11,smooth,
			domain y=-\Gamma:\Gamma,surf,shader=flat,color=pRgnColor,draw opacity = 0.0, fill opacity=0.5] 
			({1*cos(x)*cos(y)},
			{1*sin(x)*cos(y)}, {1*sin(y)});
			\foreach \j in {1, 2, ..., \nPoint}
			{
				\pgfmathsetmacro\angle{\primalAngle[\j-1]}
				\addplot3[only marks,mark=*,mark size=0.6,pPntColor, fill=pPntColor] coordinates { (cos \angle, sin \angle, 0) };
				\addplot3[only marks,mark=*,mark size=0.6,pPntColor, fill=pPntColor] coordinates { (-cos \angle, -sin \angle, 0) };
			}     
			\foreach \j in {1, 2, ..., \nPoint}
			{
				\pgfmathsetmacro\anglecurr{\primalAngle[\j-1]}
				\pgfmathsetmacro\anglenext{\primalAngle[\j]}
				\addplot3[solid, pPntColor] coordinates { 
					(cos \anglecurr, sin \anglecurr, 0) 
					(cos \anglenext, sin \anglenext, 0)};
				\addplot3[solid, pPntColor] coordinates { 
					(-cos \anglecurr, -sin \anglecurr, 0) 
					(-cos \anglenext, -sin \anglenext, 0)};
			}
			
			
			\def\gammaAngle{180}
			\addplot3[dashed, cColor] coordinates {
				(0,0,0) 
				(cos \gammaAngle, sin \gammaAngle, 0)};
			\addplot3[dashed, cColor] coordinates {
				(0,0,0)
				(cos \Gamma * cos \gammaAngle, cos \Gamma * sin \gammaAngle, sin \Gamma)};
			\addplot3[only marks,mark=*,mark size=0.6,cColor, fill=cColor] coordinates{(cos \gammaAngle, sin \gammaAngle, 0)};
			\addplot3[only marks,mark=*,mark size=0.6,cColor, fill=cColor] coordinates{(cos \Gamma * cos \gammaAngle, cos \Gamma * sin \gammaAngle, sin \Gamma)};
			\addplot3 [domain = 0:\Gamma, samples = 3, samples y=1, cColor]
			({0.3* cos \gammaAngle * cos(x)}, {0.3 * sin \gammaAngle * cos(x)}, {0.3* sin(x)});
			\node[above left, cColor] at (0.3* cos \gammaAngle * cos \Gamma, 0.3 * sin \gammaAngle * cos \Gamma, 0.3* sin \Gamma) {\tiny$\gamma_0$};
			\end{axis}
			\end{tikzpicture}
		}
		\caption[Illustration of the geometry associated with subspace-preserving representations]{\label{fig:geometry}Geometric interpretations of subspace-preserving conditions in \cref{thm:main}. The atoms $\cA_0:=\{\a_j\}_{j=1}^5$ (drawn in {\color{pPntColor}blue}) and the signal $\b$ (marked as $\times$, for (a) and (b) only) lie on the unit circle of a two-dimensional subspace $\cS_0$. The atoms $\cA_-$ (not drawn) lie on the unit sphere in the ambient space $\Re^3$. 
			In (a) and (b), points in $\cD(\A_0, \b)$ and $\cR(\A_0, \b)$ are illustrated as {\color{iPntColor}green} dots (see also \cref{fig:geometry-dual,fig:geometry-residual}). 
			In (c), points in $\cD(\A_0)$ are illustrated as {\color{dPntColor}red} dots (see also \cref{fig:geometry-2D}). 
			The G-IDC (resp., G-IRC, G-UDC and G-USC) holds if no point from $\cA_-$ lies in the {yellow} region in (a) (resp, (b), (c) and (d)), which corresponds to the set $\{\a \in \Re^3: \gamma_0 < \theta(\{\pm \v\}, \a)\}$ with $\v$ in the set $\cD(\A_0, \b)$ (resp., $\cR(\A_0, \b)$, $\cD(\A_0)$ and $\cS_0$). 
		}
	\end{figure*}
	
	\fi
	
	\subsection{Major results}

	We highlight our instance and universal recovery conditions that are characterized by the \emph{covering radius} and the \emph{angular distance} (i.e., the bottom rows of \cref{fig:instance-recovery-conditions,fig:result-flowchart}), and demonstrate their geometric interpretations.  
	The \emph{covering radius} $\gamma_0$, formally defined in \cref{def:covering-radius}, is the smallest angle such that closed spherical balls of that radius centered at the points of $\pm \cA_0$ cover the unit sphere of $\cS_0$. 
	This covering radius measures how well distributed the points from $\cA_0$ are in the subspace $\cS_0$, and should be relatively small if the points are equally distributed in all directions within the subspace and not skewed in a certain direction. 
	The \emph{angular distance} between two sets $\cV\subseteq\Re^D\setminus\{0\}$ and $\cW\subseteq\Re^D\setminus\{0\}$ is given by the minimum angular distance between points from these two sets, i.e.,
	$$
	\theta(\cV, \cW):= 
	\inf_{v\in\cV,w\in\cW} \cos^{-1} \left\langle \frac{\v}{\|\v\|_2}, \frac{\w}{\|\w\|_2} \right\rangle. 
	$$
	
	Our major results for subspace-preserving recovery are given in the following theorem, which encapsulates \cref{thm:G-IDC,thm:G-IRC,thm:G-UDC,thm:G-USC,thm:G-UDC-OMP}.
	\begin{theorem}\label{thm:main}
		The following geometric recovery conditions hold:
		\begin{enumerate}[label=(\roman*)]
			\item (Instance recovery conditions) For a fixed $\0 \ne \b \in \cS_0$, BP and OMP recover subspace-preserving solutions if G-IDC and G-IRC hold, respectively, where
			\begin{align}
				&\exists \v \in \cD(\A_0, \b)
				\ \ \text{satisfying} \ \
				\gamma_0 < \theta(\{\pm\v\}, \cA_-), ~\text{and}\tag{G-IDC}\\
				&\forall \v \in \cR(\A_0, \b)
				\ \text{it holds that} \
				\gamma_0 < \theta(\{\pm\v\}, \cA_-),\tag{G-IRC}
			\end{align}
			where $\cD(\A_0, \b)\subseteq \cS_0$ and $\cR(\A_0, \b)\subseteq \cS_0$ are the sets of dual points (see \cref{def:instance-dual-point}) and residual points (see \cref{def:residual-points}), respectively. 
			\item (Universal recovery conditions) BP and OMP recover subspace-preserving solutions
			for all $\0 \ne \b \in \cS_0$ 
		 if either G-UDC holds or G-USC holds, where
			\begin{align}
				&\forall \v \in \cD(\A_0) 
				\ \text{it holds that} \
				\gamma_0 < \theta(\{\pm\v\}, \cA_-), ~\text{and}\tag{G-UDC}\\
				&\forall \v \in \cS_0 
				\ \text{it holds that} \
				\gamma_0 < \theta(\{\pm\v\}, \cA_-),\tag{G-USC}
			\end{align}
			where $\cD(\A_0)\subseteq \cS_0$ is the set of dual points associated to $\A_0$ (see \cref{def:dual-point}).
		\end{enumerate}
	\end{theorem}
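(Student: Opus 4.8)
The plan is to deduce \cref{thm:main} from the implication diagrams in \cref{fig:instance-recovery-conditions,fig:result-flowchart}: once we know that each geometric condition implies the dual or residual condition sitting one level above it, subspace-preserving recovery follows from the already-established \cref{thm:T-IDC,thm:IDC,thm:IRC,thm:T-UDC,thm:UDC,thm:URC}. Concretely, I would establish the four arrows G-IDC $\Rightarrow$ IDC, G-IRC $\Rightarrow$ IRC, G-UDC $\Rightarrow$ UDC, and G-USC $\Rightarrow$ G-UDC; the last one is immediate since $\cD(\A_0)\subseteq\cS_0\setminus\{\0\}$ by \cref{def:dual-point}, so a property quantified over all of $\cS_0\setminus\{\0\}$ is in particular one quantified over $\cD(\A_0)$. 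Universal recovery by OMP then comes from URC, which is equivalent to UDC in \cref{fig:result-flowchart}, yielding \cref{thm:G-UDC-OMP}.

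The technical core is a single elementary identity about angles on the sphere. Since every atom in $\cA_0\cup\cA_-$ has unit $\ell_2$ norm, for any $\0\ne\v\in\cS_0$ we have, exactly,
\[
\|\A_-^\transpose\v\|_\infty=\|\v\|_2\cos\theta(\{\pm\v\},\cA_-)\quad\text{and}\quad\|\A_0^\transpose\v\|_\infty=\|\v\|_2\cos\theta(\{\pm\v\},\cA_0),
\]
because $\|\A^\transpose\v\|_\infty=\sup_{\a}|\langle\a,\v\rangle|=\|\v\|_2\sup_{\a}|\langle\v/\|\v\|_2,\a\rangle|$ and the supremum of $|\cos|$ over a set of directions is the cosine of the smallest \emph{unsigned} angle from that set to the antipodal pair $\{+\v,-\v\}$. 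Next, \cref{def:covering-radius} says precisely that every unit vector of $\cS_0$ lies within angle $\gamma_0$ of some point of $\pm\cA_0$; applied to $\v/\|\v\|_2$ this reads $\theta(\{\pm\v\},\cA_0)\le\gamma_0$, hence $\|\A_0^\transpose\v\|_\infty\ge\|\v\|_2\cos\gamma_0$. Finally, every dual point $\v$ in the sense of \cref{def:instance-dual-point} or \cref{def:dual-point} obeys $\|\A_0^\transpose\v\|_\infty\le1$, so $\|\v\|_2\cos\gamma_0\le1$; note that in all four conditions $\gamma_0<\theta(\{\pm\v\},\cA_-)\le\pi/2$, so $\cos\gamma_0>0$ and this bound is meaningful.

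Granting these facts, the four implications are one-line computations. For G-IDC $\Rightarrow$ IDC: take the witness $\v\in\cD(\A_0,\b)$ with $\gamma_0<\theta(\{\pm\v\},\cA_-)$; then $\cos\theta(\{\pm\v\},\cA_-)<\cos\gamma_0$, so $\|\A_-^\transpose\v\|_\infty=\|\v\|_2\cos\theta(\{\pm\v\},\cA_-)\le\cos\theta(\{\pm\v\},\cA_-)/\cos\gamma_0<1$, which is exactly IDC; the same argument with $\cD(\A_0)$ replacing $\cD(\A_0,\b)$ gives G-UDC $\Rightarrow$ UDC. For G-IRC $\Rightarrow$ IRC: for each $\v\in\cR(\A_0,\b)$, which is nonzero by \cref{def:residual-points}, $\|\A_-^\transpose\v\|_\infty=\|\v\|_2\cos\theta(\{\pm\v\},\cA_-)<\|\v\|_2\cos\gamma_0\le\|\A_0^\transpose\v\|_\infty$, which is IRC.

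The only step that needs real care — and hence the expected main obstacle — is the spherical-geometry bookkeeping: verifying the displayed identity $\|\A^\transpose\v\|_\infty=\|\v\|_2\cos\theta(\{\pm\v\},\cA)$ for unit-norm $\cA$ and translating \cref{def:covering-radius} into the angular bound $\theta(\{\pm\v\},\cA_0)\le\gamma_0$, being scrupulous about the antipodal pair $\{+\v,-\v\}$ so that every angle in sight lands in $[0,\pi/2]$ and cosines are monotone. Everything else — tightness of the T-IDC, the equivalence UDC $\Leftrightarrow$ URC, and the recovery guarantees downstream of the dual and residual conditions — is furnished by the theorems already cited and requires no new argument here.
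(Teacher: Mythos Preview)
Your proposal is correct and follows essentially the same approach as the paper. The paper likewise reduces \cref{thm:main} to the four implications G-IDC $\Rightarrow$ IDC, G-IRC $\Rightarrow$ IRC, G-UDC $\Rightarrow$ UDC, G-USC $\Rightarrow$ G-UDC and then invokes \cref{thm:IDC,thm:IRC,thm:UDC,thm:URC,thm:G-UDC-OMP}; the only cosmetic difference is that the paper phrases the key bound as $\|\v\|_2\le R_0=1/r_0$ via \cref{thm:inradius-coveringradius-circumradius} and writes the hypotheses as $r_0>\mu(\v,\cA_-)$, whereas you work directly with $\cos\gamma_0$ and rederive $\|\v\|_2\cos\gamma_0\le 1$ from the covering-radius inequality $\theta(\{\pm\v\},\cA_0)\le\gamma_0$ together with $\|\A_0^\transpose\v\|_\infty\le 1$---these are the same fact in two notations.
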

	
	Conditions G-IDC, G-IRC, G-UDC and G-USC in \cref{thm:main} all require the covering radius to be smaller than the angular distance between $\cA_-$ and a pair of points $\{\pm \v\}$, i.e., they require
	\begin{equation}\label{eq:covering-radius-angular-distance}
	\gamma_0 < \theta(\{\pm\v\}, \cA_-),
	\end{equation}
	for $\v$ that is chosen from a particular subset of $\cS_0$ (that varies for the four conditions).  
	Such conditions conform with our intuition that for subspace-preserving recovery to succeed, points in $\cA_0$ should be well distributed in the subspace $\cS_0$ and points in $\cA_-$ should be well separated from a subset of $\cS_0$.
	The only difference among the four conditions is that \cref{eq:covering-radius-angular-distance} needs to be satisfied for different $\v$'s. 
	
	In \cref{fig:geometry} we illustrate the G-IDC, G-IRC, G-UDC and G-USC using a two dimensional subspace $\cS_0$ in $\Re^3$. 
	The atoms in $\cA_0$ are illustrated as blue dots on the unit sphere of $\cS_0$. 
	The atoms in $\cA_-$ (not drawn) are points that lie on the unit sphere of $\Re^3$ but outside of $\cS_0$.
	The instance recovery conditions G-IDC and G-IRC depend on a specific vector $\b \in \cS_0$, which is illustrated by the symbol $\times$ in \cref{fig:geometry-G-IDC,fig:geometry-G-IRC}.
	\begin{enumerate}[label=(\roman*)]
		\item In \cref{fig:geometry-G-IDC}, the G-IDC holds if and only if there exists a point $\v \in \cD(\A_0, \b)$ such that no points from $\cA_-$ lie in the yellow region composed of a pair of antipodal spherical caps centered at $\frac{\pm\v}{\|\v\|_2}$ with radius $\gamma_0$.
		In this instance, $\cD(\A_0, \b)$ contains a single point illustrated as an orange dot (see also \cref{fig:geometry-dual}).
		\item In \cref{fig:geometry-G-IRC}, the G-IRC holds if and only if no points from $\cA_-$ lie in the yellow region composed of a union of antipodal spherical caps centered at $\frac{\pm\v}{\|\v\|_2}$ for $\v \in\cR(\A_0, \b)$ with radius $\gamma_0$.
		In this instance, $\cR(\A_0, \b)$ contains $d_0$ points illustrated as orange dots (see also \cref{fig:geometry-residual}).
		\item In \cref{fig:geometry-3D-DRC}, the G-UDC holds if and only if no points from $\cA_-$ lie in the yellow region composed of a union of spherical caps centered at $\frac{\pm\v}{\|\v\|_2}$ for $\v \in\cD(\A_0)$ with radius $\gamma_0$.
		The points in $\cD(\A_0)$ are illustrated as red dots (see also \cref{fig:geometry-2D}).
		\item In \cref{fig:geometry-3D-PRC}, the G-USC holds if and only if no points from $\cA_-$ lie in the yellow region enclosed by the equal latitude lines $\pm \gamma_0$ on the unit sphere. 
	\end{enumerate}
	Therefore, roughly speaking, subspace-preserving recovery requires the points $\cA_-$ to be separated from a specific region of the subspace $\cS_0$ that is determined by both $\cA_0$ and $\b$ for instance recovery, and by only $\cA_0$ for universal recovery.
	
	\subsection{Paper outline}	
	The remainder of the paper is organized as follows.
	In \cref{sec:background}
	we introduce inradius, covering radius and circumradius for characterizing the distribution of points in $\cS_0$, as well as coherence for measuring the separation between points from $\cA_-$ and $\cS_0$. 
	These notions are used to formulate the instance and universal recovery conditions, which are derived in \cref{sec:instance_recovery} and \cref{sec:universal_recovery}, respectively. 
	In \cref{sec:sparse-recovery}, we discuss the relationship between our subspace-preserving recovery conditions with a dual certificate, mutual incoherence, exact recovery and null space condition that are well known in the sparse signal recovery literature. 
	Conclusions and future research are given in \cref{sec:conclusions}.

	\section{Background Material}
	\label{sec:background}
	In this section we present the background material on the geometric characterization of the dictionary $\cA$  required for our analysis in \cref{sec:geometric_characterization}, and an overview of the BP and OMP methods in \cref{sec:BPOMP}.  
	
	\subsection{Geometric characterization of the dictionary $\cA$}
	\label{sec:geometric_characterization}
	
	Our geometric conditions for subspace-preserving recovery rely on geometric properties that characterize the distribution of points $\cA_0$ in the subspace $\cS_0$ and the separation between $\cA_-$ and the subspace $\cS_0$. 
	In this section, we introduce the relevant concepts and notations.
	
	The convex hull of the symmetrized points in $\cA_0$ is defined as 
	\begin{equation*} 
		\cK_0 := 
		\conv(\pm \cA_0),
	\end{equation*}
	where $\conv(\cdot)$ denotes the convex hull, and its (relative) polar set is defined as
	\begin{equation*}
		\begin{split}
			\cK_0^o &:= 
			\{\v \in \cS_0: |\langle \v, \a \rangle|\le 1 \ \text{for all} \ \a \in \cK_0\}
			\equiv \{\v \in \cS_0: |\langle \v, \a \rangle|\le 1 \ \text{for all} \ \a \in \pm\cA_0\},
		\end{split}
	\end{equation*}
	where the second equality follows from \cref{thm:polar-set-equivalent}. Both $\cK_0$ and $\cK^o_0$ are symmetric convex bodies\footnote{If a convex set $\cP$  satisfies $\cP = -\cP$, then we say it is symmetric. A compact convex set with a nonempty interior is called a convex body.}, as the polar of a convex body is also a convex body \cite{Brazitikos:14}.

	Next, we introduce three concepts for characterizing the distribution of the data points in $\cA_0$. 
	The first concept is the inradius of a convex body.
	\begin{definition}[inradius]
		\label{def:inradius}
		The (relative) inradius of a convex body $\cP$, denoted by $r(\cP)$, is the radius of the largest Euclidean ball in $\spann(\cP)$ inscribed in $\cP$.
	\end{definition}
	
	In our analysis,  we use the inradius 
	$$
	r_0 := r(\cK_0),
	$$
	which characterizes the distribution of the data points $\cA_0$ in $\cS_0$.  If the elements of $\cA_0$ are well distributed in $\cS_0$, the inradius is near one; otherwise, the inradius is small.
	
	The second concept for quantifying the distribution of points is given by the covering radius.  
	Let $\Sp ^{D-1} := \{\v\in \Re^D: \|\v\|_2 = 1\}$ be the unit sphere of $\Re^D$.
	The (relative) covering radius of a set is then defined as follows. 
	\begin{definition}[covering radius]\label{def:covering-radius}
		The (relative) covering radius of $\cV\subseteq\Re^D$ is 
		\[
		\gamma(\cV):= \sup \{ \theta(\cV, \w): \w\in \spann(\cV) \cap \Sp^{D-1} \}.
		\]
	\end{definition}
	Our analysis uses the covering radius of the set $\pm\cA_0$, namely
	$$
	\gamma_0 := \gamma(\pm \cA_0),
	$$
	which by \cref{def:covering-radius} is 
	computed by finding $\w \in \Sp^{D-1}$ that is furthest away from all points in $\pm \cA_0$ (see \cref{fig:geometry-2D}). 
	It can also be interpreted as the smallest radius such that closed spherical balls of that radius centered at the points of $\pm \cA_0$ cover $\Sp ^{D-1} \cap \cS_0$.
	The covering radius characterizes how well the points in $\pm \cA_0$ are distributed.
	
	\ifdefined\draft
	\else
	
	\begin{figure*}[h]
		\centering
		\def\nPoint{5}
		\def\primalAngle
		{{30.00, 57.60, 86.40, 126.00, 150.00, 210.00}}
		\def\dualRadius
		{{1.030, 1.032, 1.063, 1.022, 1.154, 1.030}}
		\def\dualAngle
		{{43.80, 72.00, 106.20, 138.00, 180.00, 223.80}}
		\subfloat[Inradius and covering radius]
		{
			\begin{tikzpicture}[scale = 2.2]
			\coordinate (0) at (0,0);
			\def\wAngle{0};
			\def\wRefAngle{\primalAngle[0]};
			
			\def\Sx{1.2}         
			\def\Sy{1.2}
			\filldraw[draw=none,fill=gray!20, opacity=0.2] 
			(-\Sx,-\Sy) -- (-\Sx, \Sy) -- (\Sx, \Sy) -- (\Sx, -\Sy) -- cycle;
			\node[black] at (1.1, 1.1, 0) {$\cS_0$};
			\draw[black, fill = none] (0) circle [radius = 1]; 
			\node[left, black] at (0) {O}; 
			\draw [black, fill=black] (0) circle [radius=0.02];
			\foreach \i in {1, 2, ..., \nPoint}
			{
				\def\angle{\primalAngle[\i-1]}
				\draw [pPntColor, fill=pPntColor] (cos \angle, sin \angle) circle [radius=0.02];
				\draw [pPntColor, fill=pPntColor] (-cos \angle, -sin \angle) circle [radius=0.02];
				\draw [dotted, pPntColor] (cos \angle, sin \angle) -- (-cos \angle, -sin \angle);
				\node[above, pPntColor] at (cos \angle, sin \angle) {$\a_\i$}; 
				\node[below, pPntColor] at (-cos \angle, -sin \angle) {$-\a_\i$}; 
			}
			\foreach \i in {1, 2, ..., \nPoint}
			{
				\draw[solid, pPntColor] (cos \primalAngle[\i-1], sin \primalAngle[\i-1]) -- (cos \primalAngle[\i], sin \primalAngle[\i]); 
				\draw[solid, pPntColor] (-cos \primalAngle[\i-1], -sin \primalAngle[\i-1]) -- (-cos \primalAngle[\i], -sin \primalAngle[\i]); 
			}
			\node[right, pPntColor] at (-0.9, -0.1) {$\cK_0$};
			\node[above right, cColor] at (cos \wAngle, sin \wAngle) {$\w^*$};
			\draw [cColor, fill=cColor] (cos \wAngle, sin \wAngle) circle [radius=0.02];
			\draw[dotted, cColor] (0) -- (cos \wAngle, sin \wAngle);
			\draw[thick, solid, cColor] (0) ([shift=(\wAngle:0.2cm)] 0, 0) arc (\wAngle:\wRefAngle:0.2cm);
			\node[right, cColor] at (0.18, 0.07) {$\gamma_0$};
			\draw[dashed, cColor, fill = none] (0) circle [radius = cos \wRefAngle]; 
			\draw [cColor, fill=cColor] (cos \wRefAngle * cos 66, cos \wRefAngle * sin 66) circle [radius=0.02];
			\draw[solid, cColor] (0) -- (cos \wRefAngle * cos 66, cos \wRefAngle * sin 66);
			\draw[thick,cColor,decoration={brace,raise=2pt,amplitude=10pt},decorate]
			(0,0) -- node[left] {$r_0\;\;\;$} (66:cos \wRefAngle);
			\end{tikzpicture}
		}
		~~~
		\subfloat[Circumradius]
		{
			\begin{tikzpicture}[scale = 2.2]
			\coordinate (0) at (0,0);
			\def\wAngle{0};
			\def\wRefAngle{\primalAngle[0]};
			
			\def\Sx{1.2}         
			\def\Sy{1.2}
			\filldraw[draw=none,fill=gray!20, opacity=0.2] 
			(-\Sx,-\Sy) -- (-\Sx, \Sy) -- (\Sx, \Sy) -- (\Sx, -\Sy) -- cycle;
			\node[black] at (1.1, 1.1, 0) {$\cS_0$};
			\draw[black, fill = none] (0) circle [radius = 1]; 
			\node[left, black] at (0) {O}; 
			\draw [black, fill=black] (0) circle [radius=0.02];
			\foreach \i in {1, 2, ..., \nPoint}
			{
				\def\angle{\primalAngle[\i-1]}
				\draw [pPntColor, fill=pPntColor] (cos \angle, sin \angle) circle [radius=0.02];
				\draw [pPntColor, fill=pPntColor] (-cos \angle, -sin \angle) circle [radius=0.02];
				\draw [dotted, pPntColor] (cos \angle, sin \angle) -- (-cos \angle, -sin \angle);
				\node[above, pPntColor] at (cos \angle, sin \angle) {$\a_\i$}; 
				\node[below, pPntColor] at (-cos \angle, -sin \angle) {$-\a_\i$}; 
			}
			\foreach \i in {1, 2, ..., \nPoint}
			{
				\def\angle{\dualAngle[\i-1]}
				\def\radius{\dualRadius[\i-1]}
				\draw [dPntColor, fill=dPntColor] (\radius * cos \angle, \radius * sin \angle) circle [radius=0.03];
				\draw [dPntColor, fill=dPntColor] (-1 * \radius * cos \angle, -1 * \radius * sin \angle) circle [radius=0.03];
			}
			\foreach \i in {1, 2, ..., \nPoint}
			{
				\draw[solid, pPntColor] (cos \primalAngle[\i-1], sin \primalAngle[\i-1]) -- (cos \primalAngle[\i], sin \primalAngle[\i]); 
				\draw[solid, pPntColor] (-cos \primalAngle[\i-1], -sin \primalAngle[\i-1]) -- (-cos \primalAngle[\i], -sin \primalAngle[\i]); 
			}
			\node[right, pPntColor] at (-0.9, 0.1) {$\cK_0$};
			\foreach \i in {1, 2, ..., \nPoint}
			{
				\draw[solid, dPntColor] (\dualRadius[\i-1] * cos \dualAngle[\i-1], \dualRadius[\i-1] * sin \dualAngle[\i-1]) -- (\dualRadius[\i] * cos \dualAngle[\i], \dualRadius[\i] * sin \dualAngle[\i]); 
				\draw[solid, dPntColor] (-1 * \dualRadius[\i-1] * cos \dualAngle[\i-1], -1 * \dualRadius[\i-1] * sin \dualAngle[\i-1]) -- (-1 * \dualRadius[\i] * cos \dualAngle[\i], -1 * \dualRadius[\i] * sin \dualAngle[\i]); 
			}
			\node[left, dPntColor] at (-0.95, -0.3) {$\cK_0^o$};
			\draw[dashed, cColor, fill = none] (0) circle [radius = 1.0 / cos \wRefAngle];
			\draw [cColor, fill=cColor] (1.0 / cos \wRefAngle * cos 66, 1.0 / cos \wRefAngle * sin 66) circle [radius=0.02];
			\draw[solid, cColor] (0) -- (1.0 / cos \wRefAngle * cos 66, 1.0 / cos \wRefAngle * sin 66);
			\draw[thick,cColor,decoration={brace,raise=2pt,amplitude=10pt},decorate]
			(0,0) -- node[left] {$R_0\;\;\;$} (66:1/cos \wRefAngle);
			\end{tikzpicture}
		}
		\caption[Illustration of the geometric characterizations of the dictionary $\cA$]{\label{fig:geometry-2D}Illustration of the geometric characterizations of the dictionary $\cA$. In this example, the data set $\cA_0:=\{\a_j\}_{j=1}^5$ lies on the unit circle of a two-dimensional subspace $\cS_0$. 
			The sets $\cK_0$ and $\cK^o_0$ are illustrated as the {\color{pPntColor}blue} and {\color{dPntColor}red} polygons, respectively.
			(a) The inradius $r_0$ is the radius of the inscribing ball of $\cK_0$ shown as the {\color{cColor}orange} dashed circle. 
			The covering radius $\gamma_0$ is the angle $\theta(\pm\cA_0, \w^*)$ where $\w^*$ is the maximizer of the optimization problem in \cref{def:covering-radius}.	%
			(b) The circumradius $R_0$ is the radius of the smallest ball containing $\cK^o_0$ shown as the {\color{cColor}orange} dashed circle.
		}
	\end{figure*}
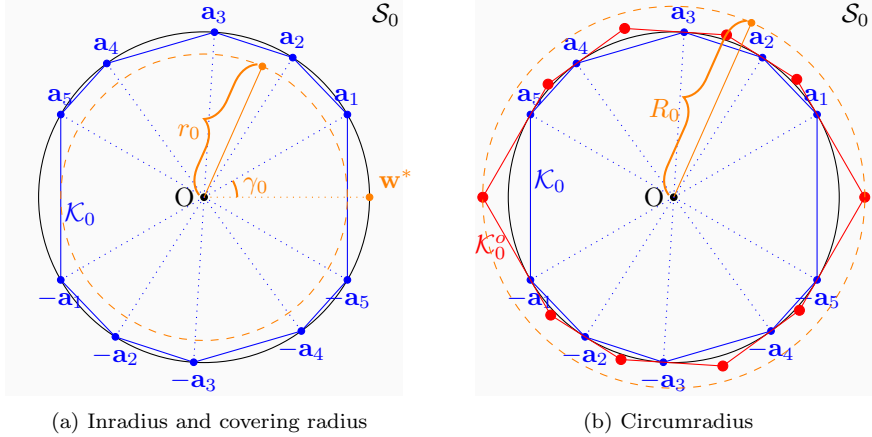
	
	\fi
	
	The third characterization of the distribution of $\cA_0$ is in terms of the circumradius.
	
	\begin{definition}[circumradius]
		\label{def:circumradius}
		The circumradius $R(\cP)$ of a convex body $\cP$ is defined as the radius of the smallest Euclidean ball containing $\cP$.
	\end{definition}
	Our analysis uses the circumradius of the polar set $\cK^o_0$, 
	namely
	$$
	R_0 := R(\cK^o_0).
	$$
	The next result shows the relationship between  
	the circumradius $R_0$, 
	the inradius $r_0$, 
	and the covering radius $\gamma_0$  
	(see \cref{fig:geometry-2D}), and is proved in the appendix.
	\begin{theorem}
		\label{thm:inradius-coveringradius-circumradius}
		It holds that
		$r_0 = \cos(\gamma_0) = 1/R_0$.
	\end{theorem}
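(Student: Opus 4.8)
The plan is to carry out everything inside the $d_0$-dimensional subspace $\cS_0$, where $\cK_0=\conv(\pm\cA_0)$ is a symmetric convex body and all balls, polars, inradii, and circumradii are understood relative to $\cS_0$. A preliminary reduction: since $\cA_0$ spans $\cS_0$ and $\cK_0$ is symmetric and contains $\0$, we have $\spann(\cK_0)=\cS_0$ and $\0$ lies in the relative interior of $\cK_0$; moreover, by symmetry the largest ball inscribed in $\cK_0$ and the smallest ball containing $\cK_0^o$ may both be taken centered at $\0$ (if $B(\p,\rho)\subseteq\cK_0$ then $B(-\p,\rho)\subseteq-\cK_0=\cK_0$, and averaging the two gives $B(\0,\rho)\subseteq\cK_0$; for the circumball, if $\cK_0^o\subseteq B(\p,R)$ then also $\cK_0^o\subseteq B(-\p,R)$, and the parallelogram law forces $\cK_0^o\subseteq B(\0,R)$). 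Hence $r_0=\sup\{\rho:B(\0,\rho)\subseteq\cK_0\}$ and $R_0=\inf\{R:\cK_0^o\subseteq B(\0,R)\}$.

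First I would rewrite the covering radius as an optimization over $\cS_0$. For a unit vector $\w\in\cS_0$, since $\cos^{-1}$ is decreasing and the elements of $\pm\cA_0=\{\pm\a_j\}$ are unit vectors,
\[
\theta(\pm\cA_0,\{\w\})
=\min_j \cos^{-1}|\langle\a_j,\w\rangle|
=\cos^{-1}\bigl(\max_j|\langle\a_j,\w\rangle|\bigr)
=\cos^{-1}\|\A_0^\transpose\w\|_\infty .
\]
Taking the supremum over $\w\in\cS_0\cap\Sp^{D-1}$ and using that $\cos^{-1}$ is a continuous decreasing bijection on $[-1,1]$ (note $\|\A_0^\transpose\w\|_\infty\in(0,1]$ because $\w\in\cS_0=\spann(\cA_0)$), this gives $\cos\gamma_0=\min_{\w\in\cS_0\cap\Sp^{D-1}}\|\A_0^\transpose\w\|_\infty$, the minimum being attained by compactness.

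Next I would identify this quantity with $r_0$ through support functions. The support function of $\cK_0$ is $h_{\cK_0}(\u)=\max_{\x\in\cK_0}\langle\u,\x\rangle=\max_j|\langle\u,\a_j\rangle|=\|\A_0^\transpose\u\|_\infty$, since the extreme points of $\cK_0$ lie among the $\pm\a_j$, while $h_{B(\0,\rho)}(\u)=\rho\|\u\|_2$. Because inclusion of convex bodies is equivalent to pointwise domination of their support functions, $B(\0,\rho)\subseteq\cK_0$ holds iff $\rho\|\u\|_2\le\|\A_0^\transpose\u\|_\infty$ for all $\u\in\cS_0$, i.e.\ iff $\rho\le\|\A_0^\transpose\w\|_\infty$ for every unit $\w\in\cS_0$. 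Therefore $r_0=\min_{\w\in\cS_0\cap\Sp^{D-1}}\|\A_0^\transpose\w\|_\infty=\cos\gamma_0$.

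Finally, for $r_0=1/R_0$ I would invoke polar duality within $\cS_0$: polarity reverses inclusion, $B(\0,\rho)^o=B(\0,1/\rho)$, and the bipolar theorem gives $\cK_0^{oo}=\cK_0$ (both being symmetric convex bodies, as noted in the text). Consequently $B(\0,\rho)\subseteq\cK_0\iff\cK_0^o\subseteq B(\0,1/\rho)$, so $\{\rho:B(\0,\rho)\subseteq\cK_0\}=\{\rho:1/\rho\ge R_0\}=\{\rho:\rho\le 1/R_0\}$, whence $r_0=1/R_0$. Combining the two characterizations yields $r_0=\cos\gamma_0=1/R_0$. I expect the only genuine work to be bookkeeping---keeping every ball, polar, and extremal radius relative to $\cS_0$, justifying that the extremal balls can be centered at the origin, and checking that $\0$ lies in the relative interior of $\cK_0$ so that bipolarity applies---rather than any substantive obstacle.
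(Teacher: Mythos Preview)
Your argument is correct, but it takes a somewhat different route from the paper's. Both proofs begin by reducing the covering radius to the identity $\cos\gamma_0=\min_{\w\in\cS_0\cap\Sp^{D-1}}\|\A_0^\transpose\w\|_\infty$. From there the paper simply cites \cite{Soltanolkotabi:AS12} for $r_0=1/R_0$ and then shows directly, by scaling optimizers back and forth, that the circumradius problem $R_0=\max\{\|\v\|_2:\|\A_0^\transpose\v\|_\infty\le1,\ \v\in\cS_0\}$ is the reciprocal of that minimum. You instead connect the minimum to $r_0$ via the support-function characterization $h_{\cK_0}(\u)=\|\A_0^\transpose\u\|_\infty$ and the inclusion criterion $B(\0,\rho)\subseteq\cK_0\iff\rho\le h_{\cK_0}(\w)$ for all unit $\w$, and then obtain $r_0=1/R_0$ from polar duality $B(\0,\rho)\subseteq\cK_0\iff\cK_0^o\subseteq B(\0,1/\rho)$. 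Your approach is self-contained (it does not defer $r_0=1/R_0$ to an external reference) and more conceptual, leaning on standard convex-analysis machinery; the paper's approach is more bare-hands, manipulating the two optimization problems explicitly without invoking support functions or bipolarity. Your extra care in arguing that the extremal balls may be centered at the origin is a nice touch that the paper leaves implicit.
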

	
	Finally, we measure the separation between $\cA_-$ and an arbitrary subset of $\cS_0$ using their coherence, which is
	the maximum inner product (in absolute value) between points from the two sets. That is, the coherence between $\cA_-$ and $\cW \subseteq \cS_0\setminus{\{0\}}$ is  
	\begin{equation}\label{eq:def-coherence}
	\mu(\cW, \cA_-) := \max_{\w \in \cW}\max_{\a \in \cA_-} \left|\left\langle\frac{\w}{\|\w\|_2}, \a\right\rangle\right|.
	\end{equation}

	\subsection{Overview of BP and OMP}
	\label{sec:BPOMP}
	
	BP and OMP are two of the most popular methods for finding sparse solutions to an under-determined system of linear equations.
	Given the dictionary matrix $\A$ and a signal $\b$, BP and OMP aim to solve
	\begin{equation}
	\min_{\c\in\Re^N} \|\c\|_0 \st \A \c = \b.
	\label{eq:sparse}
	\end{equation}
	It is well-known that finding the solution to \cref{eq:sparse} is an NP-hard problem in general.
	
	The BP method is a convex relaxation approach that replaces the $\ell_0$-regularization in \cref{eq:sparse} by an $\ell_1$-norm, whose solution set we denote by $\BP(\A, \b)$, i.e., 
	\begin{equation}
	\BP(\A, \b) 
	:= \Argmin_{\c\in\Re^{N}} \|\c\|_1 \st \A \c = \b.
	\label{eq:def-BP}
	\end{equation}
	The convex optimization problem~\cref{eq:def-BP} can be efficiently solved using existing solvers. 
	Under the problem setup in \cref{sec:problem_formulation}, there always exists a subspace-preserving representation $\c$ satisfying $\A \c = \b$ so that the set $\BP(\A, \b)$ is non-empty. We use a capitalized ``Argmin" in \cref{eq:def-BP} to indicate that $\BP(\A,\b)$ is a set of solutions. 

	
	The OMP method is a greedy strategy that sequentially chooses dictionary elements  in a locally optimal manner. Specifically, OMP maintains a working set (i.e., a subset of the dictionary) that for the $k$th iteration is denoted by $\cW^{(k)}$ in \cref{alg.omp}. Using the $k$th working set, a representation vector $\c^{(k)}$ with support corresponding to $\cW^{(k)}$ is computed in~\cref{eq:OMP-residual}. 
	The next subset $\cW^{(k+1)}$ is computed by adding to $\cW^{(k)}$ a dictionary element with maximum absolute inner product with the $k$th constraint residual $\v^{(k)}$ computed in Step~\ref{step.vk}. For convenience, let us define
	$$
	\OMP(\A, \b) := \text{the vector $\c^{(k)}$ returned in Step~\ref{step.return} of the $\OMP$ \cref{alg.omp}.} 
	$$
	Note that the constraint residual $\v^{(k)}$ computed in Step~\ref{step.vk} is the projection of $\b$ onto the orthogonal complement of the space spanned by the columns of $\A$ indexed by $\cW^{(k)}$.  It follows that the vector added to $\cW^{(k)}$ in Step~\ref{eq:OMP-maximizer} is linearly independent from the columns of $\A$ indexed by $\cW^{(k)}$. Thus, the solution $\c^{(k)}$ computed in \cref{eq:OMP-residual} is the unique solution to the optimization problem. 
	Finally, we comment that some fixed  strategy should be used when the optimization problem in \cref{eq:OMP-maximizer} does not have a unique minimizer, such as adding the minimizer $\a_j$ with smallest index $j$.   
	
	In a practical implementation of \cref{alg.omp},
	termination would occur when either $\|\v^{(k)}\|_2 \leq \epsilon$ for some  tolerance $\epsilon \in[0,\infty)$ or when the iteration $k$ reaches a maximum allowed value, say $k_{\max}$. For the analysis and for simplicity of presentation in \cref{alg.omp}, we assume $\epsilon = 0$ and $k_{\max} = \infty$, and note that termination will always occur (assuming exact arithmetic) for the problem setup described in \cref{sec:problem_formulation}.
	
	\begin{algorithm}[ht]
		\caption{Orthogonal Matching Pursuit}
		\label{alg.omp}
		\begin{algorithmic}[1]
			\State \textbf{input:} $\A$, $\b$.
			\State 
			Set $\cW^{(0)} \leftarrow \emptyset$.
			\For{$k=0,1,2,\dots$}
			\State Compute the representation vector corresponding to the working set $\cW^{(k)}$ as
			\begin{equation}\label{eq:OMP-residual}
			\c^{(k)} := \argmin_{\c\in\Re^N} \|\b - \A \c\|_2 \st \c_j = 0 ~\text{if}~ \a_j \notin \cW^{(k)}.
			\end{equation}   
			\State Compute the residual vector $\v^{(k)} := \b-\A\c^{(k)}$. \label{step.vk}
			\State \textbf{if $\|\v^{(k)}\| = 0$} \textbf{then} return $\OMP(\A,\b) := \c^{(k)}$ \textbf{end if} \label{step.return}
			\State Define the working set for the next iteration as  
			\begin{equation}\label{eq:OMP-maximizer}
			\cW^{(k+1)} := \cW^{(k)} \cup \argmax_{\a_j \in \cA} |\langle \a_j, \v^{(k)} \rangle|.
			\end{equation}	
			\EndFor 
		\end{algorithmic}
	\end{algorithm}

	\section{Instance Recovery Conditions}
	\label{sec:instance_recovery}
	
	We present conditions under which BP and OMP produce subspace-preserving solutions 
	for a \emph{fixed} $\0 \ne \b \in \cS_0$. 
	A summary of the instance recovery conditions is given in~\cref{fig:instance-recovery-conditions}.
	As shown in \cref{sec:instance_recovery_BP} and \cref{sec:instance_recovery_OMP}, our conditions are characterized by the distribution of points $\cA_0$ in the subspace $\cS_0$ and the separation of $\cA_0$ from a set of \emph{dual points} (for BP) and a set of \emph{residual points} (for OMP).
	In \cref{sec:comparison_BP_OMP} we discuss how these conditions are related.
	
	\subsection{Instance recovery conditions for BP}
	\label{sec:instance_recovery_BP}
	
	\ifdefined\draft
	\else
	
	\begin{figure*}[t]
		\centering
		\def\nPoint{5}
		\def\primalAngle
		{{30.00, 57.60, 86.40, 126.00, 150.00, 210.00}}
		\def\dualRadius
		{{1.030, 1.032, 1.063, 1.022, 1.154, 1.030}}
		\def\dualAngle
		{{43.80, 72.00, 106.20, 138.00, 180.00, 223.80}}
		\def\instancePoint{195}
		\def\instanceDualRadius{1.154}
		\def\instanceDualAngle{180.00}
		\subfloat[Illustration in 2D.\label{fig:geometry-2D-Dual}]
		{
			\begin{tikzpicture}[scale = 1.8]
			\coordinate (0) at (0,0);
			\def\wAngle{0};
			\def\wRefAngle{\primalAngle[0]};
			
			\def\Sx{1.2}         
			\def\Sy{1.2}
			\filldraw[draw=none,fill=gray!20, opacity=0.2] 
			(-\Sx,-\Sy) -- (-\Sx, \Sy) -- (\Sx, \Sy) -- (\Sx, -\Sy) -- cycle;
			\node[black] at (1.1, 1.1, 0) {$\cS_0$};
			\draw[black, fill = none] (0) circle [radius = 1]; 
			\node[left, black] at (0) {O}; 
			\draw [black, fill=black] (0) circle [radius=0.02];
			\foreach \i in {1, 2, ..., \nPoint}
			{
				\def\angle{\primalAngle[\i-1]}
				\draw [pPntColor, fill=pPntColor] (cos \angle, sin \angle) circle [radius=0.02];
				\draw [pPntColor, fill=pPntColor] (-cos \angle, -sin \angle) circle [radius=0.02];
				\draw [dotted, pPntColor] (cos \angle, sin \angle) -- (-cos \angle, -sin \angle);
				\node[above, pPntColor] at (cos \angle, sin \angle) {$\a_\i$}; 
				\node[below, pPntColor] at (-cos \angle, -sin \angle) {$-\a_\i$}; 
			}
			\foreach \i in {1, 2, ..., \nPoint}
			{
				\def\angle{\dualAngle[\i-1]}
				\def\radius{\dualRadius[\i-1]}
				\draw [dPntColor, fill=dPntColor] (\radius * cos \angle, \radius * sin \angle) circle [radius=0.03];
				\draw [dPntColor, fill=dPntColor] (-1 * \radius * cos \angle, -1 * \radius * sin \angle) circle [radius=0.03];
			}
			\foreach \i in {1, 2, ..., \nPoint}
			{
				\draw[solid, pPntColor] (cos \primalAngle[\i-1], sin \primalAngle[\i-1]) -- (cos \primalAngle[\i], sin \primalAngle[\i]); 
				\draw[solid, pPntColor] (-cos \primalAngle[\i-1], -sin \primalAngle[\i-1]) -- (-cos \primalAngle[\i], -sin \primalAngle[\i]); 
			}
			\foreach \i in {1, 2, ..., \nPoint}
			{
				\draw[solid, dPntColor] (\dualRadius[\i-1] * cos \dualAngle[\i-1], \dualRadius[\i-1] * sin \dualAngle[\i-1]) -- (\dualRadius[\i] * cos \dualAngle[\i], \dualRadius[\i] * sin \dualAngle[\i]); 
				\draw[solid, dPntColor] (-1 * \dualRadius[\i-1] * cos \dualAngle[\i-1], -1 * \dualRadius[\i-1] * sin \dualAngle[\i-1]) -- (-1 * \dualRadius[\i] * cos \dualAngle[\i], -1 * \dualRadius[\i] * sin \dualAngle[\i]); 
			}
			\draw (cos \instancePoint, sin \instancePoint) node[cross] {};
			\node[right, black] at (cos \instancePoint, sin \instancePoint) {$\b$}; 
			\draw [iPntColor, fill=iPntColor] (\instanceDualRadius * cos \instanceDualAngle, \instanceDualRadius * sin \instanceDualAngle) circle [radius=0.04];
			\node[above, iPntColor] at (\instanceDualRadius * cos \instanceDualAngle, \instanceDualRadius * sin \instanceDualAngle) {$\cD(\A_0,\b)$}; 
			\end{tikzpicture}
		}
		~~
		\subfloat[Illustration in 3D.\label{fig:geometry-3D-Dual}]
		{
			\begin{tikzpicture}[scale=2.0]
			\begin{axis}[
			hide axis,
			view={\ViewAzimuth}{\ViewElevation},     
			every axis plot/.style={very thin},
			disabledatascaling,                      
			anchor=origin,                           
			viewport={\ViewAzimuth}{\ViewElevation}, 
			xmin=-3,   xmax=3,
			ymin=-3,   ymax=3,
			]
			
			\def\Gamma{30}
			\draw [black, fill=black] plot [mark=*, mark size=0.3] coordinates{(0, 0, 0)};
			\addplot3 [domain = 0:360, samples = 60, samples y=1]
			({cos(x)}, {sin(x)}, {0});
			\foreach \i in {0, 30,...,150}
			\addplot3 [domain = 0:360, samples = 60, samples y=1, draw opacity = 0.1, only foreground]
			({cos \i*sin(x)}, {sin \i*sin(x)}, {cos(x)});
			\foreach \i in {0, 30,...,150}
			\addplot3 [domain = 0:360, samples = 60, samples y=1, draw opacity = 0.1, only background, dashed]
			({cos \i*sin(x)}, {sin \i*sin(x)}, {cos(x)});
			\foreach \j in {30, 60, ..., 150}
			\addplot3 [domain = 0:360, samples = 60, samples y=1, draw opacity = 0.1, only foreground]
			({cos(x)*sin \j}, {sin(x)*sin \j}, {cos \j});
			\foreach \j in {30, 60, ..., 150}
			\addplot3 [domain = 0:360, samples = 60, samples y=1, draw opacity = 0.1, only background, dashed]
			({cos(x)*sin \j}, {sin(x)*sin \j}, {cos \j});
			\def\Sx{1.3}         
			\def\Sy{1.7}
			\filldraw[draw=none,fill=gray!20, opacity=0.2] 
			(-\Sx,-\Sy,0) -- (-\Sx, \Sy, 0) -- (\Sx, \Sy, 0) -- (\Sx,  -\Sy, 0) -- cycle;
			\node[black] at (1.1, 1.1, 0) {\tiny$\mathcal{S}_0$};
			\foreach \j in {1, 2, ..., \nPoint}
			{
				\pgfmathsetmacro\angle{\primalAngle[\j-1]}
				\addplot3[only marks,mark=*,mark size=0.6,pPntColor, fill=pPntColor] coordinates { (cos \angle, sin \angle, 0) };
				\addplot3[only marks,mark=*,mark size=0.6,pPntColor, fill=pPntColor] coordinates { (-cos \angle, -sin \angle, 0) };
			}     
			\foreach \j in {1, 2, ..., \nPoint}
			{
				\pgfmathsetmacro\angle{\dualAngle[\j-1]}
				\pgfmathsetmacro\radius{\dualRadius[\j-1]}
				\addplot3[only marks,mark=*,mark size=0.6,dPntColor, fill=dPntColor] coordinates { (\radius * cos \angle, \radius * sin \angle, 0) };
				\addplot3[only marks,mark=*,mark size=0.6,dPntColor, fill=dPntColor] coordinates { (-\radius * cos \angle, -\radius * sin \angle, 0) };
			}
			\foreach \j in {1, 2, ..., \nPoint}
			{
				\pgfmathsetmacro\anglecurr{\primalAngle[\j-1]}
				\pgfmathsetmacro\anglenext{\primalAngle[\j]}
				\addplot3[solid, pPntColor] coordinates { 
					(cos \anglecurr, sin \anglecurr, 0) 
					(cos \anglenext, sin \anglenext, 0)};
				\addplot3[solid, pPntColor] coordinates { 
					(-cos \anglecurr, -sin \anglecurr, 0) 
					(-cos \anglenext, -sin \anglenext, 0)};
			}
			\foreach \j in {1, 2, ..., \nPoint}
			{
				\pgfmathsetmacro\anglecurr{\dualAngle[\j-1]}
				\pgfmathsetmacro\anglenext{\dualAngle[\j]}
				\pgfmathsetmacro\radiuscurr{\dualRadius[\j-1]}
				\pgfmathsetmacro\radiusnext{\dualRadius[\j]}
				
				\addplot3[solid, dPntColor] coordinates {
					(\radiuscurr * cos \anglecurr, \radiuscurr * sin \anglecurr, 0) 
					(\radiusnext * cos \anglenext, \radiusnext * sin \anglenext, 0)}; 
				\addplot3[solid, dPntColor] coordinates {
					(-1 * \radiuscurr * cos \anglecurr, -1 * \radiuscurr * sin \anglecurr, 0) 
					(-1 * \radiusnext * cos \anglenext, -1 * \radiusnext * sin \anglenext, 0)}; 
			}
			\addplot3[only marks,mark=x,mark size=2, black, fill=black] coordinates { (cos \instancePoint, sin \instancePoint, 0.0) };
			\node[black, right] at (cos \instancePoint, sin \instancePoint, 0.0)  {\tiny$\b$}; 
			\addplot3[only marks,mark=*,mark size=1, iPntColor, fill=iPntColor] coordinates { (\instanceDualRadius * cos \instanceDualAngle, \instanceDualRadius * sin \instanceDualAngle, 0) };
			\addplot3[solid, thick, cColor] coordinates { 
				(\instanceDualRadius * cos \instanceDualAngle, \instanceDualRadius * sin \instanceDualAngle, -1) 
				(\instanceDualRadius * cos \instanceDualAngle, \instanceDualRadius * sin \instanceDualAngle, 1) };
			\node[iPntColor, right] at (\instanceDualRadius * cos \instanceDualAngle, \instanceDualRadius * sin \instanceDualAngle, 0.0)  {\tiny$\cD(\A_0,\b)$}; 
			\node[cColor, right] at (\instanceDualRadius * cos \instanceDualAngle, \instanceDualRadius * sin \instanceDualAngle, 1)  {\tiny$\BP_{\Dual}(\A_0,\b)$}; 
			\end{axis}
			\end{tikzpicture}
		}
		\caption[Illustration of the set of dual points and dual set]{\label{fig:geometry-dual}Illustration of the set of dual points $\cD(\A_0, \b)$ (see \cref{def:instance-dual-point}) and the set of dual solutions $\BP_{\Dual}(\A_0,\b)$. 
			The atoms $\cA_0:=\{\a_j\}_{j=1}^5$ (drawn in {\color{pPntColor}blue}) and $\b$ (marked as $\times$) lie on the unit circle of a two-dimensional subspace $\cS_0$. 
			$\cD(\A_0, \b)$ contains points from $\cK_0^o$ that maximize the inner product with $\b$. 
			$\BP_{\Dual}(\A_0,\b)$ contains points whose orthogonal projection onto $\cS_0$ lies in $\cD(\A_0, \b)$ (see \cref{lem:B-shifted-B}).
		}
	\end{figure*}
	
	\fi

	In this section we consider the instance recovery problem for BP.  In \cref{sec.tight.bp} we give a tight condition for recovery (i.e., a condition that is equivalent to the statement that BP always returns subspace-preserving solutions), in \cref{sec.suff.bp} we give a sufficient condition for ensuring recovery, and in \cref{sec.geom.bp} we give a weaker sufficient condition for ensuring recovery that has a clear geometric interpretation.  
	
	\subsubsection{A tight condition}\label{sec.tight.bp}
	
	The BP approach seeks a subspace-preserving representation by solving the optimization problem in \cref{eq:def-BP}. 
	From the optimality conditions, if $\c^* \in \BP(\A, \b)$ is optimal for \cref{eq:def-BP}, then there exists a $\v^*\in\Re^D$ so that 
	\begin{equation}\label{eq:ell1_optimality}
	\A^\transpose \v^* \in \partial \|\c^*\|_1,
	\end{equation}
	where $\partial \|\c^*\|_1 = \{\w: \|\w\|_\infty \le 1$ and $w_j = \text{sgn}(c_j^*)$ for $c_j \ne 0\}$ is the subdifferential of $\|\c\|_1$ at $\c^*$.
	In particular, $\v^*\in\BP_{\Dual}(\A,\b)$, where $\BP_{\Dual}(\A,\b)$ is the set of solutions to the dual optimization problem for \cref{eq:def-BP} given by
	\begin{equation}
	\BP_{\Dual}(\A,\b)
	:= \Argmax_{\v\in\Re^D} \langle \v, \b \rangle \st \ \|\A ^\transpose \v\|_{\infty} \le 1.
	\label{eq:def-D}
	\end{equation}
	From the optimality condition \cref{eq:ell1_optimality}, one sees that if $|\langle \a_j, \v^*\rangle | < 1$ for all $\a_j \in \cA_-$ then the vector $\c^*$ is  subspace-preserving.
	This suggests that the condition for $\BP(\A, \b)$ to be subspace-preserving depends on the dot product between the data points and the solution to the dual optimization problem.
	This motivates the following result, which gives a tight condition for solutions in $\BP(\A, \b)$ to be subspace-preserving.

	\begin{theorem}[Tight Instance Dual Condition (T-IDC)]\label{thm:T-IDC}
		Let $\0 \neq \b \in \cS_0$. 
		All elements in $\BP(\A, \b)$ are subspace-preserving if and only if  
		\begin{equation}\label{eq:T-IDC}
		\exists \v \in \BP_{\Dual}(\A_0, \b)
		\ \text{satisfying} \ 		
		\| \A_-^\transpose \v \|_\infty < 1.
		\end{equation}
	\end{theorem}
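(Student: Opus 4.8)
The plan is to prove the two implications separately, with linear-programming duality doing all of the work. Throughout, write a representation as $\c=(\c_0,\c_-)$ according to whether its support lies in $\cA_0$ or $\cA_-$, and let $p^*$ and $p_0^*$ denote the optimal values of $\BP(\A,\b)$ and $\BP(\A_0,\b)$, respectively. Both programs are feasible (since $\b\in\cS_0=\range(\A_0)$) with finite optimum, and both their duals are feasible (take $\v=\0$), so LP strong duality applies to each; hence $\BP_{\Dual}(\A_0,\b)$ and $\BP_{\Dual}(\A,\b)$ are nonempty and attain the corresponding optimal values. Moreover, padding an optimal point of $\BP(\A_0,\b)$ with zeros shows $p^*\le p_0^*$.

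($\Leftarrow$) Suppose $\v\in\BP_{\Dual}(\A_0,\b)$ with $\|\A_-^\transpose\v\|_\infty<1$. Since $\|\A^\transpose\v\|_\infty=\max\{\|\A_0^\transpose\v\|_\infty,\|\A_-^\transpose\v\|_\infty\}\le 1$, the vector $\v$ is dual feasible for $\BP(\A,\b)$ with value $\langle\v,\b\rangle=p_0^*$; weak duality then gives $p^*\ge p_0^*$, hence $p^*=p_0^*$ and $\v\in\BP_{\Dual}(\A,\b)$. Now let $\c=(\c_0,\c_-)\in\BP(\A,\b)$. Using $\A_0\c_0+\A_-\c_-=\b$, H\"older's inequality, and $\|\A_0^\transpose\v\|_\infty\le 1$,
\[
p^*=\langle\v,\b\rangle=\langle\A_0^\transpose\v,\c_0\rangle+\langle\A_-^\transpose\v,\c_-\rangle\le\|\c_0\|_1+\|\A_-^\transpose\v\|_\infty\,\|\c_-\|_1 .
\]
If $\c_-\ne\0$, the right-hand side is strictly less than $\|\c_0\|_1+\|\c_-\|_1=\|\c\|_1=p^*$, a contradiction; so $\c_-=\0$ and $\c$ is subspace-preserving.

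($\Rightarrow$) Assume every element of $\BP(\A,\b)$ is subspace-preserving, and fix one, $\c^*=(\c_0^*,\0)$. Then $\A_0\c_0^*=\b$ and $\|\c_0^*\|_1=p^*$, so $p_0^*\le p^*$, giving $p^*=p_0^*$. I then write $\BP(\A,\b)$ in standard form $\min\{\1^\transpose\u+\1^\transpose\w:\A\u-\A\w=\b,\ \u,\w\ge\0\}$, whose dual is $\max\{\langle\v,\b\rangle:\|\A^\transpose\v\|_\infty\le 1\}$, and invoke the Goldman--Tucker theorem (both programs are feasible) to obtain a \emph{strictly complementary} optimal primal--dual triple $(\u,\w,\v)$: namely $u_j+(1-\langle\a_j,\v\rangle)>0$ and $w_j+(1+\langle\a_j,\v\rangle)>0$ for every $j$. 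At any optimal point of this LP one has $u_jw_j=0$ for all $j$ (otherwise subtracting $\min\{u_j,w_j\}$ from both strictly lowers the objective), so $\u-\w$ is feasible for $\BP(\A,\b)$ with $\|\u-\w\|_1=\sum_j(u_j+w_j)=p^*$, hence $\u-\w\in\BP(\A,\b)$; by hypothesis it is subspace-preserving, which forces $u_j=w_j=0$ whenever $\a_j\in\cA_-$. Strict complementarity then gives $1-\langle\a_j,\v\rangle>0$ and $1+\langle\a_j,\v\rangle>0$, i.e.\ $|\langle\a_j,\v\rangle|<1$, for every such $j$, so $\|\A_-^\transpose\v\|_\infty<1$. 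Finally $\langle\v,\b\rangle=p^*=p_0^*$ and $\|\A_0^\transpose\v\|_\infty\le 1$, so $\v\in\BP_{\Dual}(\A_0,\b)$ and \cref{eq:T-IDC} holds.

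The step I expect to be the crux is the ($\Rightarrow$) direction: one must upgrade the \emph{qualitative} hypothesis that every primal optimum omits $\cA_-$ to a \emph{strict} dual inequality on all of $\cA_-$ simultaneously, and the existence of a strictly complementary optimal pair (Goldman--Tucker) is exactly the tool that does this, provided one is careful that the sign-split variables $\u,\w$ have disjoint supports at optimality so that ``$\u-\w$ subspace-preserving'' really kills both $u_j$ and $w_j$. A more hands-on alternative --- differentiating the value function $t\mapsto\min\{\|\c\|_1:\A\c=\b,\ c_j=t\}$ at $t=0$ and reading off its one-sided derivatives through the subdifferential of an LP value function --- also yields $|\langle\a_j,\v\rangle|<1$ for each individual $j$, but it is harder to then extract a single $\v$ valid for all of $\cA_-$ at once, since the relevant dual set varies with $j$.
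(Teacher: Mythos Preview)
Your proof is correct. The ($\Leftarrow$) direction is essentially the paper's argument verbatim. The ($\Rightarrow$) direction, however, takes a genuinely different route. The paper proceeds constructively: starting from a subspace-preserving optimizer $\c^*$ with support $T$, it defines $\v^*$ as the minimizer of $\|\A_-^\transpose\v\|_\infty$ over $\{\v:\A_T^\transpose\v=\sgn(\c_T^*),\ \|\A_R^\transpose\v\|_\infty\le 1\}$, checks that any feasible point of this problem lies in $\BP_{\Dual}(\A_0,\b)$, and then shows the optimal value is strictly less than $1$ by passing through two reformulations of this auxiliary LP, dualizing, and splitting into the cases $\h_-^*=\0$ and $\h_-^*\ne\0$ (the latter handled by a perturbation $\c^*+t\h^*$ that would otherwise be a non-subspace-preserving optimizer). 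Your invocation of Goldman--Tucker strict complementarity for the sign-split LP is much more direct: it produces in one stroke a dual optimal $\v$ for $\BP(\A,\b)$ whose slacks on the $\cA_-$ constraints are all strictly positive, precisely because the corresponding primal variables vanish by the subspace-preserving hypothesis. The gain is brevity and the avoidance of the auxiliary LP chain and case analysis; what the paper's route buys is an explicit description of the certificate as the solution of a concrete minimization problem, which can be useful if one later wants quantitative control over $\|\A_-^\transpose\v^*\|_\infty$. Your care in checking that the Goldman--Tucker primal point has $u_jw_j=0$ (so that $\u-\w\in\BP(\A,\b)$ genuinely forces $u_j=w_j=0$ on $\cA_-$) is exactly the subtlety that needs attention, and you handle it correctly.
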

		The condition in \cref{eq:T-IDC} requires that there exists a point from the set of dual solutions $\BP_{\Dual}(\A_0,\b)$ such that the absolute value of the dot product between the dual solution and all points in $\cA_-$ is less than one. 
		The interpretation of this condition will become clearer once we understand the geometric structure of the set $\BP_{\Dual}(\A_0,\b)$. 
		To that end, observe from the definition of $\BP_{\Dual}(\A_0,\b)$ and the fact that both $\b$ and all columns of $\A_0$ lie in the subspace $\cS_0$, that if $\v\in\BP_{\Dual}(\A_0,\b)$ then $\v + \cS_0^\perp \subseteq \BP_{\Dual}(\A_0, \b)$, where $\cS_0^\perp$ is the space orthogonal to $\cS_0$.
		Consequently, $\BP_{\Dual}(\A_0, \b)$ is composed of a collection of affine subspaces, each of dimension $D - d_0$, that are perpendicular to $\cS_0$. 
		Among all points in $\BP_{\Dual}(\A_0, \b)$, of particular interests are those that lie in the intersection of $\BP_{\Dual}(\A_0, \b)$ and $\cS_0$, which we define as the set of dual points as follows.
	
	
	\begin{definition}[dual points associated with $\A_0$ and $\b$]\label{def:instance-dual-point}
		Given $\cA_0 \subseteq \cS_0$ and $\0 \ne \b \in \cS_0$, we define the set of dual points, denoted by $\cD(\A_0, \b)$, as 
		\begin{equation}\label{eq:instance-dual-point}
		\cD(\A_0,\b)
		:= \Argmax_{\v\in\Re^N} \ \langle \v, \b \rangle \st \ \|\A_0 ^\transpose \v\|_{\infty} \le 1, \ \v \in \cS_0.
		\end{equation}
	\end{definition}
	
	The relationship between $\BP_{\Dual}(\A_0,\b)$ and $\cD(\A_0,\b)$ is now made precise.
	\begin{lemma}\label{lem:B-shifted-B}
		For any $\cA_0 \subseteq \cS_0$ and $\0 \ne \b \in \cS_0$, we have
		\begin{equation}\label{eq:dual-solution-dual-point}
		\BP_{\Dual}(\A_0, \b) = \cD(\A_0, \b) + \cS_0^\perp.
		\end{equation}
	\end{lemma}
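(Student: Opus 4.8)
The plan is to exploit the elementary fact that both the objective and the constraint of the dual problem \cref{eq:def-D} (with dictionary $\A_0$ in place of $\A$) are invariant under adding to $\v$ any vector in $\cS_0^\perp$, because $\b \in \cS_0$ and every column of $\A_0$ lies in $\cS_0$. Concretely, let $\Prj$ denote the orthogonal projector of $\Re^D$ onto $\cS_0$, so that each $\v \in \Re^D$ splits uniquely as $\v = \Prj\v + (\v-\Prj\v)$ with $\Prj\v \in \cS_0$ and $\v - \Prj\v \in \cS_0^\perp$. First I would record that for every $\v$ one has $\langle \v, \b\rangle = \langle \Prj\v, \b\rangle$ and $\A_0^\transpose \v = \A_0^\transpose\Prj\v$; hence $\v$ is feasible for \cref{eq:def-D} (with $\A_0$) if and only if $\Prj\v$ is, and both points give the same objective value. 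A consequence is that the optimal value of \cref{eq:def-D} is unchanged if we additionally impose $\v \in \cS_0$ --- but that restricted problem is precisely the one defining $\cD(\A_0,\b)$ in \cref{eq:instance-dual-point}. Denote this common optimal value by $p^\star$.

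For the inclusion $\BP_{\Dual}(\A_0,\b) \subseteq \cD(\A_0,\b) + \cS_0^\perp$, I would take $\v \in \BP_{\Dual}(\A_0,\b)$, so $\v$ is feasible for \cref{eq:def-D} and $\langle \v, \b\rangle = p^\star$. By the reduction above, $\Prj\v$ is feasible, lies in $\cS_0$, and satisfies $\langle \Prj\v,\b\rangle = \langle \v, \b\rangle = p^\star$; therefore $\Prj\v \in \cD(\A_0,\b)$. Writing $\v = \Prj\v + (\v - \Prj\v)$ with $\v - \Prj\v \in \cS_0^\perp$ gives $\v \in \cD(\A_0,\b) + \cS_0^\perp$. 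For the reverse inclusion, take $\v = \w + \u$ with $\w \in \cD(\A_0,\b) \subseteq \cS_0$ and $\u \in \cS_0^\perp$; then $\Prj\v = \w$, so $\v$ is feasible for \cref{eq:def-D} and attains objective value $\langle \v, \b\rangle = \langle \w, \b\rangle = p^\star$, i.e. $\v$ is an optimal solution and hence lies in $\BP_{\Dual}(\A_0,\b)$.

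There is no real obstacle here: the entire content is the projection/invariance argument of the first paragraph, which is immediate from $\b, \cA_0 \subseteq \cS_0$ together with idempotence and self-adjointness of $\Prj$ (used to move $\Prj$ between the two slots of an inner product). The only point deserving a sentence of care is verifying that restricting the feasible set of \cref{eq:def-D} to $\cS_0$ does not decrease the optimal value --- which is exactly why one projects a putative optimizer rather than arguing abstractly --- and that the maximum is attained (so that $\Argmax$ is nonempty and the manipulations make sense), which follows from compactness of the feasible set of the $\cS_0$-restricted problem, or alternatively from the existence of a subspace-preserving primal solution noted earlier in the paper.
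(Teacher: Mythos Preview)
Your proof is correct and follows essentially the same approach as the paper: both arguments decompose $\v$ via orthogonal projection onto $\cS_0$ and use the invariance of the objective $\langle \v,\b\rangle$ and the constraint $\|\A_0^\transpose \v\|_\infty$ under adding vectors from $\cS_0^\perp$. The only cosmetic difference is that the paper handles the reverse inclusion by contradiction while you argue directly via the common optimal value $p^\star$, and you add a remark on attainment of the maximum that the paper leaves implicit.
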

	\begin{proof}
		First, we show that the left-side of \cref{eq:dual-solution-dual-point} is contained in the right-hand side.  To that end, let $\bar\v\in\BP_{\Dual}(\A_0, \b)$.  Then, define $\bar \p$ as the projection of $\bar\v$ onto $\cS_0$, and then define $\bar\q = \bar\v - \bar \p \in \cS_0^\perp$ so that $\bar \v = \bar \p + \bar \q$.  Thus, it remains to prove that $\bar \p \in  \cD(\A_0, \b)$. Using $\b\in\cS_0$, $\bar\q\in\cS_0^\perp$, and $\bar\v\in\BP_{\Dual}(\A_0, \b)$ it follows that
		$$
		\bar\p^\transpose \b
		= (\bar\v - \bar \q)^\transpose \b
		= \bar\v^\transpose \b
		\geq \v^\transpose \b
		\ \text{for all $\v$ satisfying $\|\A_0^\transpose\v\|_\infty \leq 1$.}
		$$
		Combining this with $\bar\p\in\cS_0$ and
		$\|\A_0^\transpose \bar\p\|_\infty = \|\A_0^\transpose \bar\v \|_\infty \leq 1$ shows that $\bar\p \in  \cD(\A_0, \b)$.

		Next, we show that the right-side of \cref{eq:dual-solution-dual-point} is contained in the left-hand side. 
		Let $\bar\p\in\cD(\A_0, \b)$ and $\bar\q\in\cS_0^\perp$.  By defining $\bar \v = \bar \p + \bar \q$, it remains to prove that $\bar\v \in \BP_{\Dual}(\A_0, \b)$. For a proof by contradiction, suppose it was not true. Since $\|\A_0^\transpose \bar \v\|_\infty = \|\A_0^\transpose \bar \p\|_\infty \leq 1$ (i.e., $\bar\v$ is feasible for $\BP_{\Dual}(\A_0, \b)$), we know there exists a vector $\hat \v$ satisfying $\hat\v^\transpose \b > \bar \v^\transpose \b$ and $\|\A_0^\transpose\hat\v\|_\infty \leq 1$. Defining $\hat \p$ as the projection of $\hat\v$ onto $\cS_0$ and $\hat\q = \hat\v-\hat\p\in\cS_0^\perp$, it follows using $\b\in\cS_0$, $\hat\q\in\cS_0^\perp$, and $\bar\q\in\cS_0^\perp$ that
		$$
		\hat\p^\transpose \b
		= \hat\v^\transpose \b
		>  \bar\v^\transpose \b
		= \bar\p^\transpose \b
		\ \ \text{and} \ \ 
		\|\A_0^\transpose \hat \p\|_\infty
		= \|\A_0^\transpose \hat \v\|_\infty
		\leq 1,
		$$
		which, together with $\hat\p\in\cS_0$, contradicts $\bar\p\in\cD(\A_0, \b)$. This completes the proof.
		%
	\end{proof}
	

	Since the optimization problem in \cref{eq:instance-dual-point} is a linear program with constraint set $\cK_0^o$, the solution set $\cD(\A_0, \b)$ is a face of the convex polyhedron $\cK_0^o$ (see~\cite{Zhang:JCM13}). 
	If $\b$ is drawn from a distribution that is independent of $\cA_0$, then the optimal face is $0$-dimensional (i.e., a vertex of $\cK_0^o$) with probability one, in which case $\cD(\A_0, \b)$ contains a single point.
	Correspondingly, the set of dual solutions $\BP_{\Dual}(\A_0, \b)$ is a $D-d_0$ dimensional affine subspace that passes through the unique point in $\cD(\A_0, \b)$ and is perpendicular to $\cS_0$ (see \cref{fig:geometry-dual}). 
	The optimal face may also be higher-dimensional when $\b$ is perpendicular to such a face, in which case $\cD(\A_0, \b)$ contains infinitely many points and $\BP_{\Dual}(\A_0, \b)$ contains a union of infinitely many affine subspaces. 
		
	We end the discussion of the tight condition for instance recovery by BP with a proof of \cref{thm:T-IDC}.

	\begin{proof}[Proof of \cref{thm:T-IDC}]
		Without loss of generality, let $\A = [\A_0 \  \A_-]$. It will be convenient to define $\c^* = (\c_0^*, \0)$, where $\c_0^* \in \BP(\A_0,\b)$
		which exists since $\b \in \cS_0 = \spann(\A_0)$. 
		
		For the ``if'' direction, let $\v^* \in \BP_{\Dual}(\A_0, \b)$ be any point satisfying \cref{eq:T-IDC} and $\bar{\c} = (\bar{\c}_0, \bar{\c}_-)$ any point in $\BP(\A, \b)$. 
		Since the linear problems 
		$\BP(\A_0,\b)$ and $\BP_{\Dual}(\A_0,\b)$ are dual, strong duality, $\|\A_0^\transpose \v^*\|_\infty \leq 1$, and $\|\A_-^\transpose \v^*\|_\infty < 1$ yield
		\begin{align*}
			\|\bar{\c}\|_1
			&\le\|\c^*\|_1 = \langle \v^*, \b \rangle = \langle \v^*, \A \bar{\c} \rangle
			= \langle \v^*, \A_0 \bar{\c}_0 \rangle + \langle \v^*, \A_- \bar{\c}_- \rangle \\
			&= \langle \A_0^\transpose \v^*, \bar{\c}_0 \rangle + \langle \A_-^\transpose\v^*, \bar{\c}_- \rangle 
			\le \|\A_0^\transpose \v^*\|_\infty  \|\bar{\c}_0\|_1 + \|\A_-^\transpose \v^*\|_\infty  \|\bar{\c}_-\|_1 \le \|\bar{\c}\|_1,
		\end{align*}
		so that all these inequalities are equalities.
		Combining the last of these inequalities (now as an equality) with $\|\A_0^\transpose \v^*\|_\infty \leq 1$ and $\|\A_-^\transpose \v^*\|_\infty < 1$ shows that
		$$
		\|\bar{\c}_-\|_1
		=  \frac{\big( \|\A_0^\transpose\v^* \|_\infty - 1 \big) \|\bar{\c}_0\|_1}{1-\|\A_-^\transpose \v^*\|_\infty}
		\leq 0,
		$$
		implying that $\bar{\c}_- = \0$; thus, $\bar{\c}$ is subspace-preserving, as claimed.
		
		Next, to prove the ``only if'' direction, we suppose that all vectors in $\BP(\A, \b)$ are subspace-preserving and construct a $\v^* \in \BP_{\Dual}(\A_0, \b)$ such that $\|\A_-^\transpose \v^*\|_\infty < 1$. 
		
		The support of $\c^*$ is $T=\{j: c^*_j \ne 0\}$ and define $R = \{j: \a_j \in \cA_0\} \setminus T$ so that $\cA_0 \equiv \{\a_j\}_{j\in T\cup R}$.
		Following the proof in \cite{Zhang:ACM16}, let us define
		\begin{equation}\label{eq:prf-construct-v}
		\v^* = \argmin_{\v\in\Re^D} \|\A_-^\transpose \v\|_\infty \st \A_T^\transpose \v = \sgn(\c_T^*), ~~ \|\A_R^\transpose \v\|_\infty \le 1,
		\end{equation}
		where $\A_T$ and $\A_R$ denote submatrices of $\A$ containing the columns indexed by $T$ and $R$, respectively, and $\c^*_T$ denotes the subvector of $\c^*$ that contains entries indexed by $T$.  
		The optimality conditions satisfied by $\c^*_0 \in \BP(\A_0,\b)$ 
		show that problem \cref{eq:prf-construct-v} is feasible. Moreover, using $\c^*_0\in\BP(\A_0,\b)$, the definitions of $R$ and $T$, and $\c^*_R = 0$ it follows that any feasible point $\v$ for problem \cref{eq:prf-construct-v} satisfies
		\begin{align*}
			\v^\transpose \b
			&= \v^\transpose \A_0 \c^*_0
			= (\A_0^\transpose \v)^\transpose \c^*_0 \\
			&= (\A_T^\transpose \v)^\transpose \c^*_T + (\A_R^\transpose \v)^\transpose \c^*_R
			= (\sgn(\c^*_T))^\transpose \c^*_T
			= \|\c^*_T\|_1
			= \|\c^*_0\|_1.
		\end{align*}
		It follows from this equality, the fact that $\|\c^*_0\|_1$ is the optimal value for the solutions in $\BP(\A_0,\b)$, and strong duality between $\BP(\A_0,\b)$ and $\BP_{\Dual}(\A_0,\b)$ that any feasible point for problem \cref{eq:prf-construct-v} belongs to $\BP_{\Dual}(\A_0,\b)$. Thus, $\v^* \in\BP_{\Dual}(\A_0,\b)$.	 
		It remains to prove that the objective value of \cref{eq:prf-construct-v} at $\v^*$ is strictly less than one. 
		
		Defining $\u = \A^\transpose \v$, the optimization problem \cref{eq:prf-construct-v} is equivalent to the problem
		\begin{equation}\label{eq:prf-construct-u}
		\min_{\u\in\Re^N} \|\u_-\|_\infty \st \u_T = \sgn(\c_T^*), ~~ \|\u_R\|_\infty \le 1, ~~\u \in \range(\A^\transpose).
		\end{equation}	
		Note that $\u \in \range(\A^\transpose)$ if and only if $\Q^\transpose \u = \0$, where $\Q$ denotes a basis of the null space of $\A$. 
		By writing $\u = \w + \sgn(\c^*)$, we can equivalently write \cref{eq:prf-construct-u} as
		\begin{equation}\label{eq:prf-construct-w}
		\min_{\w\in\Re^N} \|\w_-\|_\infty \st \w_T = \0, ~~ \|\w_R\|_\infty \le 1, ~~ \Q^\transpose \w = - \Q^\transpose \sgn(\c^*).
		\end{equation}
		The dual of this problem may be derived to be
		\begin{equation}\label{eq:prf-construct-w-dual}
		\max_{\h\in\Re^N} \ \langle \h, \sgn(\c^*)\rangle - \|\h_R\|_1 
		\st \|\h_-\|_1 \le 1, ~~ \h\in \ns(\A).
		\end{equation}
		Since \cref{eq:prf-construct-w} is a feasible linear program 
		(it is equivalent to \cref{eq:prf-construct-v}, which is feasible),
		the optimal objective values of problems \cref{eq:prf-construct-w} and \cref{eq:prf-construct-w-dual} are finite and equal. Combining this with the fact that the objective values in  \cref{eq:prf-construct-v} and  \cref{eq:prf-construct-w} are equal under the relation $\A^\transpose \v^* = \w^* + \sgn(\c^*)$, we have that any solution $\h^*$ to \cref{eq:prf-construct-w-dual} satisfies 
		\begin{equation} \label{eq:bd-h}
		\|\A_-^\transpose \v^*\|_\infty 
		= \|\w^*_- + \sgn(\c^*_-)\|_\infty
		= \|\w^*_-\|_\infty = \langle\h^*,\sgn(\c^*)\rangle - \|\h_R^*\|_1.
		\end{equation}
		Thus, it remains to prove that the right-hand side of \cref{eq:bd-h} is less than one.
		
		
		First, consider the case when $\h^*_- = \0$.  We claim that the objective value of \cref{eq:prf-construct-w-dual} at $\h^*$ (i.e., the right-hand side of \cref{eq:bd-h}) must be zero. 
		To see this, note that if the objective value of \cref{eq:prf-construct-w-dual} is not zero, then it must be strictly positive since the objective of \cref{eq:prf-construct-w} is nonnegative and strong duality holds. 
		Thus, $2 \h^*$ is a feasible solution to \cref{eq:prf-construct-w-dual} that gives a larger objective value, which contradicts the optimality of $\h^*$.
		
		Second, consider the case when $\h^*_- \ne \0$.
		Choose any $t < 0$ small enough that $\sgn(\c^*_T) = \sgn(\c_T^* + t\h^*_T)$.
		Since all elements in $\BP(\A, \b)$ are subspace-preserving by assumption, 
		it follows that $\c^* \in \BP(\A, \b)$. 
		Moreover, since $\c^* + t\h^*$ is feasible for \cref{eq:def-BP} (recall that $\h^*\in\ns(\A)$ according to \cref{eq:prf-construct-w-dual}) but is not subspace-preserving because $\h^*_- \neq 0$, we know that $\c^* + t\h^*$ is not optimal for \cref{eq:def-BP}.
		Therefore, we have
		\begin{equation}
		\begin{split}
		\|\c^*\|_1 
		&< \|\c^* + t \h^*\|_1 
		= \|\c^*_T + t \h^*_T\|_1 + |t|\|\h^*_R\|_1 + |t|\|\h^*_-\|_1 \\
		&= \langle \c_T^* + t \h^*_T, \sgn(\c_T^*)\rangle + |t|(\|\h^*_{R}\|_1 + \|\h^*_-\|_1)\\
		&= \|\c^*\|_1 + t \langle \h^*, \sgn(\c^*)\rangle - t(\|\h^*_{R}\|_1 + \|\h^*_-\|_1),
		\end{split}
		\end{equation}
		which may be combined with $t < 0$ and the constraint of \cref{eq:prf-construct-w-dual} to conclude that		
		$$
		\langle \h^*, \sgn(\c^*)\rangle - \|\h^*_R\|_1
		< \|\h^*_-\|_1 \le 1,
		$$
		which establishes that the right-hand side of \cref{eq:bd-h} is less than one, as claimed.
	\end{proof}
	
	\subsubsection{A sufficient condition}\label{sec.suff.bp}
	
	\cref{thm:T-IDC} shows that BP produces subspace-preserving solutions if and only if there exists a dual solution in $\BP_{\Dual}(\A_0, \b)$ such that the absolute value of the dot product between the dual solution and all points in $\cA_-$ is less than one. 
	Let us note that condition~\cref{eq:T-IDC} in \cref{thm:T-IDC} is equivalent to
	\begin{equation}\label{eq:T-IDC-reformulate}
	\exists \v \in \BP_{\Dual}(\A_0, \b)
	\ \ \text{satisfying} \ \ 
	\max_{\a \in \cA_-} \left|\left\langle \frac{\v}{\|\v\|_2}, \a \right\rangle\right| 
	< \frac{1}{\|\v\|_2}.
	\end{equation}
		Here, 
		the left-hand side of the inequality in \cref{eq:T-IDC-reformulate} is the maximum absolute value of the dot product between the  normalized vector $\v$ and any point from $\cA_-$, which is small when the points from these two sets are sufficiently separated. 
		%
		On the other hand, a finite upper bound on $\|\v\|_2$ (i.e., a positive lower bound on the right-hand side of \cref{eq:T-IDC-reformulate}) does not exist in general since the set $\BP_{\Dual}(\A_0, \b)$ is unbounded when $d_0 < D$ (see \cref{lem:B-shifted-B}). 
		Nonetheless, the smallest $\ell_2$-norm solutions in $\BP_{\Dual}(\A_0, \b)$ are those that lie in $\cS_0$, which are exactly the dual points in $\cD(\A_0, \b)$.
		This motivates us to introduce the following weaker subspace-preserving recovery condition.
	
	
	\begin{theorem}[Instance Dual Condition (IDC)]\label{thm:IDC}
		Let $\0 \neq \b \in \cS_0$. 
		All elements in $\BP(\A, \b)$ are subspace-preserving if 
		\begin{equation}\label{eq:IDC}
		\exists \v \in \cD(\A_0, \b)
		\ \text{satisfying} \ 		
		\| \A_-^\transpose \v \|_\infty < 1.
		\end{equation}
	\end{theorem}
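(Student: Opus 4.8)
The plan is to obtain \cref{thm:IDC} as an immediate corollary of the tight condition \cref{thm:T-IDC} together with the structure of the dual solution set established in \cref{lem:B-shifted-B}. The key observation is that the set of dual points $\cD(\A_0,\b)$ is contained in $\BP_{\Dual}(\A_0,\b)$, so the hypothesis \cref{eq:IDC}, which produces a \emph{dual point} with the required separation from $\cA_-$, in particular produces a \emph{dual solution} with that separation; that is, \cref{eq:IDC} implies \cref{eq:T-IDC}.

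Concretely, I would first invoke \cref{lem:B-shifted-B} to write $\BP_{\Dual}(\A_0,\b) = \cD(\A_0,\b) + \cS_0^\perp$. Since $\0\in\cS_0^\perp$, this yields the inclusion $\cD(\A_0,\b)\subseteq\BP_{\Dual}(\A_0,\b)$. Now assuming \cref{eq:IDC} holds, pick $\v\in\cD(\A_0,\b)$ with $\|\A_-^\transpose\v\|_\infty<1$. By the inclusion just noted, $\v\in\BP_{\Dual}(\A_0,\b)$, so $\v$ is a witness for condition \cref{eq:T-IDC}. Applying the ``if'' direction of \cref{thm:T-IDC} then shows that every element of $\BP(\A,\b)$ is subspace-preserving, which is the desired conclusion.

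Alternatively, one can give a short self-contained argument that mirrors the ``if'' direction of the proof of \cref{thm:T-IDC}: writing $\c^* = (\c_0^*,\0)$ with $\c_0^*\in\BP(\A_0,\b)$, strong LP duality between $\BP(\A_0,\b)$ and its dual (valid because $\v\in\cD(\A_0,\b)$ is feasible for $\BP_{\Dual}(\A_0,\b)$, attains its optimum by \cref{lem:B-shifted-B}, and $\b\in\cS_0$) gives $\|\c^*\|_1=\langle\v,\b\rangle$; then for any $\bar\c=(\bar\c_0,\bar\c_-)\in\BP(\A,\b)$ the chain
\begin{align*}
\|\bar\c\|_1 \le \|\c^*\|_1 = \langle\v,\A\bar\c\rangle = \langle\A_0^\transpose\v,\bar\c_0\rangle + \langle\A_-^\transpose\v,\bar\c_-\rangle \le \|\bar\c_0\|_1 + \|\A_-^\transpose\v\|_\infty\|\bar\c_-\|_1 \le \|\bar\c\|_1
\end{align*}
forces all inequalities to be equalities, after which $\|\A_-^\transpose\v\|_\infty<1$ forces $\bar\c_-=\0$, i.e. $\bar\c$ is subspace-preserving.

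I do not expect a genuine obstacle here: the content is entirely carried by \cref{thm:T-IDC} and \cref{lem:B-shifted-B}, so the proof is just the two-line reduction above. The only point that deserves a sentence of care is the observation that $\cD(\A_0,\b)$ is precisely the minimum-$\ell_2$-norm slice $\BP_{\Dual}(\A_0,\b)\cap\cS_0$ of the generally unbounded dual solution set; this is why restricting attention to dual points makes \cref{eq:IDC} a genuine (possibly strict) weakening of \cref{eq:T-IDC} rather than an equivalent reformulation, a distinction already anticipated in the discussion preceding the theorem.
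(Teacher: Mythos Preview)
Your proposal is correct and matches the paper's proof essentially line for line: the paper also observes via \cref{lem:B-shifted-B} that any $\v\in\cD(\A_0,\b)$ lies in $\BP_{\Dual}(\A_0,\b)$, so \cref{eq:IDC} furnishes a witness for \cref{eq:T-IDC}, and then invokes \cref{thm:T-IDC}. Your alternative self-contained argument is a faithful replay of the ``if'' direction of the proof of \cref{thm:T-IDC} and is also fine, though the paper does not repeat it here.
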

	\begin{proof}
		Let $\v$ be the vector satisfying condition \cref{eq:IDC}.  It follows from \cref{eq:IDC} and \cref{lem:B-shifted-B} that $\v\in\BP_{\Dual}(\A_0,\b)$ and $\| \A_-^\transpose \v \|_\infty < 1$.    Combining this result with \cref{thm:T-IDC} shows that all elements in $\BP(\A, \b)$ are subspace-preserving.
	\end{proof}
	The condition in \cref{eq:IDC} requires that there exists a dual point $\v$ that has small inner product with all points in $\cA_-$. 
	In the next section, we present a weaker sufficient condition for subspace-preserving recovery by using an upper bound for $\v \in \cD(\A_0, \b)$. 
	
	
	\subsubsection{A geometrically intuitive sufficient condition}\label{sec.geom.bp}

	We derive a geometrically intuitive sufficient condition that is characterized by the distribution of points in $\cA_0$ and the angular distance between the dual set $\cD(\A_0, \b)$ and $\cA_-$. 
	Note that if $\v \in \cD(\A_0,\b) \subseteq\cK_0^o$, it follows from \cref{def:circumradius} and \cref{thm:inradius-coveringradius-circumradius} that
	\begin{equation}\label{eq:prf-bound-dual}
	\|\v\|_2 \le R_0 = 1 / r_0.
	\end{equation}
	Combining this result with \cref{eq:T-IDC-reformulate} gives the following theorem. 
	\begin{theorem}[Geometric Instance Dual Condition (G-IDC)]\label{thm:G-IDC}
		Let $\0 \ne \b \in \cS_0$. All elements in the set $\BP(\A, \b)$ are subspace-preserving if 
		\begin{equation}\label{eq:G-IDC}
		\exists \v \in \cD(\A_0, \b)
		\ \ \text{satisfying} \ \
		r_0 > \mu(\v, \cA_-),
		\end{equation}
		with \cref{eq:G-IDC} holding if and only if
		\begin{equation}\label{def:angular}
		\exists \v \in \cD(\A_0, \b) \ \ \text{satisfying} \ \ 
		\gamma_0 < \theta(\{\pm\v\}, \cA_-).
		\end{equation}
	\end{theorem}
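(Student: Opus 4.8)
The plan is to prove \cref{thm:G-IDC} in two independent pieces. First, I would show that the coherence form of the condition, \cref{eq:G-IDC}, implies the hypothesis \cref{eq:IDC} of \cref{thm:IDC}, so that subspace-preserving recovery by BP follows immediately. Second, I would verify the asserted equivalence between \cref{eq:G-IDC} and \cref{def:angular}: for a \emph{fixed} vector $\v$ this is purely a matter of rewriting inner products as angles using \cref{thm:inradius-coveringradius-circumradius}, with no reference to BP at all.

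For the first piece, fix $\v\in\cD(\A_0,\b)$ satisfying $r_0>\mu(\v,\cA_-)$. Since $\0\neq\b\in\cS_0=\spann(\A_0)$, some $\a_j\in\cA_0$ has $\langle\a_j,\b\rangle\neq0$, so scaling $\b$ slightly produces a feasible point of the program in \cref{eq:instance-dual-point} with positive objective value; hence the optimal value is positive and every $\v\in\cD(\A_0,\b)$ is nonzero, making the normalization $\v/\|\v\|_2$ legitimate. Because $\cD(\A_0,\b)\subseteq\cK_0^o$, the bound \cref{eq:prf-bound-dual} (a consequence of \cref{def:circumradius} and \cref{thm:inradius-coveringradius-circumradius}) gives $\|\v\|_2\le R_0=1/r_0$. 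From the definition \cref{eq:def-coherence} of coherence, $\|\A_-^\transpose\v\|_\infty=\|\v\|_2\,\mu(\v,\cA_-)$, so
$$
\|\A_-^\transpose\v\|_\infty
=\|\v\|_2\,\mu(\v,\cA_-)
\le \frac{\mu(\v,\cA_-)}{r_0}
< 1,
$$
which is exactly \cref{eq:IDC}. Applying \cref{thm:IDC} then shows that all elements of $\BP(\A,\b)$ are subspace-preserving. (This is the chain already anticipated by \cref{eq:T-IDC-reformulate} together with \cref{eq:prf-bound-dual}.)

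For the second piece, fix a nonzero $\v$. Since $\cos^{-1}$ is strictly decreasing on $[-1,1]$, and writing the minimum over the two signs as an absolute value, \cref{def:covering-radius}'s angular-distance notation yields
$$
\theta(\{\pm\v\},\cA_-)
=\inf_{\a\in\cA_-}\cos^{-1}\!\left|\left\langle \frac{\v}{\|\v\|_2},\a\right\rangle\right|
=\cos^{-1}\!\big(\mu(\v,\cA_-)\big),
$$
the last equality using \cref{eq:def-coherence} and the fact that a decreasing function converts the supremum over $\cA_-$ into an infimum. By \cref{thm:inradius-coveringradius-circumradius} we have $r_0=\cos(\gamma_0)$, with $\gamma_0\in[0,\pi/2]$ (any unit vector of $\cS_0$ is within angle $\pi/2$ of $\a_j$ or $-\a_j$) and $\mu(\v,\cA_-)\in[0,1]$; hence strict monotonicity of $\cos^{-1}$ on $[0,1]$ gives $r_0>\mu(\v,\cA_-)\iff\cos^{-1}(r_0)<\cos^{-1}(\mu(\v,\cA_-))\iff\gamma_0<\theta(\{\pm\v\},\cA_-)$. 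Taking the existential quantifier over $\v\in\cD(\A_0,\b)$ on both sides gives the equivalence of \cref{eq:G-IDC} and \cref{def:angular}.

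The argument is essentially bookkeeping, so I do not expect a genuine obstacle; the points needing a little care are (i) confirming $\v\neq\0$ so that all normalizations are well defined, (ii) the identity $\|\A_-^\transpose\v\|_\infty=\|\v\|_2\,\mu(\v,\cA_-)$, and (iii) the degenerate range $\gamma_0=\pi/2$ (equivalently $r_0=0$), in which case \cref{eq:G-IDC} is vacuous and the stated equivalence still holds.
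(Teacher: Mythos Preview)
Your proposal is correct and follows essentially the same approach as the paper: both use the bound $\|\v\|_2\le 1/r_0$ from \cref{eq:prf-bound-dual} to turn the hypothesis $r_0>\mu(\v,\cA_-)$ into $\|\A_-^\transpose\v\|_\infty<1$, and both deduce the equivalence of \cref{eq:G-IDC} and \cref{def:angular} directly from \cref{thm:inradius-coveringradius-circumradius}. The only cosmetic difference is that the paper invokes \cref{lem:B-shifted-B} and \cref{thm:T-IDC} directly, whereas you invoke \cref{thm:IDC} (which is itself a one-line consequence of those two results); your added remarks on $\v\neq\0$ and the degenerate case $\gamma_0=\pi/2$ are careful but not needed for the paper's level of rigor.
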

	
	\begin{proof}
		Let $\v$ 
		satisfy \cref{eq:G-IDC}. It follows from \cref{eq:G-IDC} and \cref{eq:prf-bound-dual} that
		\begin{equation}
		\left|\left\langle \frac{\v}{\|\v\|_2}, \a \right\rangle\right|
		< r_0
		\leq \frac{1}{\|\v\|_2} 
		\ \ \text{for all $\a \in \cA_-$,}
		\end{equation}
		which is equivalently to
		$\| \A_-^\transpose \v\|_\infty < 1$.
		From \cref{eq:dual-solution-dual-point} we have 
		$\v \in \BP_{\Dual}(\A_0, \b)$. Combining this fact with $\| \A_-^\transpose \v\|_\infty < 1$ and 
		\cref{thm:T-IDC} shows that all elements in $\BP(\A,\b)$ are subspace-preserving, as claimed.  The fact that \cref{eq:G-IDC} and \cref{def:angular} are equivalent follows from \cref{thm:inradius-coveringradius-circumradius}. 
	\end{proof}
	
	For condition \cref{eq:G-IDC} to hold, the points in $\cA_0$ must be well distributed within $\cS_0$ as measured by the inradius $r_0$, and there must exist a dual point $\v \in \cD(\A_0, \b)$ that is well separated from $\cA_-$ as measured by the inner products.
	The geometric interpretation of \cref{eq:G-IDC} is clearer when written in 
	its equivalent angular form \cref{def:angular}.
	The inequality \cref{def:angular} states that all points from $\cA_-$ need to be outside of a pair of antipodal spherical caps centered at $\pm \v$ with radius equal to the covering radius $\gamma_0$ {(see~\cref{fig:geometry-G-IDC})}.

	Prior results in \cite{Soltanolkotabi:AS12} are related to \cref{thm:G-IDC}.  
	In \cite{Soltanolkotabi:AS12}, a dual point is defined as the minimum $\ell_2$-norm solution of \cref{eq:instance-dual-point}, and their instance recovery condition for BP is that \cref{eq:G-IDC} is satisfied for this dual point.
	When $\cD(\A_0, \b)$ contains a single point, the recovery condition from \cite{Soltanolkotabi:AS12} is equivalent to the condition in \cref{eq:G-IDC}. 
	Otherwise, the recovery condition from \cite{Soltanolkotabi:AS12} may be stronger than our condition in \cref{eq:G-IDC}. 
	
	\subsection{Instance recovery conditions for OMP}
	\label{sec:instance_recovery_OMP}
	
	In this section we consider the instance recovery problem for OMP.  In \cref{sec.suff.omp} we give a sufficient condition for ensuring subspace-preserving recovery, and in \cref{sec.geom.omp} we give a weaker sufficient condition for ensuring recovery that has a clear geometric interpretation.
	
	\subsubsection{A sufficient condition}\label{sec.suff.omp}
	
	The OMP method computes a sparse representation using \cref{alg.omp}.
	One way to derive conditions that guarantee instance recovery is to consider the performance of $\OMP$ on a related problem, namely with data $\A_0$ and $\b$.  A  feature of this related problem is that the solution returned by $\OMP$ will be subspace-preserving since only $\A_0$ is used. When \cref{alg.omp} is called with input data $\A_0$ and $\b$, we denote the $k$th computed working set, representation vector, and residual vector, respectively, by $\cW^{(k)}_0$, $\c^{(k)}_0$, and $\v^{(k)}_0$. Motivated by (\ref{eq:OMP-maximizer}) in \cref{alg.omp},  conditions guaranteeing subspace preservation when the entire data matrix $\A$ is used can then be written in terms of the residual vector $\v^{(k)}_0$ and its angle with the columns of $\A_0$ and $\A_-$. 
	%
	%
	%
	%
	This observation motivates the following definition.

	\ifdefined\draft
	\else
	
	\begin{figure*}[t]
		\centering
		\def\nPoint{5}
		\def\primalAngle
		{{30.00, 57.60, 86.40, 126.00, 150.00, 210.00}}
		\def\dualRadius
		{{1.030, 1.032, 1.063, 1.022, 1.154, 1.030}}
		\def\dualAngle
		{{43.80, 72.00, 106.20, 138.00, 180.00, 223.80}}
		\def\instancePoint{195}
		\def\instanceResidualAngle{120}
		\def\instanceResidualRadius{0.2588} 
		\subfloat[Iteration $k=0$.\label{fig:geometry-Residual-0}]
		{
			\begin{tikzpicture}[scale = 1.6]
			\coordinate (0) at (0,0);
			\def\wAngle{0};
			\def\wRefAngle{\primalAngle[0]};
			
			\def\Sx{1.2}         
			\def\Sy{1.2}
			\filldraw[draw=none,fill=gray!20, opacity=0.2] 
			(-\Sx,-\Sy) -- (-\Sx, \Sy) -- (\Sx, \Sy) -- (\Sx, -\Sy) -- cycle;
			\node[black] at (1.1, 1.1, 0) {$\cS_0$};
			\draw[black, fill = none] (0) circle [radius = 1]; 
			\node[left, black] at (0) {O}; 
			\draw [black, fill=black] (0) circle [radius=0.02];
			\foreach \i in {1, 2, ..., \nPoint}
			{
				\def\angle{\primalAngle[\i-1]}
				\draw [pPntColor, fill=pPntColor] (cos \angle, sin \angle) circle [radius=0.02];
				\draw [pPntColor, fill=pPntColor] (-cos \angle, -sin \angle) circle [radius=0.02];
				\draw [dotted, pPntColor] (cos \angle, sin \angle) -- (-cos \angle, -sin \angle);
				\node[above, pPntColor] at (cos \angle, sin \angle) {$\a_\i$}; 
				\node[below, pPntColor] at (-cos \angle, -sin \angle) {$-\a_\i$}; 
			}
			\draw (cos \instancePoint, sin \instancePoint) node[cross] {};
			\node[right, black] at (cos \instancePoint, sin \instancePoint) {$\b = {\color{iPntColor}\v_0^{(0)}}$}; 
			\end{tikzpicture}
		}
		~
		\subfloat[Iteration $k=1$.\label{fig:geometry-Residual-1}]
		{
			\begin{tikzpicture}[scale = 1.6]
			\coordinate (0) at (0,0);
			\def\wAngle{0};
			\def\wRefAngle{\primalAngle[0]};
			
			\def\Sx{1.2}         
			\def\Sy{1.2}
			\filldraw[draw=none,fill=gray!20, opacity=0.2] 
			(-\Sx,-\Sy) -- (-\Sx, \Sy) -- (\Sx, \Sy) -- (\Sx, -\Sy) -- cycle;
			\node[black] at (1.1, 1.1, 0) {$\cS_0$};
			\draw[black, fill = none] (0) circle [radius = 1]; 
			\node[left, black] at (0) {O}; 
			\draw [black, fill=black] (0) circle [radius=0.02];
			\foreach \i in {1, 2, ..., \nPoint}
			{
				\def\angle{\primalAngle[\i-1]}
				\draw [pPntColor, fill=pPntColor] (cos \angle, sin \angle) circle [radius=0.02];
				\draw [pPntColor, fill=pPntColor] (-cos \angle, -sin \angle) circle [radius=0.02];
				\draw [dotted, pPntColor] (cos \angle, sin \angle) -- (-cos \angle, -sin \angle);
				\node[above, pPntColor] at (cos \angle, sin \angle) {$\a_\i$}; 
				\node[below, pPntColor] at (-cos \angle, -sin \angle) {$-\a_\i$}; 
			}
			\draw (cos \instancePoint, sin \instancePoint) node[cross] {};
			\node[right, black] at (cos \instancePoint, sin \instancePoint) {$\b$}; 
			\draw [iPntColor, fill=iPntColor] (\instanceResidualRadius * cos \instanceResidualAngle, \instanceResidualRadius * sin \instanceResidualAngle) circle [radius=0.04];
			\node[above, iPntColor] at (\instanceResidualRadius * cos \instanceResidualAngle, \instanceResidualRadius * sin \instanceResidualAngle) {$\v_0^{(1)}$}; 
			\draw [dotted, iPntColor] (\instanceResidualRadius * cos \instanceResidualAngle, \instanceResidualRadius * sin \instanceResidualAngle) -- (cos \instancePoint, sin \instancePoint);  
			\draw [dotted, iPntColor] (\instanceResidualRadius * cos \instanceResidualAngle, \instanceResidualRadius * sin \instanceResidualAngle) -- (0, 0);  
			\end{tikzpicture}
		}
		~
		\subfloat[Iteration $k=2$.\label{fig:geometry-Residual-2}]
		{
			\begin{tikzpicture}[scale = 1.6]
			\coordinate (0) at (0,0);
			\def\wAngle{0};
			\def\wRefAngle{\primalAngle[0]};
			
			\def\Sx{1.2}         
			\def\Sy{1.2}
			\filldraw[draw=none,fill=gray!20, opacity=0.2] 
			(-\Sx,-\Sy) -- (-\Sx, \Sy) -- (\Sx, \Sy) -- (\Sx, -\Sy) -- cycle;
			\node[black] at (1.1, 1.1, 0) {$\cS_0$};
			\draw[black, fill = none] (0) circle [radius = 1]; 
			\node[left, black] at (0) {O}; 
			\draw [black, fill=black] (0) circle [radius=0.02];
			\foreach \i in {1, 2, ..., \nPoint}
			{
				\def\angle{\primalAngle[\i-1]}
				\draw [pPntColor, fill=pPntColor] (cos \angle, sin \angle) circle [radius=0.02];
				\draw [pPntColor, fill=pPntColor] (-cos \angle, -sin \angle) circle [radius=0.02];
				\draw [dotted, pPntColor] (cos \angle, sin \angle) -- (-cos \angle, -sin \angle);
				\node[above, pPntColor] at (cos \angle, sin \angle) {$\a_\i$}; 
				\node[below, pPntColor] at (-cos \angle, -sin \angle) {$-\a_\i$}; 
			}
			\draw (cos \instancePoint, sin \instancePoint) node[cross] {};
			\node[right, black] at (cos \instancePoint, sin \instancePoint) {$\b$}; 
			\draw [iPntColor, fill=iPntColor] (0, 0) circle [radius=0.04];
			\node[right, iPntColor] at (0, 0) {$\v_0^{(2)}$}; 
			\end{tikzpicture}
		}
		\caption[Illustration of the set of residual points]{\label{fig:geometry-residual}Illustration of the set of residual points $\cR(\A_0, \b)$ (see \cref{def:residual-points}). 
			The atoms $\cA_0:=\{\a_j\}_{j=1}^5$ (drawn in {\color{pPntColor}blue}) and $\b$ (marked as $\times$) lie on the unit circle of a two-dimensional subspace $\cS_0$. 
			We show the residual vectors $\v_0^{(k)}$ (drawn in {\color{iPntColor}green}) computed in iterations $k \in \{0,1,2\}$ of $\OMP(\A_0, \b)$.
			(a) $k=0$: $\cW_0^{(0)}=\emptyset$ and $\v_0^{(0)}=\b$. 
			(b) $k=1$: $\cW_0^{(1)}=\{\a_1\}$ and $\v_0^{(1)}$ is the component of $\b$ that is perpendicular to $\a_1$.
			(c) $k=2$: $\cW_0^{(2)}=\{\a_1, \a_4\}$ and $\v_0^{(2)} = \0$. 
			By definition, $\cR(\A_0, \b) = \{\v_0^{(0)}, \v_0^{(1)}\}$.
		}
	\end{figure*}
	
	\fi
	
	\begin{definition}[residual points]\label{def:residual-points}
		Given any $\cA_0 \subseteq \cS_0$ and any $\0 \neq \b \in \cS_0$, we define the set of residual points, denoted by $\cR(\A_0, \b)$, as the set of nonzero residual vectors computed by \cref{alg.omp} with input data $\A_0$ and $\b$.
	\end{definition}
	An illustration of the residual points $\cR(\A_0, \b)$ is given in \cref{fig:geometry-residual}.
	The size of $\cR(\A_0, \b)$ is equal to the number of iterations computed by $\OMP(\A_0, \b)$, which is at most $d_0$. 
	In addition, $\cR(\A_0, \b)$ is a subset of $\cS_0$ since each residual vector is computed by subtracting a linear combination of vectors in $\cA_0$ from the vector $\b$, where both $\b$ and all points in $\cA_0$ lie in the subspace $\cS_0$. 
	
	\cref{def:residual-points} may now be used to provide a sufficient condition for $\OMP$ to produce a subspace-preserving representation, as we now state.
	
	
	\begin{theorem}[Instance Residual Condition (IRC)]\label{thm:IRC}
		Given 
		$\0 \neq \b\in \cS_0$, the solution 
		$\OMP(\A, \b)$ is subspace-preserving if 
		\begin{equation}\label{eq:IRC}
		\forall \v \in \cR(\A_0, \b)
		\ \ \text{it holds that} \ \
		\|\A_-^\transpose \v\|_\infty < \|\A_0^\transpose \v\|_\infty.
		\end{equation}
	\end{theorem}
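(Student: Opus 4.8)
The plan is to run $\OMP(\A,\b)$ and $\OMP(\A_0,\b)$ in lockstep and prove, by induction on the iteration index $k$, that they generate identical working sets, representation vectors, and residuals up to termination. Since $\OMP(\A_0,\b)$ can only ever select columns from $\cA_0$, this immediately forces $\OMP(\A,\b)$ to do the same, and therefore its returned vector has support inside $\cA_0$, i.e.\ is subspace-preserving.

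For the induction, denote by $\cW_0^{(k)}$, $\c_0^{(k)}$, $\v_0^{(k)}$ the quantities produced by \cref{alg.omp} on input $(\A_0,\b)$, and prove the invariant: $\cW^{(k)}=\cW_0^{(k)}\subseteq\cA_0$, $\c^{(k)}=\c_0^{(k)}$, and $\v^{(k)}=\v_0^{(k)}$ for all $k$ until termination. The base case holds since $\cW^{(0)}=\cW_0^{(0)}=\emptyset$, whence $\c^{(0)}=\c_0^{(0)}=\0$ and $\v^{(0)}=\v_0^{(0)}=\b$. For the inductive step, assume the invariant at iteration $k$. If $\v^{(k)}=\0$, then $\OMP(\A,\b)$ returns $\c^{(k)}$, which is supported in $\cW^{(k)}\subseteq\cA_0$ and hence subspace-preserving, so we are finished. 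Otherwise $\v^{(k)}=\v_0^{(k)}\ne\0$ is one of the nonzero residuals of $\OMP(\A_0,\b)$, so $\v^{(k)}\in\cR(\A_0,\b)$; moreover $\0\ne\v^{(k)}\in\cS_0=\spann(\cA_0)$, so $\|\A_0^\transpose\v^{(k)}\|_\infty>0$. Applying the hypothesis~\cref{eq:IRC} at $\v=\v^{(k)}$ gives $\|\A_-^\transpose\v^{(k)}\|_\infty<\|\A_0^\transpose\v^{(k)}\|_\infty$, so $\max_{\a_j\in\cA}|\langle\a_j,\v^{(k)}\rangle|=\|\A_0^\transpose\v^{(k)}\|_\infty$ and, by strictness, every maximizer over $\cA$ lies in $\cA_0$. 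Hence the set of maximizers of $|\langle\cdot,\v^{(k)}\rangle|$ over $\cA$ coincides with the set of maximizers over $\cA_0$; since $\v^{(k)}=\v_0^{(k)}$ and the two runs apply the same fixed tie-breaking rule (with consistent column indexing), the column added in \cref{eq:OMP-maximizer} is the same, so $\cW^{(k+1)}=\cW_0^{(k+1)}\subseteq\cA_0$. Finally, because $\cW^{(k+1)}\subseteq\cA_0$, the least-squares problem \cref{eq:OMP-residual} defining $\c^{(k+1)}$ is identical to the one defining $\c_0^{(k+1)}$ (and has a unique solution, the selected columns being linearly independent), giving $\c^{(k+1)}=\c_0^{(k+1)}$ and thus $\v^{(k+1)}=\b-\A\c^{(k+1)}=\v_0^{(k+1)}$. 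This closes the induction.

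Since $\OMP(\A_0,\b)$ terminates after at most $d_0$ iterations, the invariant shows $\OMP(\A,\b)$ terminates as well and returns a $\c^{(k)}$ supported in $\cW^{(k)}\subseteq\cA_0$, which is the claim. The two places that need care are: confirming that each residual $\v^{(k)}$ indeed lies in $\cS_0$ (so that it is a genuine element of $\cR(\A_0,\b)$ to which \cref{eq:IRC} applies), which follows from $\v^{(k)}=\v_0^{(k)}$; and matching the greedy selection step between the two runs, which is precisely where the \emph{strict} inequality in \cref{eq:IRC} is essential, as it rules out any column of $\cA_-$ being tied for the maximum inner product, so that no ``wrong'' column is ever chosen. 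I expect this selection-matching step to be the main thing to state carefully.
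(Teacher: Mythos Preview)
Your proposal is correct and follows essentially the same approach as the paper: both argue that the working sets of $\OMP(\A,\b)$ and $\OMP(\A_0,\b)$ coincide, which forces the output to be subspace-preserving. The only cosmetic difference is that the paper phrases this as a proof by contradiction (let $k$ be the first iteration where the working sets differ, then derive $\|\A_-^\transpose \v_0^{(k)}\|_\infty \ge \|\A_0^\transpose \v_0^{(k)}\|_\infty$, contradicting \cref{eq:IRC}), whereas you carry out the equivalent direct induction; your version is in fact more careful about the tie-breaking and the fact that $\v^{(k)}\in\cS_0$.
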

	\begin{proof}
		We prove that the sequence of working sets $\{\cW^{(k)}\}$ computed by $\OMP$ with input $\A$ and $\b$, 
		and the sequence of working sets $\{\cW^{(k)}_0\}$ computed by $\OMP$ with input $\A_0$ and $\b$ 
		are the same. 
		Once this is established, and observing that  $\cW^{(k)} = \cW^{(k)}_0 \subseteq \cA_0$, it immediately follows that 
		$\OMP(\A,\b)$ 
		is subspace-preserving. 
		
		To prove the working sets are equal using a proof by contradiction, let $k$ be the first iteration so that $\cW^{(k)} = \cW^{(k)}_0$ and $\cW^{(k+1)} \neq \cW^{(k+1)}_0$. Since $k$ is the smallest iteration when the working sets change, it follows from \eqref{eq:OMP-residual} that $\c^{(k)} = \c^{(k)}_0$, which in turn implies that 
			$\v^{(k)} = \v^{(k)}_0$. 
			Combining this with \cref{eq:OMP-maximizer}, $\cW^{(k)} = \cW^{(k)}_0$, and $\cW^{(k+1)} \neq \cW^{(k+1)}_0$ shows that
		$\| \A^\transpose_- \v^{(k)}_0\|_\infty \ge  \|\A_0^\transpose \v^{(k)}_0\|_\infty$, which contradicts \cref{eq:IRC}.
	\end{proof}
	
	Condition \cref{eq:IRC} is actually both necessary and sufficient for OMP  
	to select a point from $\cA_0$ during \emph{every} iteration (assuming the optimization problem in \cref{eq:OMP-maximizer} has a unique optimizer). 
	On the other hand, 	
	condition \cref{eq:IRC} is only sufficient for subspace-preserving recovery. 
	Cases exist when $\OMP$ with input data $\A$ and $\b$ selects points from $\A_-$ 
	during iterations prior to termination, but the representation computed by the final iteration is subspace-preserving.
	For example, let 
	$\cA=\cA_0 \cup \cA_- \subseteq \Re^3$ with
	\begin{equation}\label{eq:OMP-example}
	\cA_0=\{(\cos 3^\circ, \sin 3^\circ, 0), (\cos 2^\circ, -\sin 2^\circ, 0) \}
	\ \ \text{and} \ \ 
	\cA_- = \{(\cos 1^\circ, 0, \sin 1^\circ)\}.
	\end{equation}
	Note that $\cA_0 \subseteq \cS_0$ where $\cS_0$ is the $x$-$y$ plane. Let $\b = (1, 0, 0) \in \cS_0$. Condition~\cref{eq:IRC} does not hold because the first iteration of OMP selects the point in $\cA_-$. 
	Nonetheless, the solution $\OMP(\A, \b)$ is subspace-preserving because  
	it terminates after three iterations with working set $\cW^{(3)} =  \cA_0\cup \cA_-$ and subspace-preserving solution $\c^{(3)}$.
	
	\subsubsection{A geometrically intuitive sufficient condition}\label{sec.geom.omp}
	
	
	The aim is now to derive a geometrically intuitive sufficient condition characterized by the distribution of points in $\cA_0$ and the angular distance between the residual set $\cR(\A_0, \b)$ and $\cA_-$. 
	
	
	\begin{theorem}[Geometric Instance Residual Condition (G-IRC)]\label{thm:G-IRC}
		Given $\0 \neq \b \in\cS_0$, 
		the solution $\OMP(\A, \b)$ is a subspace-preserving representation if 
		\begin{equation}\label{eq:G-IRC}
		\forall \v \in \cR(\A_0, \b)
		\ \text{it holds that} \
		r_0 > \mu(\v, \cA_-),
		\end{equation}
		with \cref{eq:G-IRC} holding if and only if
		\begin{equation} \label{def:angular.omp}
		\forall \v \in \cR(\A_0, \b)
		\ \text{it holds that} \ 
		\gamma_0 < \theta(\{\pm\v\}, \cA_-).
		\end{equation}
	\end{theorem}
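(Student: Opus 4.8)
The plan is to reduce condition (G-IRC) to the Instance Residual Condition \cref{eq:IRC} of \cref{thm:IRC}, after which the conclusion is immediate. The observation that drives everything is that each residual point $\v \in \cR(\A_0, \b)$ lies in $\cS_0$ (as established right after \cref{def:residual-points}) and is nonzero by definition of $\cR(\A_0,\b)$, and for such $\v$ the quantity $\|\A_0^\transpose \v\|_\infty$ is bounded \emph{below} by $r_0 \|\v\|_2$.

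First I would prove this lower bound for an arbitrary $\0 \ne \v \in \cS_0$. Since $\v \in \cS_0 = \spann(\A_0)$ and $\v \ne \0$, we have $\|\A_0^\transpose\v\|_\infty > 0$, and then by the definition of the polar set $\cK_0^o$ the normalized vector $\v/\|\A_0^\transpose\v\|_\infty$ satisfies $|\langle \v/\|\A_0^\transpose\v\|_\infty, \a\rangle| \le 1$ for every $\a \in \pm\cA_0$, i.e. it belongs to $\cK_0^o$. Because $\cK_0^o$ is a symmetric convex body its smallest enclosing ball is centred at the origin, so $\|\v/\|\A_0^\transpose\v\|_\infty\|_2 \le R_0$, which combined with $R_0 = 1/r_0$ from \cref{thm:inradius-coveringradius-circumradius} gives $\|\A_0^\transpose \v\|_\infty \ge r_0\|\v\|_2$. (This parallels the complementary bound $\|\v\|_2 \le R_0 = 1/r_0$ used in \cref{eq:prf-bound-dual}.) Next, since every column of $\A_-$ has unit $\ell_2$-norm, the definition of coherence in \cref{eq:def-coherence} gives $\|\A_-^\transpose \v\|_\infty = \mu(\v, \cA_-)\,\|\v\|_2$.

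Putting these together with \cref{eq:G-IRC}, for each $\v \in \cR(\A_0, \b)$ I would obtain
\begin{equation*}
\|\A_-^\transpose \v\|_\infty = \mu(\v, \cA_-)\,\|\v\|_2 < r_0\,\|\v\|_2 \le \|\A_0^\transpose \v\|_\infty,
\end{equation*}
which is exactly condition \cref{eq:IRC}; invoking \cref{thm:IRC} then shows that $\OMP(\A, \b)$ is subspace-preserving. For the stated equivalence with the angular form \cref{def:angular.omp}, I would argue exactly as in the proof of \cref{thm:G-IDC}: \cref{thm:inradius-coveringradius-circumradius} gives $r_0 = \cos(\gamma_0)$, while the definitions of coherence and angular distance (together with finiteness of $\cA_-$) give $\mu(\v, \cA_-) = \cos\!\big(\theta(\{\pm\v\}, \cA_-)\big)$; since $\gamma_0$ and $\theta(\{\pm\v\},\cA_-)$ both lie in $[0,\pi/2]$, where cosine is strictly decreasing, the inequality $r_0 > \mu(\v,\cA_-)$ is equivalent to $\gamma_0 < \theta(\{\pm\v\},\cA_-)$.

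There is no substantial obstacle here: the argument is a direct reduction to \cref{thm:IRC} plus the elementary inradius/polarity bound on $\|\A_0^\transpose\v\|_\infty$. The only place needing a little care is making sure the normalization steps are legitimate, i.e. that $\v \ne \0$ and hence $\|\A_0^\transpose\v\|_\infty \ne 0$, both of which follow because $\cR(\A_0,\b)$ is by definition the set of \emph{nonzero} residual vectors produced by \cref{alg.omp} on input $\A_0$ and $\b$, and these residuals lie in $\cS_0 = \spann(\A_0)$.
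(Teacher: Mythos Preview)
Your proposal is correct and follows essentially the same strategy as the paper: reduce to the IRC of \cref{thm:IRC} by establishing, for each residual $\v\in\cR(\A_0,\b)\subset\cS_0\setminus\{\0\}$, the lower bound $\|\A_0^\transpose\v\|_\infty \ge r_0\|\v\|_2$, and then invoke \cref{thm:inradius-coveringradius-circumradius} for the equivalence of \cref{eq:G-IRC} and \cref{def:angular.omp}. The only cosmetic difference is the order of presentation and the device used for the key bound: the paper starts from the angular form \cref{def:angular.omp} and obtains $\theta(\{\pm\v\},\cA_0)\le\gamma_0$ directly from the definition of the covering radius, then takes cosines; you start from the inradius form \cref{eq:G-IRC} and obtain the same inequality via the polar-set/circumradius argument $\v/\|\A_0^\transpose\v\|_\infty\in\cK_0^o\Rightarrow\|\v\|_2/\|\A_0^\transpose\v\|_\infty\le R_0=1/r_0$. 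These are the two dual faces of the identity $r_0=\cos(\gamma_0)=1/R_0$ and yield the same bound, so the arguments are interchangeable rather than genuinely distinct.
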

	\begin{proof}
		We prove that condition \cref{def:angular.omp} holding implies that condition \cref{eq:IRC} holds. To that end, let $\v \in \cR(\A_0, \b)$. 
		From the definition of the covering radius it holds that $\gamma_0= \max \{ \theta(\w, \pm\cA_0): \w\in \cS_0 \cap \Sp^{D-1} \}$, and since $\v/\|\v\|_2 \in \cS_0 \cap \Sp^{D-1}$ it follows that $\gamma_0 \ge \theta(\v/\|\v\|_2, \pm\cA_0) = \theta(\{\pm\v\}, \cA_0)$. 
		Combining this with \cref{def:angular.omp}, we get
		\begin{equation}
		\theta(\{\pm\v\}, \cA_-) > \theta(\{\pm\v\}, \cA_0).
		\end{equation}
		By taking cosine of both sides we get \cref{eq:IRC}, which with \cref{thm:IRC} shows that $\OMP(\A,\b)$ is subspace-preserving, as claimed. The fact that \cref{eq:G-IRC} and \cref{def:angular.omp} are equivalent follows from \cref{thm:inradius-coveringradius-circumradius}.
	\end{proof}

	Condition \cref{eq:G-IRC} requires that the points $\cA_0$ are well enough distributed as measured by the inradius of $\pm\cA_0$, and the residual points to be separated from points in $\cA_-$ as measured by their coherence.
	On the other hand, the equivalent condition \cref{def:angular.omp} states that the points $\cA_-$ do not lie in the spherical caps of radius $\gamma_0$ centered at the residual points in $\cR(\A_0, \b)$ and their antipodal points $-\cR(\A_0, \b)$ (see~\cref{fig:geometry-G-IRC}).
	
	\subsection{Comparison of BP and OMP for instance recovery}
	\label{sec:comparison_BP_OMP}
	
	Although BP and OMP are different strategies for solving the sparse recovery problem, our analysis provides a means to compare their ability for finding subspace-preserving representations.  
	Specifically, both the T-IDC for BP (see~\cref{eq:T-IDC}) and the IRC for OMP (see \cref{eq:IRC}) require the condition $\|\A_-^\transpose \v\|_\infty < \|\A_0^\transpose \v\|_\infty$ to be satisfied for certain $\v$\footnote{\label{ftnt:eqivalency}We use the fact that $\|\A_0^\transpose \v\|_\infty = 1$ for any $\v \in \BP_{\Dual}(\A_0, \b)$. To see why this holds, note that $P_{\cS_0}(\v) \in \cD(\A_0, \b)$ where $P_{\cS_0}(\v)$ is the projection of $\v$ onto $\cS_0$, and that $\cD(\A_0, \b)$ is a face of $\cK_0^o$, i.e., $\cD(\A_0, \b)\subseteq \{\v\in \cS_0: \|\A_0^\transpose \v\|_\infty = 1\}$. Therefore, it has $\|\A_0^\transpose \v\|_\infty = \|\A_0^\transpose P_{\cS_0}(\v)\|_\infty = 1$.}.
	In particular, BP requires the condition to be satisfied for all $\v \in \BP_{\Dual}(\A_0, \b)$, while OMP requires the condition to be satisfied for all $\v \in \cR(\A_0, \b)$. 
	
	The G-IDC for BP (see \cref{eq:G-IDC} and \cref{def:angular}) and the G-IRC for OMP (see~\cref{eq:G-IRC} and \cref{def:angular.omp}) allow for a geometrically interpretable comparison of instance recovery.
	Specifically, the only difference between \cref{def:angular} and \cref{def:angular.omp} is that BP requires the condition to be satisfied by at least one $\v \in \cD(\A_0, \b)$  while OMP requires the condition to be satisfied by all $\v \in \cR(\A_0, \b)$. 
	Geometrically, the condition for BP gives a pair of antipodal spherical caps centered at $\pm\v$ for $\v \in \cD(\A_0, \b)$ (assuming $\cD(\A_0, \b)$ contains a single point), while the condition for OMP gives a collection of $d_0$ pairs of antipodal spherical caps centered at $\pm\v$ for each $\v \in \cR(\A_0, \b)$ (assuming $\cR(\A_0, \b)$ contains $d_0$ points). 
	Then, instance recovery by BP and OMP are guaranteed if no point from $\cA_-$ lies in any of their respective spherical caps. 
	If we assume that points in $\cA_-$ are distributed independently of points in $\cA_0$ and the vector $\b$, then the G-IDC is satisfied with higher probability than the G-IRC.

	\section{Universal Recovery Conditions}
	\label{sec:universal_recovery}
	In many applications one is concerned whether subspace-preserving recovery is possible not only for one point in a subspace but for all points in a subspace, i.e., to achieve universal recovery. 
	In sparse representation-based face recognition, for example, a universal recovery guarantee means that the classifier is able to correctly classify all possible face test images to the classes that they belong to.
	Unlike the instance recovery conditions developed in \cref{sec:instance_recovery} that depend on both $\A$ and $\b$, the recovery conditions in this section ensure subspace-preserving representations for all $\b \in \cS_0\setminus \{\0\}$, and therefore rely only on $\A$. 
	An overview of the recovery conditions developed in this section is found in \cref{fig:result-flowchart}.

	\subsection{Universal recovery conditions for BP}
	\label{sec:universal_recovery_condition_BP}
	
	One way to derive universal recovery conditions is to require the instance recovery conditions derived in \cref{sec:instance_recovery} to hold for all $\b \in \cS_0 \setminus \{\0\}$. 
	In \cref{sec.universal.tight.bp}, we adopt such an approach to derive a tight condition for universal recovery by BP from the T-IDC in \cref{thm:T-IDC}.
	We then give a sufficient condition for recovery in \cref{sec.universal.suff.bp} and two weaker sufficient conditions for recovery in \cref{sec.universal.geom.bp} that have clear geometric interpretations.
	
	\subsubsection{A tight condition}\label{sec.universal.tight.bp}
	
	Recall that the T-IDC in \cref{thm:T-IDC} is necessary and sufficient for instance recovery by BP. 
	We may require that the T-IDC is satisfied for all $\b \in \cS_0\setminus \{\0\}$, which gives a tight condition for universal recovery by BP. 
	\begin{lemma}\label{thm:universal-BP-naive}
		All elements in $\BP(\A, \b)$ are subspace-preserving representations for all $\b \in \cS_0\setminus \{\0\}$ if and only if 	\begin{equation}\label{eq:universal-BP-naive}
		\forall \b \in \cS_0\setminus \{\0\}, \ \exists \v \in \BP_{\Dual}(\A_0, \b)
		\ \ \text{satisfying} \ \
		\|\A_-^\transpose \v\|_\infty < 1.
		\end{equation} 
	\end{lemma}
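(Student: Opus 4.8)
The plan is to obtain \cref{thm:universal-BP-naive} as the \emph{universal} counterpart of the instance-level characterization in \cref{thm:T-IDC}. Since \cref{thm:T-IDC} is an exact (if-and-only-if) characterization of the property ``all elements of $\BP(\A,\b)$ are subspace-preserving'' for a single fixed $\0\ne\b\in\cS_0$, it suffices to quantify both sides of that equivalence over all $\0\ne\b\in\cS_0$. Concretely, the equivalence to be proved is just
$$
\Big(\forall \b:\ P(\b)\Big) \iff \Big(\forall \b:\ Q(\b)\Big),
$$
where $P(\b)$ is ``all elements of $\BP(\A,\b)$ are subspace-preserving'' and $Q(\b)$ is the condition \cref{eq:T-IDC}, and \cref{thm:T-IDC} already gives $P(\b)\iff Q(\b)$ for each fixed $\b$.

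For the ``only if'' direction I would assume that for every $\0\ne\b\in\cS_0$ all elements of $\BP(\A,\b)$ are subspace-preserving, fix an arbitrary such $\b$, and invoke the forward implication of \cref{thm:T-IDC} to produce a $\v\in\BP_{\Dual}(\A_0,\b)$ with $\|\A_-^\transpose\v\|_\infty<1$; since $\b$ was arbitrary, this establishes \cref{eq:universal-BP-naive}. For the ``if'' direction I would assume \cref{eq:universal-BP-naive}, fix an arbitrary $\0\ne\b\in\cS_0$, extract the promised $\v\in\BP_{\Dual}(\A_0,\b)$ satisfying $\|\A_-^\transpose\v\|_\infty<1$ (which is precisely condition \cref{eq:T-IDC} for that $\b$), and apply the reverse implication of \cref{thm:T-IDC} to conclude that all elements of $\BP(\A,\b)$ are subspace-preserving; once more $\b$ was arbitrary.

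There is essentially no technical obstacle: all the substantive work is already contained in \cref{thm:T-IDC}, and the only point requiring care is that the quantifier over $\b$ may be pulled outside the biconditional, which is legitimate precisely because \cref{thm:T-IDC} holds for each individual $\b$ separately (no uniformity in $\b$ is needed). The lemma thus functions as a bookkeeping step that sets the stage for the subsequent reformulation of \cref{eq:universal-BP-naive} into the more tractable tight universal dual condition.
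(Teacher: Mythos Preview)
Your proposal is correct and matches the paper's own treatment: the paper presents this lemma immediately after recalling that the T-IDC is necessary and sufficient for instance recovery, and simply states that requiring the T-IDC for all $\b\in\cS_0\setminus\{\0\}$ yields the universal condition, without giving a separate proof. Your pointwise-quantification argument is exactly the intended (and essentially trivial) justification.
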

	
	Condition~\cref{eq:universal-BP-naive} does not provide insight since it is not directly related to the properties of $\cA$. 
	In the following, we derive a condition equivalent to \cref{eq:universal-BP-naive} that provides insight. 
	We need the following notion of \emph{dual points} associated with $\cA_0$.

	\begin{definition}[dual points associated with $\A_0$]\label{def:dual-point}
		The set of dual points associated with $\A_0$, denoted as $\cD(\A_0)$, is defined as the extreme points of $\cK^o_0$.
	\end{definition}
	\cref{def:dual-point} defines dual points $\cD(\A_0)$ associated with $\A_0$, which needs to be distinguished from  \cref{def:instance-dual-point} that defines dual points $\cD(\A_0, \b)$ associated with $\A_0$ and $\b$. 
	Geometrically, the set of dual points $\cD(\A_0, \b)$ corresponds to a face  of the polyhedron $\cK^o_0$ that is determined by $\b$, while the set of dual points $\cD(\A_0)$ is the set of all vertices (i.e., all faces of dimension $0$) of the polyhedron $\cK^o_0$ (see \cref{fig:geometry-2D}).
	
	The following result provides an upper bound on the size of $\cD(\A_0)$.
	\begin{lemma}\label{thm:dual-finite}
		The set $\cD(\A_0)$ is finite. Specifically, 
		\begin{equation}
		\card (\cD(\A_0)) \le 2^{d_0}  \binom{N_0}{d_0}
		\end{equation}
		with $N_0 := \card(\cA_0)$ denoting the number of data points in $\cA_0$.
	\end{lemma}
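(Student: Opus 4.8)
The plan is to combine the standard characterization of the vertices of a polyhedron with a direct enumeration of the admissible active constraint sets. First I would recall that, by the second description in the definition of the polar set, $\cK_0^o$ coincides with the polyhedron
$$
\cK_0^o = \{\v \in \cS_0 : \langle \v, \a_j\rangle \le 1 \ \text{and} \ \langle \v, -\a_j\rangle \le 1 \ \text{for all} \ j = 1,\dots,N_0\},
$$
i.e.\ it is cut out inside the $d_0$-dimensional space $\cS_0$ by $2N_0$ linear inequality constraints, one for each of the vectors $\pm\a_j$. Since $\cK_0^o$ has already been shown to be a (symmetric) convex body and contains a ball around $\0$, we have $\spann(\cK_0^o) = \cS_0$, so $\cK_0^o$ is a full-dimensional polytope in a space of dimension $d_0$; in particular it has finitely many extreme points, which gives the finiteness claim.

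Next I would invoke the classical fact from polyhedral theory: if $\v$ is an extreme point of a polyhedron that is full-dimensional in a $d_0$-dimensional affine space and is defined by a finite family of inequality constraints, then $\v$ is the \emph{unique} point at which some subfamily of $d_0$ of those constraints hold with equality and whose normal vectors are linearly independent. Here every constraint normal has the form $\pm\a_j$, and a collection of $d_0$ such normals that is linearly independent cannot contain both $+\a_j$ and $-\a_j$ for any single $j$; hence it is specified by a choice of $d_0$ \emph{distinct} indices $j_1 < \dots < j_{d_0}$ in $\{1,\dots,N_0\}$ together with a choice of sign $\varepsilon_\ell \in \{+1,-1\}$ for each selected index.

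Then I would simply count: there are $\binom{N_0}{d_0}$ ways to pick the $d_0$ distinct indices and $2^{d_0}$ ways to pick the signs, giving at most $2^{d_0}\binom{N_0}{d_0}$ such active sets, each of which determines at most one point of $\cK_0^o$. Since every vertex arises from at least one admissible active set, we conclude $\card(\cD(\A_0)) \le 2^{d_0}\binom{N_0}{d_0}$. The only real care needed — and the main (mild) obstacle — is to work consistently inside $\cS_0$ rather than $\Re^D$, so that vertices are characterized by $d_0$ (not $D$) linearly independent tight constraints, and to observe that no admissible active set may use a $\pm$ pair; beyond this bookkeeping the argument is a clean over-count, and it does not require the $\a_j$ in a selected index set to actually be linearly independent (selections that are infeasible or dependent simply contribute nothing, which only helps the inequality).
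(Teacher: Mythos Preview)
Your proposal is correct and follows essentially the same approach as the paper: both arguments identify $\cK_0^o$ as a polyhedron in the $d_0$-dimensional space $\cS_0$ cut out by the $2N_0$ constraints $\langle \pm\a_j,\v\rangle\le 1$, characterize each extreme point as the unique solution determined by $d_0$ linearly independent active constraints (the paper says ``basic feasible solution'', you say ``vertex''), observe that an admissible active set cannot contain both $+\a_j$ and $-\a_j$, and over-count by $2^{d_0}\binom{N_0}{d_0}$. Your write-up is slightly more explicit about working inside $\cS_0$ rather than $\Re^D$, but otherwise the two proofs are the same.
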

	\begin{proof}
		Consider a linear program with variable $\v$, constraint $\v \in \cK^o_0$, and an arbitrary linear objective function. Since the set of dual points $\cD(\A_0)$ is by definition the set of extreme points of $\cK^o_0$, they are the same as the basic feasible solutions of the linear program \cite{Nocedal:06}. Since each basic feasible solution is determined by $d_0$ linearly independent constraints from among the $2 N_0$ constraints of $\|\A_0^{\transpose} \v\|_\infty \le 1$, there are at most $2^{d_0} \binom{N_0}{d_0}$ ways to choose such a set of constraints (we use that at most one of the two constraints $-1 \leq (\A_0^{\transpose} \v)_i \le 1$ can be chosen for each $i\in\{1,2,\dots,N_0\}$).
	\end{proof}
	The bound in \cref{thm:dual-finite} is not tight in general since not every set of constraints in the $2^{d_0} \binom{N_0}{d_0}$ combinations produces a basic feasible solution. For example, in \cref{fig:geometry-2D} the combination of constraints $\langle \a_1, \v \rangle \le 1$ and $\langle \a_2, \v \rangle \le 1$ produces a basic feasible solution, while the combination of constraints $\langle \a_1, \v \rangle \le 1$ and $\langle \a_3, \v \rangle \le 1$ does not. 
	Nonetheless, this bound is sufficient for the purpose of this paper\footnote{There are tighter but more sophisticated bounds in the literature, see e.g., \cite{Fukuda:04,Novik:19}.}.
	
	We now state a result with a tight condition for universal recovery by BP. 
	
	\begin{theorem}[Tight Universal Dual Condition (T-UDC)]\label{thm:T-UDC}
		All elements in \\$\BP(\A, \b)$ are subspace-preserving for all $\b \in \cS_0\setminus \{\0\}$ if and only if 		
		\begin{equation}\label{eq:T-UDC}
		\forall \w \in \cD(\A_0), \ \exists \v \in (\w + \cS_0^\perp)
		\ \ \text{satisfying} \ \
		\|\A_-^\transpose \v\|_\infty < 1.
		\end{equation} 
	\end{theorem}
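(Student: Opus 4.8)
The idea is to reduce the quantification over all $\0\ne\b\in\cS_0$ appearing in \cref{thm:universal-BP-naive} to a quantification over the finitely many vertices $\cD(\A_0)$ of the polytope $\cK_0^o$. The bridge is \cref{lem:B-shifted-B}, which gives $\BP_{\Dual}(\A_0,\b)=\cD(\A_0,\b)+\cS_0^\perp$, together with the fact (already recorded in the text) that $\cD(\A_0,\b)$ is a nonempty face of $\cK_0^o$; since a vertex of a face of a polytope is a vertex of the polytope, every such face contains a point of $\cD(\A_0)$. A dual certificate attached to a vertex of a face therefore also certifies the whole face, and this is exactly what collapses a ``for every $\b$'' statement to a ``for every $\w\in\cD(\A_0)$'' statement.

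\textbf{The ``if'' direction.} Assume \cref{eq:T-UDC}, and fix an arbitrary $\0\ne\b\in\cS_0$. Then $\cD(\A_0,\b)$ is a nonempty face of the polytope $\cK_0^o$, hence itself a polytope, hence has a vertex $\w$, and this $\w$ is a vertex of $\cK_0^o$, i.e.\ $\w\in\cD(\A_0)$. By \cref{lem:B-shifted-B}, $\w\in\cD(\A_0,\b)$ implies $\w+\cS_0^\perp\subseteq\BP_{\Dual}(\A_0,\b)$. Applying \cref{eq:T-UDC} at this $\w$ produces $\v\in\w+\cS_0^\perp\subseteq\BP_{\Dual}(\A_0,\b)$ with $\|\A_-^\transpose\v\|_\infty<1$, which is exactly \cref{eq:universal-BP-naive}; invoking \cref{thm:universal-BP-naive} then gives subspace-preserving recovery for every $\b$.

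\textbf{The ``only if'' direction.} Assume universal recovery holds, and let $\w\in\cD(\A_0)$ be an arbitrary vertex of $\cK_0^o$. Because $\cK_0^o$ is cut out inside $\cS_0$ by the finitely many inequalities $|\langle\v,\a\rangle|\le1$, $\a\in\cA_0$, the vertex $\w$ is the unique point of $\cK_0^o$ satisfying some set of $d_0$ linearly independent active constraints; taking $\b$ to be a positive combination of the corresponding signed atoms (each of the form $\pm\a$, hence in $\cS_0$) gives $\0\ne\b\in\cS_0$ with $\argmax_{\v\in\cK_0^o}\langle\v,\b\rangle=\{\w\}$, i.e.\ $\cD(\A_0,\b)=\{\w\}$. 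Then \cref{lem:B-shifted-B} yields $\BP_{\Dual}(\A_0,\b)=\w+\cS_0^\perp$, and by \cref{thm:universal-BP-naive} (using the assumed universal recovery) there is $\v\in\BP_{\Dual}(\A_0,\b)=\w+\cS_0^\perp$ with $\|\A_-^\transpose\v\|_\infty<1$. Since $\w\in\cD(\A_0)$ was arbitrary, this is exactly \cref{eq:T-UDC}.

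\textbf{Main obstacle.} The delicate point is the ``only if'' direction: one must show that every vertex of $\cK_0^o$ is \emph{exposed} by an admissible objective, i.e.\ one of the form $\langle\cdot,\b\rangle$ with $\0\ne\b\in\cS_0$, so that $\cD(\A_0,\b)$ can be forced to equal that single vertex. This relies on $\cK_0^o$ being a genuine polytope — finitely many facets, because $\cA_0$ is finite, and full-dimensional within $\cS_0$, because $\cA_0$ spans $\cS_0$ — so that extreme points coincide with exposed points; for a general convex body one would only obtain extreme points as limits of exposed points, which would be insufficient here. Everything else is routine bookkeeping with \cref{lem:B-shifted-B} and \cref{thm:universal-BP-naive}.
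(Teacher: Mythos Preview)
Your proof is correct and, in the ``only if'' direction, is actually more careful than the paper's. Both arguments share the same skeleton: use \cref{lem:B-shifted-B} to rewrite \cref{thm:universal-BP-naive} as a statement about $\cD(\A_0,\b)$, then collapse ``for all $\b$'' to ``for all $\w\in\cD(\A_0)$'' via the face/vertex structure of $\cK_0^o$. Your ``if'' direction is identical to the paper's.

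The difference is in the ``only if'' direction. The paper simply sets $\b=\w$ and asserts that $\cD(\A_0,\b)=\{\w\}$ ``follows from the fact that $\b$ is an extreme point of $\cK_0^o$.'' This step is not justified, and in fact the underlying claim --- that an extreme point $\w$ of $\cK_0^o$ satisfies $\Argmax_{\v\in\cK_0^o}\langle \w,\v\rangle=\{\w\}$ --- can fail. For instance, take $\cS_0=\Re^3$, $\cA_0=\{\e_1,\e_2,\e_3,(1,1,1)/\sqrt{3}\}$, so that $\cK_0^o=\{\v:|v_i|\le 1,\ |v_1+v_2+v_3|\le\sqrt{3}\}$. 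The point $\w=(1,1,\sqrt{3}-2)$ is a vertex of $\cK_0^o$, yet $\langle \w,(1,1,-1)\rangle=4-\sqrt{3}>9-4\sqrt{3}=\|\w\|_2^2$, so $\w$ is not even in $\cD(\A_0,\w)$.

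Your construction --- take $\b$ as a strictly positive combination of the $d_0$ signed atoms whose constraints are active and independent at $\w$ --- is exactly the standard argument that every vertex of a polytope is an exposed point, and it cleanly forces $\cD(\A_0,\b)=\{\w\}$. Your flagging of this as the ``main obstacle'' is well placed: it is the one step where the polytope structure (finiteness of $\cA_0$ and full-dimensionality in $\cS_0$) is genuinely needed.
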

	\begin{proof}
		In view of \cref{thm:universal-BP-naive}, we only need to show that \cref{eq:universal-BP-naive} and \cref{eq:T-UDC} are equivalent. 
		Note from \cref{eq:dual-solution-dual-point} that \cref{eq:universal-BP-naive} is equivalent to 
		\begin{equation}\label{eq:universal-BP-naive-equivalent}
		\forall \b \in \cS_0\setminus \{\0\}, \ \exists \w \in \cD(\A_0, \b)	
		\ \text{and} \ \v \in (\w + \cS_0^\perp)		
		\ \ \text{satisfying} \ \ 
		\|\A_-^\transpose \v\|_\infty < 1.
		\end{equation}
		Therefore, we only need to show that \cref{eq:T-UDC} and \cref{eq:universal-BP-naive-equivalent} are equivalent.
		
		To show that \cref{eq:universal-BP-naive-equivalent} implies \cref{eq:T-UDC}, take any $\w \in \cD(\A_0)$. 
		We will find a $\v \in (\w + \cS_0^\perp)$ such that $\|\A_-^\transpose \v\|_\infty < 1$. 
		For that purpose, let us define $\b = \w \in \cS_0\setminus \{\0\}$. 
		We have $\cD(\A_0, \b) = \Argmax_{\v \in \cK_0^o}\langle \b, \v\rangle = \{\b\}$, where the first equality follows from \cref{def:instance-dual-point} and the second equality follows from the fact that $\b = \w\in\cD(\A_0)$, i.e., $\b$ is an extreme point of $\cK_0^o$. 
		Therefore, $\w$ is the only element in $\cD(\A_0, \b)$. By \cref{eq:universal-BP-naive-equivalent}, there exists a $\v \in (\w + \cS_0^\perp)$ such that $\|\A_-^\transpose \v\|_\infty < 1$. This shows that \cref{eq:T-UDC} holds.
		
		We now prove that \cref{eq:T-UDC} implies \cref{eq:universal-BP-naive-equivalent}. Take any $\b \in \cS_0\setminus \{\0\}$. Note that $\cD(\A_0, \b)$ is a face of $\cK_0^o$, and therefore it must contain at least a vertex of $\cK_0^o$. That is, there exists a $\w \in \cD(\A_0)$ such that $\w \in \cD(\A_0, \b)$. From \cref{eq:T-UDC}, there exists a $\v \in (\w + \cS_0^\perp)$ such that $\|\A_-^\transpose \v\|_\infty < 1$. This shows that \cref{eq:universal-BP-naive-equivalent} holds. 
	\end{proof}	
	
	\cref{thm:T-UDC} states that BP produces subspace-preserving solutions for all $\b \in \cS_0\setminus \{\0\}$ as long as for each dual point $\w \in \cD(\A_0)$ there exists a point on the affine subspace $\w + \cS_0^\perp$ that has inner product (in absolute value) less than one for all points in $\cA_-$.
	In order to establish connections between \cref{thm:T-UDC} and existing results, we present two equivalent conditions to the T-UDC in \cref{eq:T-UDC} that are characterized by the objective value of the BP optimization problem and the null space of $\A$. 
		\begin{theorem}[Equivalent forms of T-UDC]\label{thm:T-UDC-equivalent}
			The T-UDC holds if and only if either of the following two conditions holds:
			\begin{enumerate}[label=(\roman*)]
				\item $\forall \b \in \cS_0 \setminus \{\0\}$, it holds that 
				\begin{equation}\label{eq:equivalent-TUDC-obj}
				\min_{\c_0: \A_0 \c_0 = \b}\|\c_0\|_1 < \min_{\c_-: \A_- \c_- = \b}\|\c_-\|_1.
				\end{equation}
				\item $\forall \v = (\v_0, \v_-)$ in the null space of $\A$ satisfying $\v_- \ne \0$, it holds that
				\begin{equation}\label{eq:equivalent-TUDC-null}
				\min_{\c_0: \A_0 \c_0 = \A_0 \v_0}\|\c_0\|_1 < \|\v_-\|_1.
				\end{equation}
			\end{enumerate}
		\end{theorem}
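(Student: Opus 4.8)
The plan is to prove that the T-UDC, condition (i), and condition (ii) are pairwise equivalent, by showing T-UDC $\iff$ (i) and (i) $\iff$ (ii). By \cref{thm:T-UDC} the T-UDC is equivalent to the statement that for every $\0\ne\b\in\cS_0$ every element of $\BP(\A,\b)$ is subspace-preserving, and it is this reformulation that I will use. It is convenient to write, for $\b\in\cS_0$, $\phi_0(\b):=\min\{\|\c_0\|_1:\A_0\c_0=\b\}$, for $\b\in\Re^D$, $\phi_-(\b):=\min\{\|\c_-\|_1:\A_-\c_-=\b\}$ (interpreted as $+\infty$ when $\b\notin\range(\A_-)$), and $\phi(\b):=\min\{\|\c\|_1:\A\c=\b\}$; each minimum is attained whenever the feasible set is nonempty (finite-dimensional linear programs), and $\phi_0$ is the gauge of the symmetric convex body $\cK_0$, hence a norm on $\cS_0$, and in particular subadditive and even. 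With this notation, (i) says $\phi_0(\b)<\phi_-(\b)$ for all $\0\ne\b\in\cS_0$, and — since $(\v_0,\v_-)\in\ns(\A)$ forces $\A_0\v_0=-\A_-\v_-$ — (ii) says $\phi_0(\A_0\v_0)<\|\v_-\|_1$ for all $(\v_0,\v_-)\in\ns(\A)$ with $\v_-\ne\0$.

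For T-UDC $\Rightarrow$ (i), fix $\0\ne\b\in\cS_0$. If $\b\notin\range(\A_-)$ then $\phi_-(\b)=+\infty$ and there is nothing to show; otherwise pick $\c_-$ attaining $\phi_-(\b)$, so that $(\0,\c_-)$ is feasible for $\BP(\A,\b)$ but not subspace-preserving (because $\c_-\ne\0$, using $\b\ne\0$) and hence not optimal, giving $\phi(\b)<\phi_-(\b)$; since moreover any optimal representation of $\b$ is subspace-preserving by hypothesis, $\phi(\b)=\phi_0(\b)$, so $\phi_0(\b)<\phi_-(\b)$. For (i) $\Rightarrow$ T-UDC, take $\c=(\c_0,\c_-)\in\BP(\A,\b)$ with $\0\ne\b\in\cS_0$ and suppose $\c_-\ne\0$; set $\b'':=\A_-\c_-$, which lies in $\cS_0$ since $\A_0\c_0=\b-\b''$ and $\b,\A_0\c_0\in\cS_0$. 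If $\b''=\0$, then $\phi(\b)=\|\c_0\|_1+\|\c_-\|_1\ge\phi_0(\b)+\|\c_-\|_1>\phi_0(\b)\ge\phi(\b)$, a contradiction. If $\b''\ne\0$, then using $\|\c_-\|_1\ge\phi_-(\b'')$, $\|\c_0\|_1\ge\phi_0(\b-\b'')$, condition (i) at $\b''$, and subadditivity of $\phi_0$,
\begin{equation*}
\phi(\b)=\|\c\|_1\ge\phi_0(\b-\b'')+\phi_-(\b'')>\phi_0(\b-\b'')+\phi_0(\b'')\ge\phi_0(\b)\ge\phi(\b),
\end{equation*}
again a contradiction. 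Hence every element of $\BP(\A,\b)$ is subspace-preserving for all such $\b$, which is the T-UDC.

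For (i) $\iff$ (ii): a pair $(\v_0,\v_-)\in\ns(\A)$ with $\v_-\ne\0$ amounts to a point $\b:=\A_0\v_0=-\A_-\v_-\in\cS_0\cap\range(\A_-)$ equipped with a representation $\v_0$ of $\b$ by $\A_0$ and a representation $-\v_-$ of $\b$ by $\A_-$, and when $\b=\0$ the inequality in (ii) is the trivial $0<\|\v_-\|_1$. For (i) $\Rightarrow$ (ii), given such a pair with $\b\ne\0$, condition (i) at $\b$ yields $\phi_0(\A_0\v_0)=\phi_0(\b)<\phi_-(\b)\le\|\v_-\|_1$. For (ii) $\Rightarrow$ (i), take $\0\ne\b\in\cS_0$; if $\b\notin\range(\A_-)$ the inequality is vacuous, and otherwise choose a minimizer $\c_-$ of $\phi_-(\b)$ and some $\v_0$ with $\A_0\v_0=-\b$, so that $(\v_0,\c_-)\in\ns(\A)$ with $\c_-\ne\0$ and (ii) gives $\phi_0(\b)=\phi_0(-\b)<\|\c_-\|_1=\phi_-(\b)$.

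The routine ingredients are the attainment of the linear-programming minima and the standard properties of the gauge $\phi_0$. \textbf{The substantive step} is (i) $\Rightarrow$ T-UDC: one must notice that for \emph{any} representation of $\b\in\cS_0$ the out-of-subspace part $\A_-\c_-$ is itself forced into $\cS_0$, which makes the splitting bound $\phi(\b)\ge\phi_0(\b-\b'')+\phi_-(\b'')$ available and lets a hypothetical ``mixed'' optimal representation be decomposed and contradicted using (i). Care is also needed with the $+\infty$ convention for $\phi_-$ and with the degenerate case $\c_-\ne\0$ but $\A_-\c_-=\0$.
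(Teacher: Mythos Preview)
Your proof is correct. The argument for (i) $\iff$ (ii) is essentially the same as the paper's: both sides exploit that a null-space vector $(\v_0,\v_-)$ encodes the relation $\A_0\v_0=-\A_-\v_-$, and the case splits and use of evenness of $\phi_0$ match almost line for line.

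Where you depart from the paper is in the equivalence T-UDC $\iff$ (i). The paper does not prove this directly; it observes that condition (i) is already known in the subspace clustering literature \cite{Elhamifar:TPAMI13} to be equivalent to universal subspace-preserving recovery by BP, and then combines that citation with \cref{thm:T-UDC}. You instead give a self-contained argument. The direction T-UDC $\Rightarrow$ (i) is routine, but your (i) $\Rightarrow$ T-UDC is a nice elementary proof: you recognize $\phi_0$ as the gauge of $\cK_0$, hence a norm on $\cS_0$, and then for a hypothetical optimal mixed representation $\c=(\c_0,\c_-)$ you observe that $\b'':=\A_-\c_-$ is forced into $\cS_0$, apply (i) at $\b''$, and finish with the triangle inequality $\phi_0(\b-\b'')+\phi_0(\b'')\ge\phi_0(\b)$. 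This avoids any appeal to prior work and makes the paper's \cref{thm:T-UDC-equivalent} logically independent of \cite{Elhamifar:TPAMI13}; the cost is only the extra observation that $\phi_0$ is subadditive, which follows immediately from $\cK_0$ being a symmetric convex body. Your careful handling of the degenerate cases ($\b\notin\range(\A_-)$, and $\c_-\ne\0$ with $\A_-\c_-=\0$) is also appropriate.
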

		The condition in \cref{eq:equivalent-TUDC-obj} states that the objective value of BP with signal $\b$ and dictionary $\A_0$ is strictly smaller than that with signal $\b$ and dictionary $\A_-$.
		In the context of subspace clustering, it is known that \cref{eq:equivalent-TUDC-obj} is an equivalent condition for universal recovery condition by BP \cite{Elhamifar:TPAMI13}. 
		Combining this with \cref{thm:T-UDC} we know that \cref{eq:equivalent-TUDC-obj} is equivalent to the T-UDC in \cref{eq:T-UDC}. 
		On the other hand, the condition in \cref{eq:equivalent-TUDC-null}, which is characterized by properties associated with the null space of $\A$, has not appeared before to the best of our knowledge. 
		As we will see in \cref{sec:sparse-recovery}, \cref{eq:equivalent-TUDC-null} is closely related to the null space property in the study of sparse signal recovery.
		\begin{proof}[Proof of \cref{thm:T-UDC-equivalent}]
			The previous paragraph shows that \cref{eq:equivalent-TUDC-obj} is equivalent to the T-UDC. 
			To establish the theorem, we only need to show that the conditions \cref{eq:equivalent-TUDC-obj} and \cref{eq:equivalent-TUDC-null} are equivalent.
			
			We first show that \cref{eq:equivalent-TUDC-obj} implies \cref{eq:equivalent-TUDC-null}.
			To that end, let $\v = (\v_0, \v_-)$ be any vector in the null space of $\A$ that satisfies $\v_- \ne \0$, so that in particular it holds that $\A_0 \v_0 + \A_- \v_- = 0$. 
			If $\A_0 \v_0 = \0$, then the left-hand side of \cref{eq:equivalent-TUDC-null} becomes zero. 
			Combining this with $\|\v_-\|_1 > 0$ shows that \cref{eq:equivalent-TUDC-null} holds. 
			Otherwise, we may let $\b = \A_0 \v_0 \ne \0$ and apply \cref{eq:equivalent-TUDC-obj}, which gives
			\begin{align*}
				\min_{\c_0: \A_0 \c_0 = \A_0 \v_0}\|\c_0\|_1 
				&< \min_{\c_-: \A_- \c_- = \A_0 \v_0}\|\c_-\|_1 \\ 
				&= \min_{\c_-: \A_- \c_- = \A_- (-\v_-)}\|\c_-\|_1 
				\le \|-\v_-\|_1 = \|\v_-\|_1,
			\end{align*}
			which establishes that~\eqref{eq:equivalent-TUDC-null} holds.
			
			To show that \cref{eq:equivalent-TUDC-null} implies \cref{eq:equivalent-TUDC-obj}, let $\b\in \cS_0\setminus \{\0\}$. 
			Note that the optimization problem on the left-hand side of \cref{eq:equivalent-TUDC-obj} is feasible.
			Therefore, \cref{eq:equivalent-TUDC-obj} trivially holds if the optimization problem on its right-hand side is not feasible. 
			Otherwise, we can define		
			\begin{equation}\label{eq:prf-tudc-equivalent}
			\v_- \in -\Argmin_{\c_-: \A_- \c_- = \b}\|\c_-\|_1
			\end{equation}
			and let $\v_0$ be any vector satisfying $\A_0 \v_0 = \b$ (this system is feasible as stated above).  We can now observe that $\A_0 \v_0 + \A_- \v_- = \0$ (i.e., $(\v_0, \v_-)$ is in the null space of $\A$) and $\v_- \neq \0$ since $\b \neq \0$, which means that we may apply \cref{eq:equivalent-TUDC-null} to obtain
			\begin{equation*}
			\min_{\c_0: \A_0 \c_0 = \b_0}\|\c_0\|_1 
			= \min_{\c_0: \A_0 \c_0 = \A_0 \v_0}\|\c_0\|_1 
			< \|\v_-\|_1 
			= \|-\v_-\|_1 
			= \min_{\c_-: \A_- \c_- = \b}\|\c_-\|_1,
			\end{equation*}
			where the last equality follows from \cref{eq:prf-tudc-equivalent}. This completes the proof.
		\end{proof}
	
	\subsubsection{A sufficient condition}\label{sec.universal.suff.bp}
	
	Note that condition \cref{eq:T-UDC} is equivalent to
	\begin{equation}\label{eq:T-UDC-reformulate}
	\forall \w \in \cD(\A_0), \ \exists \v \in (\w + \cS_0^\perp)
	\ \ \text{satisfying} \ \
	\max_{\a \in \cA_-} \left|\left\langle \frac{\v}{\|\v\|_2}, \a \right\rangle\right| < \frac{1}{\|\v\|_2}.
	\end{equation} 
	As with our discussion for instance recovery by BP, 
	the left-hand side of the inequality in \cref{eq:T-UDC-reformulate} captures the similarity between $\v$ and $\cA_-$, but the right-hand side can be arbitrarily small. 
	Therefore, we restrict our attention to those $\v$ with smallest $\ell_2$-norm. 
	It is easy to see that such $\v$ is given by $\v = \w$ for each $\w \in \cD(\A_0)$.
	This leads to the following result that gives a sufficient condition for universal recovery by BP.
	
	\begin{theorem}[Universal Dual Condition (UDC)]\label{thm:UDC}
		All elements in $\BP(\A, \b)$ are subspace-preserving for all $\b \in \cS_0\setminus \{\0\}$ if
		\begin{equation}\label{eq:UDC}
		\forall \v \in \cD(\A_0)
		\ \text{it holds that} \
		\|\A_-^\transpose \v\|_\infty < 1.
		\end{equation}
	\end{theorem}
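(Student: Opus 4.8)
The plan is to show that condition~\cref{eq:UDC} implies the tight universal condition~\cref{eq:T-UDC}, after which the conclusion is immediate from \cref{thm:T-UDC}. First I would recall that the T-UDC in~\cref{eq:T-UDC} requires, for every $\w \in \cD(\A_0)$, the existence of \emph{some} $\v \in (\w + \cS_0^\perp)$ with $\|\A_-^\transpose \v\|_\infty < 1$. The key observation is that $\0 \in \cS_0^\perp$, so the point $\v = \w$ itself already lies in the affine subspace $\w + \cS_0^\perp$. Hence it suffices to verify $\|\A_-^\transpose \w\|_\infty < 1$ for every extreme point $\w$ of $\cK_0^o$, and this is exactly what~\cref{eq:UDC} asserts (recall that $\cD(\A_0)$ is defined as the set of extreme points of $\cK_0^o$). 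Therefore~\cref{eq:T-UDC} holds, and \cref{thm:T-UDC} gives that all elements of $\BP(\A, \b)$ are subspace-preserving for every $\0 \ne \b \in \cS_0$, which is the claim.

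An alternative, slightly longer route would bypass the universal tight condition and invoke the instance sufficient condition \cref{thm:IDC} pointwise. For a fixed $\0 \ne \b \in \cS_0$, the dual set $\cD(\A_0, \b)$ is a face of the polytope $\cK_0^o$, so it contains at least one vertex of $\cK_0^o$, i.e., there exists $\w \in \cD(\A_0) \cap \cD(\A_0, \b)$. Condition~\cref{eq:UDC} applied to this $\w$ gives $\|\A_-^\transpose \w\|_\infty < 1$, so the IDC~\cref{eq:IDC} holds for this particular $\b$, and \cref{thm:IDC} yields that every element of $\BP(\A,\b)$ is subspace-preserving; since $\b$ was arbitrary, universal recovery follows. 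I would present the first route as the main argument since it is the most direct and mirrors the structure of the preceding results.

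I do not expect any real obstacle here: the substance of the theorem is simply that, among the (unbounded) family of affine subspaces making up $\BP_{\Dual}(\A_0, \b)$, it is enough to test the separation-from-$\cA_-$ inequality at the smallest-$\ell_2$-norm representatives, namely the dual points $\cD(\A_0) \subseteq \cS_0$; once the tight characterization \cref{thm:T-UDC} is in hand this reduces to a one-line quantifier argument. The only mild point to keep in mind is the quantifier order in~\cref{eq:T-UDC} (``for all $\w$, there exists $\v$''), so that a single good choice $\v = \w$ per dual point $\w$ is all that is needed.
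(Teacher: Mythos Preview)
Your proposal is correct and follows essentially the same approach as the paper: the paper's proof also shows that \cref{eq:UDC} implies \cref{eq:T-UDC} by choosing $\v = \w$ for each $\w \in \cD(\A_0)$ and then invokes \cref{thm:T-UDC}. Your alternative route via \cref{thm:IDC} is also valid but is not the one the paper presents.
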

	\begin{proof}
		We prove that condition \cref{eq:UDC} implies condition \cref{eq:T-UDC}, which is sufficient to establish the desired result because of \cref{thm:T-UDC}.  To that end, assume that~\cref{eq:UDC} holds and let $\w \in \cD(\A_0)$. Then, choose $\v = \w$ so that $\v \in \w + \cS_0^\perp$ trivially holds, and from \cref{eq:UDC} we know that $\|\A_-^\transpose \v\|_\infty < 1$. Thus, condition~\cref{eq:T-UDC} holds. 
	\end{proof}
	
	Using \cref{thm:UDC}, we now derive two geometrically interpretable conditions.
	
	\subsubsection{Two geometrically intuitive sufficient conditions}\label{sec.universal.geom.bp}
	
	For any $\v \in \cD(\A_0) \subseteq\cK_0^o$, it follows from \cref{def:circumradius} and \cref{thm:inradius-coveringradius-circumradius} that
	\begin{equation}\label{eq:prf-bound-dual-A0}
	\|\v\|_2 \le R_0 = 1 / r_0.
	\end{equation}
	Combining \cref{eq:prf-bound-dual-A0} with \cref{eq:UDC} we have the following theorem.
	
	\begin{theorem}[Geometric Universal Dual Condition (G-UDC)\footnote{The G-UDC (resp., the G-USC) is equivalent to the dual recovery condition (resp., the principal recovery condition) in \cite{You:ICML15}. We changed these names so as to have a consistent naming convention.\label{fn:rename}}]\label{thm:G-UDC}
		All elements in $\BP(\A, \b)$ are subspace-preserving for any $\b \in \cS_0\setminus \{\0\}$ if
		\begin{equation}\label{eq:G-UDC}
		\forall \v \in \cD(\A_0) 
		\ \text{it holds that} \
		r_0 > \mu(\v, \cA_-),
		\end{equation}
		with \cref{eq:G-UDC} holding if and only if
		\begin{equation}\label{eq:G-UDC-angular}
		\forall \v \in \cD(\A_0) 
		\ \text{it holds that} \
		\gamma_0 < \theta(\{\pm\v\}, \cA_-).
		\end{equation}
	\end{theorem}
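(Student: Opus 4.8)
The plan is to reduce \cref{thm:G-UDC} to the Universal Dual Condition of \cref{thm:UDC} via the circumradius bound, mirroring the proof of \cref{thm:G-IDC}. First I would fix an arbitrary $\v \in \cD(\A_0)$. By \cref{def:dual-point}, $\v$ is an extreme point of $\cK_0^o$, hence $\v \in \cK_0^o$, and \cref{def:circumradius} together with \cref{thm:inradius-coveringradius-circumradius} yields $\|\v\|_2 \le R_0 = 1/r_0$, which is exactly \cref{eq:prf-bound-dual-A0}.

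Next I would combine this norm bound with the hypothesis $r_0 > \mu(\v, \cA_-)$. Using the definition \cref{eq:def-coherence} of coherence and the fact that every atom of $\cA_-$ has unit $\ell_2$-norm, for all $\a \in \cA_-$ one gets
$$
\left|\left\langle \v, \a \right\rangle\right| = \|\v\|_2 \left|\left\langle \frac{\v}{\|\v\|_2}, \a \right\rangle\right| \le \|\v\|_2\,\mu(\v, \cA_-) < \|\v\|_2\, r_0 \le 1,
$$
so $\|\A_-^\transpose \v\|_\infty < 1$. Since $\v \in \cD(\A_0)$ was arbitrary, condition \cref{eq:UDC} holds, and \cref{thm:UDC} immediately implies that every element of $\BP(\A, \b)$ is subspace-preserving for all $\b \in \cS_0 \setminus \{\0\}$, which is the desired conclusion.

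It remains to justify the equivalence of \cref{eq:G-UDC} and \cref{eq:G-UDC-angular}, which I would do termwise in $\v \in \cD(\A_0)$. Because $\{\pm\v\}$ contains both orientations of $\v$ and each $\a \in \cA_-$ is a unit vector, the angular distance satisfies $\theta(\{\pm\v\}, \cA_-) = \cos^{-1}\!\big(\mu(\v, \cA_-)\big)$, while \cref{thm:inradius-coveringradius-circumradius} gives $r_0 = \cos(\gamma_0)$ with $\gamma_0 \in [0, \pi/2]$; since $\cos$ is strictly decreasing on $[0,\pi/2]$, the inequality $r_0 > \mu(\v, \cA_-)$ is equivalent to $\gamma_0 < \theta(\{\pm\v\}, \cA_-)$, and imposing either for all $\v \in \cD(\A_0)$ gives the stated equivalence. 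I do not anticipate a genuine obstacle: all the substance is carried by \cref{thm:UDC} and \cref{thm:inradius-coveringradius-circumradius}, and the only point deserving a moment's care is matching normalizations so that the coherence $\mu$ and the angular distance $\theta$ correspond under the cosine.
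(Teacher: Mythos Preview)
Your proposal is correct and follows essentially the same approach as the paper: you show that \cref{eq:G-UDC} implies the UDC \cref{eq:UDC} by combining the circumradius bound $\|\v\|_2 \le 1/r_0$ with the coherence hypothesis, then invoke \cref{thm:UDC}, and you derive the equivalence of \cref{eq:G-UDC} and \cref{eq:G-UDC-angular} from \cref{thm:inradius-coveringradius-circumradius}. Your write-up is in fact slightly more explicit than the paper's on the equivalence step, spelling out the monotonicity of cosine and the identity $\theta(\{\pm\v\}, \cA_-) = \cos^{-1}\!\big(\mu(\v, \cA_-)\big)$, but the substance is identical.
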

	\begin{proof}
		We prove that condition \cref{eq:G-UDC} holding implies that condition \cref{eq:UDC} holds, which is sufficient due to \cref{thm:UDC}.  To this end, assume that \cref{eq:G-UDC} holds and let $\v \in \cD(\A_0)$. It then follows from \cref{eq:prf-bound-dual-A0}, \cref{eq:G-UDC} and \cref{eq:def-coherence} that 
		\begin{equation}
		1/\|\v\|_2 \ge r_0 > \mu(\v, \cA_-) = \|\A_-^\transpose \v/\|\v\|_2\|_\infty,
		\end{equation}
		which means that condition \cref{eq:UDC} is satisfied, as claimed. 
		The fact that conditions \cref{eq:G-UDC} and \cref{eq:G-UDC-angular} are equivalent follows from \cref{thm:inradius-coveringradius-circumradius}.
	\end{proof}
	
	Condition \cref{eq:G-UDC} requires the points in $\cA_0$ to be sufficiently well distributed as measured by the inradius of $\pm\cA_0$, and the dual points $\cD(\A_0)$ to be sufficiently separated from points in $\cA_-$ as measured by their coherence.
	The equivalent condition given by \cref{eq:G-UDC-angular} states that all points from $\cA_-$ do not lie in the union of spherical caps of radius $\gamma_0$ centered at each of the points from $\cD(\A_0)$ (see~\cref{fig:geometry-3D-DRC}). 
	By noting that $\cD(\A_0)$ is a subset of $\cS_0$, we immediately have the following result.
	
	\begin{theorem}[Geometric Universal Subspace Condition (G-USC)\footref{fn:rename}]\label{thm:G-USC}
		All \\elements in $\BP(\A, \b)$ are subspace-preserving for all $\b \in \cS_0\setminus \{\0\}$ if
		\begin{equation}\label{eq:G-USC}
		\forall \v \in \cS_0 
		\ \text{it holds that} \
		r_0 > \mu(\v, \cA_-),
		\end{equation}
		with \cref{eq:G-USC} holding if and only if
		\begin{equation}\label{eq:G-USC-angular}
		\forall \v \in \cS_0
		\ \text{it holds that} \
		\gamma_0 < \theta(\{\pm\v\}, \cA_-).
		\end{equation}
	\end{theorem}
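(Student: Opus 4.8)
The plan is to obtain \cref{thm:G-USC} as an immediate corollary of \cref{thm:G-UDC}, since the only difference between the two statements is that the hypothesis is now demanded for every nonzero $\v \in \cS_0$ rather than merely for the finitely many dual points in $\cD(\A_0)$. Concretely, recall from \cref{def:dual-point} that $\cD(\A_0)$ is the set of extreme points of $\cK_0^o$, and that $\cK_0^o \subseteq \cS_0$ by construction; hence $\cD(\A_0) \subseteq \cS_0$. Therefore, if \cref{eq:G-USC} holds, i.e., $r_0 > \mu(\v, \cA_-)$ for all $\0 \ne \v \in \cS_0$, then in particular $r_0 > \mu(\v, \cA_-)$ for all $\v \in \cD(\A_0)$, which is exactly condition \cref{eq:G-UDC}. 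Invoking \cref{thm:G-UDC} then yields that every element of $\BP(\A, \b)$ is subspace-preserving for all $\0 \ne \b \in \cS_0$, which is the claim.

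It remains to verify the equivalence of the coherence form \cref{eq:G-USC} and the angular form \cref{eq:G-USC-angular}. I would prove this exactly as in the proof of \cref{thm:G-UDC}: for a fixed $\0 \ne \v \in \cS_0$, since every $\a \in \cA_-$ has unit $\ell_2$ norm, the definition of the angular distance gives $\cos \theta(\{\pm\v\}, \cA_-) = \max_{\a \in \cA_-} |\langle \v/\|\v\|_2, \a\rangle| = \mu(\v, \cA_-)$, where the $\pm$ in $\{\pm\v\}$ accounts for the absolute value. Because $\gamma_0 \in [0, \pi/2]$ by \cref{thm:inradius-coveringradius-circumradius} (as $r_0 = \cos\gamma_0 \in (0,1]$) and $\theta(\{\pm\v\}, \cA_-) \in [0, \pi/2]$, and cosine is strictly decreasing on $[0,\pi/2]$, the inequality $\gamma_0 < \theta(\{\pm\v\}, \cA_-)$ is equivalent to $\cos\gamma_0 > \cos\theta(\{\pm\v\}, \cA_-)$, i.e., to $r_0 > \mu(\v, \cA_-)$. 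Ranging this equivalence over all $\0 \ne \v \in \cS_0$ establishes \cref{eq:G-USC} $\Leftrightarrow$ \cref{eq:G-USC-angular}.

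There is essentially no obstacle here: the substantive work has already been carried out in \cref{thm:G-UDC}, and the present statement merely replaces its hypothesis with a stronger (hence harder to satisfy) one that subsumes it via the inclusion $\cD(\A_0) \subseteq \cS_0$. The only point requiring a moment of care is the cosine-monotonicity step for the angular reformulation, which needs the range bounds $\gamma_0 \le \pi/2$ and $\theta(\{\pm\v\}, \cA_-) \le \pi/2$ — the former supplied by \cref{thm:inradius-coveringradius-circumradius} and the latter guaranteed by the absolute value built into the definition of $\mu$ (equivalently, by the use of the antipodal pair $\pm\v$).
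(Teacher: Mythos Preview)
Your proposal is correct and follows essentially the same approach as the paper: the paper notes ``By noting that $\cD(\A_0)$ is a subset of $\cS_0$, we immediately have the following result,'' i.e., G-USC implies G-UDC via $\cD(\A_0)\subseteq\cS_0$, and the equivalence between \cref{eq:G-USC} and \cref{eq:G-USC-angular} is obtained from \cref{thm:inradius-coveringradius-circumradius} exactly as you do.
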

	Geometrically, condition \cref{eq:G-USC-angular} states that all points from $\cA_-$ do not lie within an angular distance of $\gamma_0$ to the subspace $\cS_0$ (see~\cref{fig:geometry-3D-PRC}). 
	

	\subsection{Universal recovery conditions for OMP}
	\label{sec:universal_recovery_condition_OMP}
	
	Similar to the case of BP, we may derive universal recovery conditions for OMP by requiring that the instance recovery conditions for OMP presented in \cref{sec:instance_recovery_OMP} are satisfied for all $\b \in \cS_0 \setminus \{\0\}$.  
	Following this idea, we present a universal residual condition (URC) in \cref{sec.universal.suff.omp}, which is a sufficient condition for universal recovery by OMP. 
	We further establish, perhaps surprisingly, that the URC is equivalent to the UDC in \cref{eq:UDC}, showing that the same condition guarantees universal recovery of both BP and OMP. 
	By utilizing such an equivalency, we further show in \cref{sec.universal.geom.omp} that the G-UDC \cref{eq:G-UDC} and G-USC \cref{eq:G-USC} also guarantee universal recovery by OMP.
	
	\subsubsection{A sufficient condition}\label{sec.universal.suff.omp}
	
	By requiring that the IRC in \cref{thm:IRC} be satisfied for all $\b \in \cS_0 \setminus \{\0\}$, we get a sufficient condition for universal recovery by OMP as stated in the following lemma.
	\begin{lemma}\label{thm:universal-OMP-naive}
		The solution $\OMP(\A, \b)$ is subspace-preserving for all $\b \!\in\! \cS_0 \!\setminus \!\{\0\}$ if
		\begin{equation}\label{eq:universal-OMP-naive}
		\forall \v \in \cup_{\b \in \cS_0 \setminus \{\0\}} \cR(\A_0, \b)
		\ \text{it holds that} \
		\|\A_-^\transpose \v\|_\infty < \|\A_0^\transpose \v\|_\infty.
		\end{equation} 
	\end{lemma}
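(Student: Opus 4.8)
The plan is to deduce this immediately from the instance residual condition (IRC) established in \cref{thm:IRC}. The key observation is that condition \cref{eq:universal-OMP-naive} is nothing more than the IRC \cref{eq:IRC} required to hold simultaneously for every choice of signal in $\cS_0\setminus\{\0\}$, so the lemma should follow by specializing to each fixed $\b$ in turn.

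Concretely, I would fix an arbitrary $\0\ne\b\in\cS_0$ and argue as follows. By \cref{def:residual-points}, $\cR(\A_0,\b)$ is the set of nonzero residual vectors produced by \cref{alg.omp} run on input $\A_0$ and this particular $\b$, so trivially $\cR(\A_0,\b)\subseteq \cup_{\b'\in\cS_0\setminus\{\0\}}\cR(\A_0,\b')$. Therefore the hypothesis \cref{eq:universal-OMP-naive} implies that every $\v\in\cR(\A_0,\b)$ satisfies $\|\A_-^\transpose\v\|_\infty < \|\A_0^\transpose\v\|_\infty$, which is exactly condition \cref{eq:IRC}. Invoking \cref{thm:IRC} then yields that $\OMP(\A,\b)$ is subspace-preserving. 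Since $\b$ was an arbitrary nonzero element of $\cS_0$, this establishes the claim for all $\b\in\cS_0\setminus\{\0\}$.

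There is essentially no obstacle here: the only thing to be careful about is the direction of the set inclusion (the union over all signals contains each individual residual set, not the other way around), which makes the hypothesis \emph{stronger} than the per-instance IRC and hence sufficient. No additional geometric machinery, no properties of $\cA_-$ beyond what \cref{thm:IRC} already uses, and no new estimates are needed — the proof is a one-paragraph reduction to the previously proved instance result.
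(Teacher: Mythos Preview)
Your proposal is correct and matches the paper's own treatment: the paper states this lemma without an explicit proof, presenting it simply as the IRC of \cref{thm:IRC} required to hold for every $\b\in\cS_0\setminus\{\0\}$. Your one-paragraph reduction---fix $\b$, use $\cR(\A_0,\b)\subseteq\cup_{\b'}\cR(\A_0,\b')$, and invoke \cref{thm:IRC}---is exactly the intended argument.
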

	
	The following result shows that the set $\cup_{\b \in \cS_0 \setminus \{\0\}} \cR(\A_0, \b)$ appearing in \cref{eq:universal-OMP-naive} is simply a disguised mathematical way of describing the subspace $\cS_0$.
	
	\begin{lemma}\label{lem:disguised}
		It holds that $\cup_{\b \in \cS_0 \setminus \{\0\}} \cR(\A_0, \b) = \cS_0 \setminus \{\0\}$.
	\end{lemma}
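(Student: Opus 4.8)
The plan is to prove the two set inclusions separately; both follow quickly from \cref{def:residual-points} and the structure of \cref{alg.omp}, so the work is mostly a careful reading of the algorithm rather than any real computation.

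For the inclusion $\cup_{\b \in \cS_0 \setminus \{\0\}} \cR(\A_0, \b) \subseteq \cS_0 \setminus \{\0\}$, I would argue as in the discussion following \cref{def:residual-points}: every residual vector produced by \cref{alg.omp} on input $(\A_0, \b)$ has the form $\b - \A_0 \c$ for some coefficient vector $\c$ (namely the solution of \cref{eq:OMP-residual} at that iteration), and since $\b \in \cS_0$ and every column of $\A_0$ lies in $\cS_0$, the residual also lies in $\cS_0$; it is nonzero by the very definition of $\cR(\A_0,\b)$. Taking the union over all $\0 \ne \b \in \cS_0$ preserves membership in $\cS_0 \setminus \{\0\}$.

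For the reverse inclusion $\cS_0 \setminus \{\0\} \subseteq \cup_{\b \in \cS_0 \setminus \{\0\}} \cR(\A_0, \b)$, the key observation is that the very first residual computed by \cref{alg.omp} is the input signal itself. Concretely, given an arbitrary $\0 \neq \v \in \cS_0$, I would run \cref{alg.omp} with input data $\A_0$ and $\b := \v$. Since $\cW^{(0)} = \emptyset$, problem \cref{eq:OMP-residual} forces $\c^{(0)} = \0$, so the residual computed in Step~\ref{step.vk} at iteration $k=0$ is $\v^{(0)} = \b - \A_0 \c^{(0)} = \b = \v$, which is nonzero. Hence $\v$ is one of the nonzero residual vectors produced by the algorithm, i.e., $\v \in \cR(\A_0, \v)$, and a fortiori $\v$ lies in the union.

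I do not expect a genuine obstacle here; the statement is essentially a bookkeeping fact. The only point that needs a moment of care is the ordering inside \cref{alg.omp}: the zeroth residual $\v^{(0)}$ is always computed in Step~\ref{step.vk} \emph{before} the termination test in Step~\ref{step.return} can fire, and it is genuinely nonzero precisely because $\b \neq \0$, so it is always included in $\cR(\A_0,\b)$. The purpose of this lemma is to let us replace the opaque set $\cup_{\b} \cR(\A_0, \b)$ in \cref{eq:universal-OMP-naive} by $\cS_0 \setminus \{\0\}$, turning \cref{thm:universal-OMP-naive} into the cleaner universal residual condition.
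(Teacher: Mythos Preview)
Your proposal is correct and follows essentially the same argument as the paper: both inclusions are proved exactly as you describe, with the forward inclusion using that residuals are differences of elements of $\cS_0$ and the reverse inclusion using that $\v^{(0)}=\b$ since $\cW^{(0)}=\emptyset$. Your added remark about the ordering of Steps~\ref{step.vk} and~\ref{step.return} is a bit more explicit than the paper, but the substance is identical.
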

	\begin{proof}
		We can use the fact that $\cR(\A_0, \b)$ is a subset of $\cS_0\setminus \{\0\}$ for all $\b \in \cS_0 \setminus \{\0\}$ (see the discussion in \cref{sec.suff.omp}) to conclude that
		\begin{equation}\label{eq:residual-union-left}
		\cup_{\b \in \cS_0 \setminus \{\0\}} \cR(\A_0, \b) \subseteq \cS_0  \setminus \{\0\}.
		\end{equation} 
		Note from \cref{alg.omp} that the residual vector $\v^{(0)}$ computed during the first iteration of OMP with data $\A_0$ and $\b$ is equal to $\b$, i.e., $\v^{(0)} = \b$. 
		Therefore, by \cref{def:residual-points} we know that $\b \in \cR(\A_0, \b)$ for all $\b \in \cS_0 \setminus \{\0\}$. 
		It follows, by considering all possible $\b \in \cS_0 \setminus \{\0\}$, that
		\begin{equation}\label{eq:residual-union-right}
		\cS_0 \setminus \{\0\} \subseteq \cup_{\b \in \cS_0 \setminus \{\0\}} \cR(\A_0, \b).
		\end{equation} 
		The desired result follows by combining \cref{eq:residual-union-left} and \cref{eq:residual-union-right}.
	\end{proof}
	
	\cref{lem:disguised} is now used to give the following universal recovery result for OMP. 
	\begin{theorem}[Universal Residual Condition (URC)]\label{thm:URC}
		The solution $\OMP(\A, \b)$ is subspace-preserving for all $\b \in \cS_0 \setminus \{\0\}$ if
		\begin{equation}\label{eq:URC}
		\forall \v \in \cS_0\setminus \{\0\}
		\ \text{it holds that} \
		\|\A_-^\transpose \v\|_\infty < \|\A_0^\transpose \v\|_\infty.
		\end{equation}
	\end{theorem}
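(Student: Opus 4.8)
The plan is to obtain \cref{thm:URC} as an immediate consequence of the two preceding lemmas. \cref{thm:universal-OMP-naive} already shows that $\OMP(\A,\b)$ is subspace-preserving for every $\0\ne\b\in\cS_0$ whenever the inequality $\|\A_-^\transpose\v\|_\infty<\|\A_0^\transpose\v\|_\infty$ holds for all $\v$ in the union $\cup_{\b\in\cS_0\setminus\{\0\}}\cR(\A_0,\b)$. By \cref{lem:disguised}, this union equals $\cS_0\setminus\{\0\}$. Hence the hypothesis \cref{eq:URC} is literally the hypothesis \cref{eq:universal-OMP-naive}, and the conclusion follows directly. The only thing to verify is that the index set over which the inequality must hold matches in both statements, which is precisely the content of \cref{lem:disguised}.

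For completeness I would also give a self-contained argument mirroring the proof of \cref{thm:IRC}. One shows by induction on $k$ that the working set $\cW^{(k)}$ produced by \cref{alg.omp} with input $(\A,\b)$ coincides with the working set $\cW^{(k)}_0$ produced with input $(\A_0,\b)$, hence is contained in $\cA_0$, so that $\OMP(\A,\b)$ is subspace-preserving. The base case is $\cW^{(0)}=\cW^{(0)}_0=\emptyset$. For the inductive step, if $\cW^{(k)}=\cW^{(k)}_0\subseteq\cA_0$ then $\c^{(k)}=\c^{(k)}_0$ and the residual $\v^{(k)}=\b-\A\c^{(k)}$ lies in $\cS_0$, being the difference of $\b\in\cS_0$ and a linear combination of columns of $\A_0\subseteq\cS_0$. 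If $\v^{(k)}=\0$ the algorithm terminates with a subspace-preserving representation. Otherwise $\v^{(k)}\in\cS_0\setminus\{\0\}$, so \cref{eq:URC} applied to $\v=\v^{(k)}$ gives $\|\A_-^\transpose\v^{(k)}\|_\infty < \|\A_0^\transpose\v^{(k)}\|_\infty$, i.e.\ $\max_{\a\in\cA_-}|\langle\a,\v^{(k)}\rangle| < \max_{\a\in\cA_0}|\langle\a,\v^{(k)}\rangle|$, so the atom added in \cref{eq:OMP-maximizer} belongs to $\cA_0$, giving $\cW^{(k+1)}=\cW^{(k+1)}_0\subseteq\cA_0$ and completing the induction.

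Since all of the substantive work has been front-loaded into \cref{thm:universal-OMP-naive}, \cref{lem:disguised}, and (for the alternative argument) the reasoning already used for \cref{thm:IRC}, I do not anticipate any genuine difficulty here. The main point to get right is the bookkeeping of quantifiers, namely that ``for all $\b\in\cS_0\setminus\{\0\}$, for all $\v\in\cR(\A_0,\b)$'' collapses exactly to ``for all $\v\in\cS_0\setminus\{\0\}$'' by \cref{lem:disguised}.
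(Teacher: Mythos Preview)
Your proposal is correct and matches the paper's own proof, which simply states that the theorem follows by combining \cref{thm:universal-OMP-naive} with \cref{lem:disguised}. The additional self-contained inductive argument you include is not in the paper but is a valid alternative that mirrors the proof of \cref{thm:IRC}.
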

	\begin{proof}
		The theorem follows by combining \cref{thm:universal-OMP-naive} with \cref{lem:disguised}.
	\end{proof}
	
	The URC \cref{eq:URC} for universal recovery by OMP may be compared with the UDC \cref{eq:UDC} for universal recovery by BP. 
	They both require that $\|\A_-^\transpose \v\|_\infty < \|\A_0^\transpose \v\|_\infty$ is satisfied for certain $\v$\footnote{We use the fact that $\|\A_0^\transpose \v\|_\infty = 1$ for any $\v \in \cD(\A_0)$.}. 
	However, the condition for BP must be satisfied for all $\v \in \cD(\A_0)$, while the condition for OMP must be satisfied for all $\v \in \cS_0$. 
	Since $\cD(\A_0)$ is a subset of $\cS_0$, it is clear that the UDC is implied by the URC. It is surprising, however, that the conditions are actually equivalent, as we now set out to prove.
	
	

	We need to use the result that the polar set $\cK_0^o$ is a symmetric convex body that induces a norm on the space $\cS_0$, by means of the Minkowski functional. The relevant definition and accompanying result are stated next.
	
	\begin{definition}[Minkowski functional]
		The Minkowski functional of a given set $\cK$ is defined over $\spann(\cK)$ as
		$\|\v\|_\cK := \inf\{t>0: \tfrac{1}{t}\v \in \cK\}$.
	\end{definition}
	\begin{lemma}\label{lem:unitball} 
		The function $\| \cdot \|_{\cK_0^o}$ is a norm defined over $\cS_0$ with unit ball $\cK_0^o$.
	\end{lemma}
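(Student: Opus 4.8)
The plan is to use the classical gauge–norm correspondence: the Minkowski functional of a symmetric convex body is a norm whose closed unit ball is that body. Since the excerpt has already established that $\cK_0^o$ is a symmetric convex body in $\cS_0$, the work reduces to checking the hypotheses that make its gauge well behaved and then verifying the norm axioms directly (everything understood relative to $\cS_0$).

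First I would record two elementary geometric facts about $\cK_0^o$. Since every $\a\in\pm\cA_0$ is a unit vector, $|\langle\v,\a\rangle|\le\|\v\|_2$, so $\{\v\in\cS_0:\|\v\|_2\le1\}\subseteq\cK_0^o$; in particular $\0$ is a (relative) interior point of $\cK_0^o$. Since $\cA_0$ spans $\cS_0$, choosing a basis $\a_{i_1},\dots,\a_{i_{d_0}}$ of $\cS_0$ from $\cA_0$, the linear isomorphism $\v\mapsto(\langle\v,\a_{i_1}\rangle,\dots,\langle\v,\a_{i_{d_0}}\rangle)$ carries $\cK_0^o$ into $[-1,1]^{d_0}$, so $\cK_0^o$ is bounded, say $\cK_0^o\subseteq\{\v\in\cS_0:\|\v\|_2\le\kappa\}$ for some $\kappa<\infty$. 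I would also note the ``star-shaped'' consequence of $\0\in\cK_0^o$ and convexity: if $\v/t\in\cK_0^o$ and $s\ge t>0$ then $\v/s=\frac{t}{s}(\v/t)+(1-\frac{t}{s})\,\0\in\cK_0^o$, so for each $\v\in\cS_0$ the set $\{t>0:\v/t\in\cK_0^o\}$ is an upward-closed subinterval of $(0,\infty)$, and since $\cK_0^o$ is closed its infimum $\|\v\|_{\cK_0^o}$ is attained whenever it is positive.

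With these in hand I would check the norm axioms, all of which are routine. Finiteness of $\|\v\|_{\cK_0^o}$ follows from the inner ball (take $t=\|\v\|_2$, or any $t>0$ if $\v=\0$). Absolute homogeneity $\|\lambda\v\|_{\cK_0^o}=|\lambda|\,\|\v\|_{\cK_0^o}$ is rescaling for $\lambda>0$, uses $\cK_0^o=-\cK_0^o$ for $\lambda<0$, and is trivial for $\lambda=0$. The triangle inequality follows from convexity: for $s>\|\u\|_{\cK_0^o}$ and $t>\|\v\|_{\cK_0^o}$, $\frac{\u+\v}{s+t}=\frac{s}{s+t}\frac{\u}{s}+\frac{t}{s+t}\frac{\v}{t}\in\cK_0^o$, whence $\|\u+\v\|_{\cK_0^o}\le s+t$, and taking infima gives subadditivity. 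Positive definiteness uses the outer ball: $\v/t\in\cK_0^o$ forces $\|\v\|_2/t\le\kappa$, so $\|\v\|_{\cK_0^o}\ge\|\v\|_2/\kappa$, which is $0$ only when $\v=\0$ (and $\|\0\|_{\cK_0^o}=0$). Finally, the unit-ball identity $\{\v\in\cS_0:\|\v\|_{\cK_0^o}\le1\}=\cK_0^o$ is read off from the interval description: if $\v\in\cK_0^o$ then $t=1$ is admissible, so $\|\v\|_{\cK_0^o}\le1$; conversely if $\|\v\|_{\cK_0^o}\le1$ then, when this value is positive, $1$ lies in the admissible interval $[\,\|\v\|_{\cK_0^o},\infty)$ so $\v\in\cK_0^o$, and when it is $0$ we have $\v=\0\in\cK_0^o$.

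I do not anticipate a genuine obstacle; the only points requiring care are that everything must be read relative to $\cS_0$ (so that the origin is interior to $\cK_0^o$, making the gauge finite and positive definite) and that closedness of $\cK_0^o$ is what lets the infimum be attained in the unit-ball step. Alternatively, once the two geometric facts above are noted, one may simply quote the standard gauge–norm correspondence for symmetric convex bodies, e.g.\ from \cite{Brazitikos:14}.
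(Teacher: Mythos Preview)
Your proposal is correct and follows essentially the same approach as the paper: both invoke the standard gauge--norm correspondence for symmetric convex bodies, with the paper simply citing \cite{Vershynin:09} after noting that $\cK_0^o$ is a symmetric convex body with $\spann(\cK_0^o)=\cS_0$, while you spell out the verification in full. Your explicit checks of the inner and outer ball inclusions and the norm axioms are all sound, and your closing remark that one could instead cite the correspondence directly is precisely what the paper does.
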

	\begin{proof}
		The desired result follows from~\cite{Vershynin:09} using the fact that $\cK_0^o$ is a symmetric convex body and $\spann(\cK_0^o) = \spann(\cK_0) = \spann(\cA_0) =  \cS_0$.
	\end{proof}
	
	
	We may now establish that the UDC and URC are equivalent.

	\begin{theorem}\label{thm:UCC-equivalent-condition}
		The UDC \cref{eq:UDC} and the URC \cref{eq:URC} are equivalent. 
	\end{theorem}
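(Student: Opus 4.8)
The plan is to establish the two implications separately. The implication URC $\Rightarrow$ UDC is immediate, and the reverse implication UDC $\Rightarrow$ URC is the crux; it will follow from a convexity argument over the polytope $\cK_0^o$, using that $\cK_0^o$ is the convex hull of the finite set $\cD(\A_0)$.

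First I would dispose of URC $\Rightarrow$ UDC. Let $\v \in \cD(\A_0)$. By \cref{def:dual-point}, $\v$ is an extreme point (vertex) of the compact polyhedron $\cK_0^o = \{\v \in \cS_0: \|\A_0^\transpose \v\|_\infty \le 1\}$, so $\v$ satisfies $d_0$ linearly independent active constraints of the form $\langle \a, \v\rangle = \pm 1$ with $\a \in \cA_0$; hence $\|\A_0^\transpose \v\|_\infty = 1$ and in particular $\v \ne \0$. Applying \cref{eq:URC} to this $\v$ gives $\|\A_-^\transpose \v\|_\infty < \|\A_0^\transpose \v\|_\infty = 1$, which is precisely \cref{eq:UDC}.

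For UDC $\Rightarrow$ URC I would use three observations. (i) The map $f(\v) := \|\A_-^\transpose \v\|_\infty = \max_{\a \in \cA_-} |\langle \a, \v\rangle|$ is a convex, positively homogeneous function on $\Re^D$, being a maximum of absolute values of linear functionals. (ii) For $\v \in \cS_0 \setminus \{\0\}$, the scalar $t := \|\A_0^\transpose \v\|_\infty$ is strictly positive (since $\cA_0$ spans $\cS_0$, a nonzero $\v \in \cS_0$ cannot be orthogonal to all of $\cA_0$), and by the definition of the Minkowski functional together with \cref{lem:unitball} one has $\|\A_0^\transpose(\v/t)\|_\infty = 1$, so $\v/t \in \cK_0^o$. (iii) Since $\cK_0^o$ is a compact polyhedron (a convex body cut out by the $2N_0$ linear inequalities $|\langle \a, \v\rangle| \le 1$, $\a \in \cA_0$), it equals the convex hull of its vertices, $\cK_0^o = \conv(\cD(\A_0))$, and $\cD(\A_0)$ is finite by \cref{thm:dual-finite}. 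Combining these, write $\v/t = \sum_i \lambda_i \w_i$ as a convex combination of finitely many vertices $\w_i \in \cD(\A_0)$ ($\lambda_i \ge 0$, $\sum_i \lambda_i = 1$); convexity of $f$ and \cref{eq:UDC} then yield
$$
f(\v/t) \le \sum_i \lambda_i f(\w_i) \le \max_i f(\w_i) < 1,
$$
where the final strict inequality uses that each $f(\w_i) < 1$ and that the maximum is over a finite set. By positive homogeneity, $\|\A_-^\transpose \v\|_\infty = f(\v) = t\,f(\v/t) < t = \|\A_0^\transpose \v\|_\infty$, which is \cref{eq:URC}.

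\textbf{Main obstacle.} The delicate points are (a) justifying that $\cK_0^o$ is genuinely a polytope so that $\cK_0^o = \conv(\cD(\A_0))$ — this rests on $\cK_0^o$ being a compact polyhedron, which is already recorded (it is a symmetric convex body defined by finitely many linear inequalities), together with \cref{thm:dual-finite} for finiteness of the vertex set — and (b) making sure the strict inequality survives passing to a convex combination, which is exactly why the finiteness of $\cD(\A_0)$ (not merely convexity of $f$) is invoked in the chain $\sum_i \lambda_i f(\w_i) \le \max_i f(\w_i) < 1$.
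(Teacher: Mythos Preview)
Your proposal is correct and follows essentially the same route as the paper's proof: for URC $\Rightarrow$ UDC both you and the paper apply the URC to an extreme point of $\cK_0^o$ and use $\|\A_0^\transpose \v\|_\infty \le 1$ there; for UDC $\Rightarrow$ URC both normalize a given $\v \in \cS_0\setminus\{\0\}$ into $\cK_0^o$, write the result as a convex combination of vertices $\cD(\A_0)$, and use convexity of $\|\A_-^\transpose\cdot\|_\infty$ together with the UDC. The only cosmetic differences are that the paper normalizes via the Minkowski functional $\|\cdot\|_{\cK_0^o}$ (which equals your $t=\|\A_0^\transpose \v\|_\infty$ on $\cS_0$) and obtains the strict inequality as $\sum_i \alpha_i\|\A_-^\transpose \v_i\|_\infty < \sum_i \alpha_i = 1$ rather than via $\max_i f(\w_i) < 1$.
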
 
	\begin{proof}
		To show that the UDC implies the URC, we need to take any $\v \in \cS_0 \setminus \{\0\}$ and show that $\|\A_-^\transpose \v\|_\infty < \|\A_0^\transpose \v\|_\infty$.
		Equivalently, by defining $\bar{\v} := \v/\|\v\|_{\cK_0^o}$, we may show that $\|\A_-^\transpose \bar{\v}\|_\infty < \|\A_0^\transpose \bar{\v}\|_\infty$, which will be our goal.
		
		\cref{def:dual-point} states that $\cD(\A_0)$ is the set of extreme points of $\cK_0^o$. 
		Since the convex hull of the set of the extreme points of a convex
		body is this convex body itself~\cite{Brazitikos:14}, we know that $\cK_0^o$ is the convex hull of $\cD(\A_0)$. 
		Thus, since $\bar{\v} \in \cK_0^o$ (see \cref{lem:unitball}), we may express $\bar{\v}$ as a finite convex combination of points from $\cD(\A_0)$, i.e., 
		$\bar{\v} = \sum_i \alpha_i  \v_i $ with $\v_i \in \cD(\A_0)$ and $\alpha_i \in [0, 1]$ for all $i$ and $\sum_i \alpha_i= 1$. This gives
		\begin{equation}
		\|\A_-^\transpose \bar{\v}\|_\infty 
		= \|\A_-^\transpose \sum_i \alpha_i\v_i \|_\infty
		\le \sum_i \alpha_i\|\A_-^\transpose \v_i   \|_\infty
		< \sum_i \alpha_i 
		= 1
		= \|\A_0^\transpose \bar{\v}\|_\infty,
		\label{eq:proof-OMP-dual}
		\end{equation}
		where the strict inequality follows from the UDC and the final equality follows from the fact that $\|\bar{\v}\|_{\cK_0^o} = 1$. 
		This shows that the URC holds, as intended.
		
		To show that the URC implies the UDC, let $\v \in \cD(\A_0) \subseteq \cS_0 \setminus \{\0\}$. 
		From the URC we get $\|\A_-^\transpose \v\|_\infty < \|\A_0^\transpose \v\|_\infty$. 
		Also, by noting that  $\v \in \cD(\A_0) \subseteq \cK_0^o$ we get $\|\A_0^\transpose \v\|_\infty \le 1$. 
		Combining these two results gives $\|\A_-^\transpose \v\|_\infty < 1$, which is the UDC.
	\end{proof}

	\subsubsection{Two geometrically interpretable conditions}\label{sec.universal.geom.omp}  
	
	We can deduce that our geometrically interpretable conditions for universal subspace recovery for BP are also applicable to OMP by making the following  observations: 
	(i) the G-USC~\cref{eq:G-USC} and G-UDC \cref{eq:G-UDC} are sufficient conditions for ensuring the UDC \cref{eq:UDC} holds (see \cref{fig:result-flowchart}, \cref{thm:G-UDC,thm:G-USC});
	(ii) the UDC is equivalent to the URC (see \cref{fig:result-flowchart,thm:UCC-equivalent-condition}); and
	(iii) the URC is a sufficient condition for universal recovery for OMP (see~\cref{fig:result-flowchart,thm:URC}).
	We formally state this observation next.
	
	%
	
	\begin{corollary}\label{thm:G-UDC-OMP}
		The solution $\OMP(\A, \b)$ is subspace-preserving for all $\b \in \cS_0\setminus \{\0\}$ if either the G-UDC in \cref{eq:G-UDC} holds or the G-USC in \cref{eq:G-USC} holds.
	\end{corollary}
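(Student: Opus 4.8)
The plan is to assemble the corollary from results already in hand, following the chain of implications
\[
\text{G-USC} \ \Rightarrow\ \text{G-UDC} \ \Rightarrow\ \text{UDC} \ \Leftrightarrow\ \text{URC} \ \Rightarrow\ \OMP(\A,\b)\ \text{subspace-preserving for all}\ \0 \ne \b \in \cS_0 .
\]
First I would reduce to the case where the G-UDC in \cref{eq:G-UDC} holds. Recall that $\cD(\A_0)$ is the set of extreme points of $\cK_0^o$, so $\cD(\A_0) \subseteq \cK_0^o \subseteq \cS_0$ since $\spann(\cK_0^o) = \cS_0$ (see \cref{lem:unitball}). Hence the hypothesis \cref{eq:G-USC}, which demands $r_0 > \mu(\v, \cA_-)$ for every $\v \in \cS_0 \setminus \{\0\}$, specializes immediately to \cref{eq:G-UDC}, which demands the same inequality only for $\v \in \cD(\A_0)$. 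Therefore it suffices to prove the corollary assuming G-UDC.

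Next I would string together the remaining links, all of which are already established in the excerpt. By (the argument in the proof of) \cref{thm:G-UDC}, the G-UDC \cref{eq:G-UDC} implies the UDC \cref{eq:UDC}, i.e.\ $\|\A_-^\transpose \v\|_\infty < 1$ for all $\v \in \cD(\A_0)$. By \cref{thm:UCC-equivalent-condition}, the UDC \cref{eq:UDC} and the URC \cref{eq:URC} are equivalent, so the URC holds. Finally, \cref{thm:URC} states precisely that the URC guarantees that $\OMP(\A, \b)$ is subspace-preserving for every $\b \in \cS_0 \setminus \{\0\}$, which is the desired conclusion.

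I do not expect any genuine obstacle: the corollary is a bookkeeping consequence of the equivalence UDC $\Leftrightarrow$ URC (\cref{thm:UCC-equivalent-condition}), which is the substantive ingredient and has already been proved. The only point requiring a moment's care is the opening reduction — verifying that the quantifier in G-USC really does subsume the one in G-UDC — and this is immediate from the inclusion $\cD(\A_0) \subseteq \cS_0$. Should one wish to bypass that reduction, one can instead observe directly that G-USC $\Rightarrow$ UDC (restrict the proof of \cref{thm:G-UDC} to $\v \in \cD(\A_0)$) and then proceed through the URC and \cref{thm:URC} exactly as above.
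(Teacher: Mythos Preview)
Your proposal is correct and follows essentially the same chain of implications as the paper, which deduces the corollary by observing that G-USC and G-UDC each imply the UDC (\cref{thm:G-UDC,thm:G-USC}), the UDC is equivalent to the URC (\cref{thm:UCC-equivalent-condition}), and the URC guarantees universal subspace-preserving recovery by OMP (\cref{thm:URC}). Your explicit justification of G-USC $\Rightarrow$ G-UDC via $\cD(\A_0)\subseteq\cS_0$ matches the paper's one-line remark preceding \cref{thm:G-USC}.
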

	


	\section{Sparse Signal Recovery}
	\label{sec:sparse-recovery}
	In this section, we show that the instance and universal recovery conditions for subspace-preserving recovery can also provide correctness guarantees for sparse signal recovery. To formally state such results, we identify three fundamental problems that are studied in compressive sensing. 
	
	\begin{problem}[instance non-uniform recovery problem]
		What conditions on $\cA$ ensure that a \emph{particular} $d_0$-sparse vector $\c$ that has nonzero entries corresponding to $\cA_0 \subseteq \cA$ can be recovered from the linear measurements $\b = \A \c$?
	\end{problem}
	\begin{problem}[universal non-uniform recovery problem]
		What conditions on $\cA$ ensure  that \emph{any} $d_0$-sparse vector $\c$ that has nonzero entries corresponding to $\cA_0 \subseteq \cA$ can be recovered from the linear measurements $\b = \A \c$?
	\end{problem}
	\begin{problem}[uniform recovery problem]
		What conditions on $\cA$ ensure  that \emph{any} $d_0$-sparse vector $\c$ can be recovered from the linear measurements $\b = \A \c$?
	\end{problem}
	
	In the literature, the first two problems are sometimes referred to as the non-uniform recovery problem and the third problem as the uniform recovery problem \cite{Foucart:13}. 
	To further distinguish the first two problems, we will refer to them as the instance and universal non-uniform recovery problems, respectively. 
	The dual certificate condition \cite{Fuchs:TIT04,Zhang:ACM16}, exact recovery condition \cite{Tropp:TIT04} and mutual coherence condition \cite{DonohoElad:PNAS03,Tropp:TIT04} are, respectively, well-known conditions that answer these three questions.
	
	Observe that if $\cA_0$ contains linearly independent data points, then the instance and universal \emph{subspace-preserving recovery} problems are equivalent to the instance and universal non-uniform \emph{sparse signal recovery} problems, respectively. 
	More precisely, BP (resp., OMP) recovers a $d_0$-sparse vector $\c$ that has nonzero entries corresponding to $\cA_0$ from $\b = \A \c$ if and only if all solutions in $\BP(\A_0, \b)$ (resp., $\OMP(\A_0, \b)$) are subspace-preserving. 
	Likewise, BP (resp., OMP) recovers any $d_0$-sparse vector $\c$ with nonzero entries corresponding to $\cA_0$ from its measurement $\b = \A \c$ if and only if all solutions to $\BP(\A_0, \b)$ (resp., $\OMP(\A_0, \b)$) are subspace-preserving for any $\b \in \cS_0$. 
	
	As a consequence of this equivalency, our instance and universal recovery conditions also provide guarantees for sparse signal recovery. 
	In this section, we establish that some of our conditions are equivalent to existing conditions in the sparse signal recovery literature.
	In such cases, our conditions are generalizations of the corresponding conditions for sparse signal recovery. 
	In the other cases, our conditions provide new sparse signal recovery conditions that bring new insight to the area.
	
	\subsection{Conditions for instance non-uniform recovery}
	
	By applying \cref{thm:T-IDC,thm:IDC,thm:IRC}, we obtain instance non-uniform sparse signal recovery results for BP and OMP. Similarly, we obtain geometrically interpretable conditions by using \cref{thm:G-IDC,thm:G-IRC}. These results are now summarized.
	
	\begin{theorem} \label{thm:inur}
		Let the columns of $\A_0$ be linearly independent,  $\c$ be a vector supported on $\A_0$ and $\b = \A \c$. Then, the following hold:
		\begin{enumerate}[label=(\roman*)]
			\item \label{itm:inur-T-IDC}$\c$ is the unique solution in $\BP(\A, \b)$ if and only if the T-IDC in \cref{eq:T-IDC} holds.
			\item \label{itm:inur-IDC}$\c$ is the unique solution in $\BP(\A, \b)$ if either the IDC in \cref{eq:IDC} holds or the G-IDC in \cref{eq:G-IDC} holds.
			\item \label{itm:inur-IRC}$\c$ is given by $\OMP(\A, \b)$ if either the IRC in \cref{eq:IRC} holds or the G-IRC in \cref{eq:G-IRC} holds.
		\end{enumerate}
	\end{theorem}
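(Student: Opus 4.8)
The plan is to deduce all three parts from the corresponding subspace-preserving recovery theorems, using a single structural observation: under the stated hypotheses, $\b$ admits exactly one subspace-preserving representation, namely $\c$. I would first record this. Writing $\A = [\A_0 \ \A_-]$ and $\c = (\c_0, \0)$, the assumption $\b = \A\c$ gives $\b = \A_0 \c_0$; since the columns of $\A_0$ are linearly independent, $\A_0 \x = \b$ has $\c_0$ as its unique solution, so any $\c'$ with $\A \c' = \b$ whose support lies in $\cA_0$ must equal $\c$. Consequently ``every element of $\BP(\A,\b)$ is subspace-preserving'' is equivalent to ``$\BP(\A,\b) = \{\c\}$'' (here one also uses that $\BP(\A,\b) \neq \emptyset$, which holds because $\{\c' : \A\c' = \b\}$ is a nonempty affine set on which $\|\cdot\|_1$ attains its minimum), and similarly ``$\OMP(\A,\b)$ is subspace-preserving'' forces $\OMP(\A,\b) = \c$, because the returned vector $\c^{(k)}$ satisfies $\A\c^{(k)} = \b$ (termination requires $\v^{(k)} = \0$) and, being subspace-preserving, is supported on $\cW^{(k)} \subseteq \cA_0$.

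Given this reduction, part~(i) is immediate from \cref{thm:T-IDC}, which states that all elements of $\BP(\A,\b)$ are subspace-preserving if and only if the T-IDC holds; by the previous paragraph the left side is precisely the statement that $\c$ is the unique element of $\BP(\A,\b)$. Part~(ii) follows in the same way from \cref{thm:IDC} and \cref{thm:G-IDC}, each of which yields ``all elements of $\BP(\A,\b)$ are subspace-preserving'' under its respective hypothesis, hence $\BP(\A,\b) = \{\c\}$. Part~(iii) follows from \cref{thm:IRC} and \cref{thm:G-IRC}, which guarantee that $\OMP(\A,\b)$ is subspace-preserving, together with the observation that a subspace-preserving $\OMP(\A,\b)$ must equal $\c$.

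I do not expect a real obstacle; the argument is essentially bookkeeping around the equivalence between subspace-preserving recovery and sparse signal recovery discussed just before the theorem statement. The only point requiring care is the OMP direction: one must argue that a subspace-preserving $\OMP(\A,\b)$ is not merely \emph{some} representation supported on $\cA_0$ but exactly the target $\c$, and this is where linear independence of the columns of $\A_0$ is essential (it makes the representation of $\b$ by columns of $\A_0$ unique, so the least-squares solution $\c^{(k)}$ on any working set $\cW^{(k)} \subseteq \cA_0$ that already reproduces $\b$ must coincide with $\c$). A minor secondary point is verifying $\BP(\A,\b) \neq \emptyset$ so that ``uniqueness'' is not vacuous, which is standard.
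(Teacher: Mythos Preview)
Your proposal is correct and matches the paper's approach exactly: the paper does not write out a separate proof but states the theorem as a direct application of \cref{thm:T-IDC,thm:IDC,thm:G-IDC,thm:IRC,thm:G-IRC} via the equivalence (spelled out in the paragraph preceding the theorem) that, when the columns of $\A_0$ are linearly independent, subspace-preserving recovery coincides with recovering the unique vector $\c$. Your write-up simply makes explicit the bookkeeping (uniqueness of the subspace-preserving representation, nonemptiness of $\BP(\A,\b)$, and why a subspace-preserving $\OMP$ output must equal $\c$) that the paper leaves implicit.
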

	
	%
	
	We note that when the T-IDC is considered under the assumption that the columns of $\A$ are linearly independent, it is equivalent to a well-known sparse signal recovery condition \cite[Theorem 1]{Fuchs:TIT04} called the \emph{dual certificate condition} in~\cite{Zhang:ACM16}.
	Therefore, although the result \cref{thm:inur} \ref{itm:inur-T-IDC} 
	does not provide a new result for sparse signal recovery, it does show that the T-IDC may be regarded as a generalization of the dual certificate condition to 
	the instance subspace-preserving recovery problem. 
	\cref{thm:inur} \ref{itm:inur-IDC} shows that the IDC is a sufficient condition for sparse signal recovery by BP that follows trivially by relaxing the condition in \cref{thm:inur} \ref{itm:inur-T-IDC}.
	This condition seldomly, if ever, appears in the sparse signal recovery literature.
	
	\cref{thm:inur} \ref{itm:inur-IRC} shows that the IRC is a sufficient condition for sparse signal recovery by OMP. 
	This condition has been implicitly used in the derivation of sparse recovery conditions for OMP (see, e.g. \cite{Tropp:TIT04}) but is not identified as an independent result. 
	
	%
	
	\cref{thm:inur} \ref{itm:inur-IDC}-\ref{itm:inur-IRC} also provide new geometrically interpretable conditions (i.e., the G-IDC and the G-IRC) for sparse signal recovery. They state that instance non-uniform recovery by BP and OMP is guaranteed to be successful if the points in $\cA_0$ are well distributed as measured by the inradius $r_0$ and the points in $\cA_-$ are sufficiently separated from (i) a certain point in $\cD(\A_0, \b)$ for BP, or (ii) all points in $\cR(\A_0, \b)$ for OMP. 
	To the best of our knowledge, such conditions are new to the sparse signal recovery literature. 
	
	Unlike the general subspace-preserving recovery problem for which the inradius $r_0$ is difficult to compute in general, there exists an easy way of computing $r_0$ for sparse signal recovery when the columns of $\A_0$ are linearly independent. 
	Since $r_0$ can be computed, one may check whether the G-IDC and G-IRC are satisfied for a given $\A$ and $\c$. 
	To show how $r_0$ may be computed, we first establish the following result.
	
	\begin{lemma}		\label{thm:sparse-recovery-dualpoint}
		Let the columns of $\A_0$ be linearly independent. Then, the set of dual points contains exactly $2^{d_0}$ points and is given by $\cD(\A_0) = \{\A_0 (\A_0^{\transpose} \A_0) ^{-1} \u : \u \in U_{d_0}\}$, where $U_{d_0} := \{ (u_1, \dots, u_{d_0}) :  u_i = \pm 1 \ \text{for} \ i = 1, \dots, d_0 \}$.
	\end{lemma}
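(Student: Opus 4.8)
The plan is to realize $\cK_0^o$ as an invertible linear image of the cube $[-1,1]^{d_0}$ and then read off its extreme points. Since the columns of $\A_0$ are linearly independent and $\cS_0 = \spann(\A_0)$ has dimension $d_0$, the matrix $\A_0$ has exactly $d_0$ columns and $\x \mapsto \A_0\x$ is a linear bijection from $\Re^{d_0}$ onto $\cS_0$; moreover $\mathbf{G} := \A_0^\transpose\A_0$ is symmetric positive definite, hence invertible. For $\v = \A_0\x \in \cS_0$ one has $\A_0^\transpose\v = \mathbf{G}\x$, so, using the description $\cK_0^o = \{\v \in \cS_0 : \|\A_0^\transpose\v\|_\infty \le 1\}$, the substitution $\y := \mathbf{G}\x$ gives $\cK_0^o = \{\,\A_0\mathbf{G}^{-1}\y : \y \in [-1,1]^{d_0}\,\} = T([-1,1]^{d_0})$, where $T\y := \A_0\mathbf{G}^{-1}\y$ is linear and injective (both $\A_0$ and $\mathbf{G}^{-1}$ have trivial kernel).

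Next I would invoke the standard fact that for an \emph{injective} linear map $T$ and a polytope $P$ one has $\mathrm{ext}(T(P)) = T(\mathrm{ext}(P))$: the inclusion ``$\subseteq$'' holds for any linear map, because $T^{-1}(q)\cap P$ is a nonempty face of $P$ for each $q \in \mathrm{ext}(T(P))$ and therefore contains an extreme point of $P$; the inclusion ``$\supseteq$'' uses injectivity, since any convex decomposition $T(p) = \tfrac12(q_1+q_2)$ with $q_i = T(p_i)$, $p_i \in P$, pulls back through $T$ to $p = \tfrac12(p_1+p_2)$, forcing $p_1 = p_2 = p$ when $p$ is extreme in $P$. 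Applying this with $P = [-1,1]^{d_0}$, whose extreme points are exactly the sign vectors $U_{d_0}$, and recalling that $\cD(\A_0)$ is by \cref{def:dual-point} the set of extreme points of $\cK_0^o$, yields $\cD(\A_0) = T(U_{d_0}) = \{\,\A_0(\A_0^\transpose\A_0)^{-1}\u : \u \in U_{d_0}\,\}$.

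It remains to count: since $T$ is injective and $U_{d_0}$ has $2^{d_0}$ distinct elements, $\cD(\A_0)$ consists of exactly $2^{d_0}$ distinct points, which also recovers the bound of \cref{thm:dual-finite} with $N_0 = d_0$. The one place to be careful is that $\cK_0^o$ lives in the proper subspace $\cS_0$, so the argument must be carried out with the relative polar and relative extreme points, and one must check both that the cube image lands \emph{exactly} on $\cK_0^o$ and that the $2^{d_0}$ images are genuinely distinct — this distinctness is precisely what injectivity of $T$ supplies and is the only subtle point. An alternative route, mirroring the proof of \cref{thm:dual-finite}, is to argue directly: a vertex of $\cK_0^o$ is pinned down by $d_0$ linearly independent active constraints among $|\langle\a_i,\v\rangle| \le 1$, at most one from each antipodal pair, hence by a sign choice $\u \in \{\pm1\}^{d_0}$ imposing $\A_0^\transpose\v = \u$ together with $\v \in \cS_0$, whose unique solution is $\A_0(\A_0^\transpose\A_0)^{-1}\u$; conversely each such point is feasible (all constraints hold with equality) and is a vertex because the $\a_i$ are linearly independent and span $\cS_0$.
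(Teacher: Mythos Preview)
Your proof is correct and takes a somewhat different route from the paper's. You parametrize $\cK_0^o$ as an injective linear image $T([-1,1]^{d_0})$ and then invoke the general principle $\mathrm{ext}(T(P)) = T(\mathrm{ext}(P))$ for injective linear $T$, which immediately identifies \emph{all} extreme points at once. The paper instead argues in two separate pieces: it first appeals to the counting bound of \cref{thm:dual-finite} (with $N_0 = d_0$) to cap the number of extreme points at $2^{d_0}$, and then verifies by hand that each candidate $\v_0 = \A_0(\A_0^\transpose\A_0)^{-1}\u_0$ is an extreme point---writing $\v_0 = (1-\lambda)\v_1 + \lambda\v_2$, pulling back through $\A_0(\A_0^\transpose\A_0)^{-1}$ to get $\u_0 = (1-\lambda)\c_1 + \lambda\c_2$ with $\|\c_k\|_\infty \le 1$, and concluding entrywise that $\c_1 = \c_2$. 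Your approach is cleaner and self-contained (it does not need the upper bound from \cref{thm:dual-finite}), while the paper's hands-on verification is essentially a specialization of your general lemma to this particular $T$ and $P$; the ``alternative route'' you sketch at the end is in fact close to how the paper proceeds.
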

	\begin{proof}
		From \cref{thm:dual-finite}, there are at most $2^{d_0}$ dual points in the case that $\A_0$ has full column rank. So in order to prove the result, it is enough to show that the set $\{\A_0 (\A_0^{\transpose} \A_0) ^{-1} \u : \u \in U_{d_0}\}$ contains $2^{d_0}$ points, and each of them is a dual point. 
		
		Note that $U_{d_0}$ has $2^{d_0}$ points.  Also, if $\{\u_1, \u_2\} \subseteq U_{d_0}$ with $\u_1 \ne \u_2$, then $\A_0 (\A_0^{\transpose} \A_0) ^{-1} \u_1 \ne \A_0 (\A_0^{\transpose} \A_0) ^{-1} \u_2$ because $\A_0 (\A_0^{\transpose} \A_0) ^{-1}$ has full column rank as a consequence of $\A_0$ having full rank; thus, $\{\A_0 (\A_0^{\transpose} \A_0) ^{-1} \u : \u \in U_{d_0}\}$ has $2^{d_0}$ points. 
		
		
		Next, we show that $\v_0 := \A_0 (\A_0^{\transpose} \A_0) ^{-1} \u_0$ is a dual point for any $\u_0 \in U_{d_0}$. By definition, we need to show that $\v_0$ is an extreme point of the set $\cK_0^o = \{\v \in \cS_0: \|\A_0^{\transpose} \v\|_\infty \le 1\}$. First, note that $\v_0 \in \cK_0^o$ because $\v_0\in\range(\A_0)$ and $\|\A_0^{\transpose} \v_0\|_\infty = \|\u_0\|_\infty = 1$. Second, suppose that there are two points $\{\v_1, \v_2\} \subseteq \cK_0^o$ such that 
		$\v_0 = (1-\lambda) \v_1 + \lambda \v_2$
		for some $\lambda \in (0, 1)$. 
		Since $\spann(\A_0 (\A_0^{\transpose} \A_0) ^{-1}) = \cS_0$ and $\{\v_1, \v_2\} \subseteq \cK_0^o \subseteq \cS_0$, there exists $\c_1$ and $\c_2$ satisfying $\v_k = \A_0 (\A_0^{\transpose} \A_0) ^{-1} \c_k$ for $k \in\{1, 2\}$. Combining this with the fact that $\A_0$ has full column rank reveals that
		\begin{equation}
		\u_0
		= \A_0^{\transpose}\v_0
		= (1-\lambda) \A_0^{\transpose}\v_1 + \lambda \A_0^{\transpose}\v_2
		= (1-\lambda) \c_1 + \lambda \c_2.
		\label{eq:proof-convex-cmb-coeff}
		\end{equation}
		Consider the $i$th entry of $\u_0$, namely $[\u_0]_i = (1-\lambda) [\c_1]_i + \lambda [\c_2]_i$. Since $[\u_0]_i = \pm 1$ and the right-hand side of \cref{eq:proof-convex-cmb-coeff} is a convex combination with $\lambda \in(0,1)$ of two points in $[-1, 1]$ (this can be seen from $\|\c_k\|_\infty = \|\A_0^\transpose \v_k\|_\infty=1$ for $k \in \{1, 2\}$ because $\{\v_1,\v_2\} \subseteq \cK_0^o$), it follows that 
		$[\c_1]_i = [\c_2]_i$. Since the index $i$ was arbitrary, it follows that $\c_1 = \c_2$, which implies $\v_1 = \v_2$, thus showing that $\v_0$ is an extreme point.
	\end{proof}
	
	We may now use \cref{thm:inradius-coveringradius-circumradius,thm:sparse-recovery-dualpoint} to compute $r_0$ as
	\begin{equation}
	\begin{split}
	r_0 &= 1/R_0 
	= 1/\max\{ \|\v\|_2: \v \in \cK_0^o \}
	= 1/\max\{ \|\v\|_2: \v \in \cD(\A_0) \} \\
	&= 1/\max\{\|\A_0 (\A_0^{\transpose} \A_0) ^{-1} \u\|_2: \u \in U_{d_0}\},
	\end{split}
	\end{equation}
	which requires computing  
	$\|\A_0 (\A_0^{\transpose} \A_0) ^{-1} \u\|_2$ over all $\u \in U_{d_0}$. 
	However, no polynomial algorithm in $d_0$ for doing so exists since calculating $\max\{\|\A_0 (\A_0^{\transpose} \A_0) ^{-1}  \u\|_2: \u \in U_{d_0}\} = \|\A_0 (\A_0^{\transpose} \A_0) ^{-1} \|_{\infty, 2}$ is known to be NP-hard  \cite{Tropp:2004thesis}.
	
	\subsection{Conditions for universal non-uniform recovery}
	
	By applying \cref{thm:T-UDC,thm:UDC,thm:URC}, and noting from \cref{thm:UCC-equivalent-condition} that the UDC and the URC are equivalent, we obtain universal non-uniform sparse signal recovery results for BP and OMP. 
	Moreover, we can apply \cref{thm:G-UDC,thm:G-USC,thm:G-UDC-OMP} to obtain geometrically interpretable conditions. 
	These results are summarized as follows.
	
	\begin{theorem} \label{thm:iur}
		Let the columns of $\A_0$ be linearly independent. The following hold:$\!\!$
		\begin{enumerate}[label=(\roman*)]
			\item \label{itm:iur-T-UDC}Any $\c$ supported on $\A_0$ is the unique solution in $\BP(\A, \b)$ where $\b = \A \c$ if and only if the T-UDC in \cref{eq:T-UDC} holds.
			\item \label{itm:iur-UDC}Any $\c$ supported on $\A_0$ is the unique solution in $\BP(\A, \b)$ and $\OMP(\A, \b)$ where $\b = \A \c$ if either the UDC in \cref{eq:UDC} (equivalently, the URC in \cref{eq:URC}) holds, or the G-UDC in \cref{eq:G-UDC} holds, or the G-USC in \cref{eq:G-USC} holds.
		\end{enumerate}
	\end{theorem}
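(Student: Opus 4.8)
The plan is to reduce \cref{thm:iur} entirely to the universal subspace-preserving recovery theorems, via the equivalence between sparse signal recovery and subspace-preserving recovery noted earlier in this section, so the proof is essentially a bookkeeping argument. First I would record the consequences of the columns of $\A_0$ being linearly independent: for every $\b \in \cS_0$ the system $\A_0 \c_0 = \b$ has a unique solution $\c_0$, so $(\c_0, \0)$ is the \emph{only} vector supported on $\A_0$ satisfying $\A \c = \b$, and hence the only subspace-preserving representation of $\b$. Two elementary equivalences then follow for any fixed $\0 \ne \b \in \cS_0$, with $\c$ the vector supported on $\A_0$ satisfying $\b = \A \c$: (a) ``$\c$ is the unique element of $\BP(\A, \b)$'' is equivalent to ``every element of $\BP(\A, \b)$ is subspace-preserving'' --- the forward direction is trivial, and the reverse holds because $\BP(\A,\b)$ is nonempty and any subspace-preserving feasible point must equal $(\c_0,\0)=\c$; and (b) ``$\OMP(\A, \b) = \c$'' is equivalent to ``$\OMP(\A, \b)$ is subspace-preserving'', for the same reason. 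Finally, since $\b = \0$ forces $\c = \0$ with $\BP(\A, \0) = \{\0\}$ and $\OMP(\A, \0) = \0$, and since $\c \mapsto \A_0 \c_0$ is a bijection from the nonzero vectors supported on $\A_0$ onto $\cS_0 \setminus \{\0\}$, the quantifier ``for any $\c$ supported on $\A_0$'' may be replaced throughout by ``for all $\b \in \cS_0 \setminus \{\0\}$''.

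With these reductions, part (i) is immediate: ``$\c$ is the unique element of $\BP(\A, \b)$ for every $\c$ supported on $\A_0$'' becomes, by equivalence (a) and the bijection above, ``all elements of $\BP(\A, \b)$ are subspace-preserving for all $\b \in \cS_0 \setminus \{\0\}$'', which by \cref{thm:T-UDC} holds if and only if the T-UDC \cref{eq:T-UDC} holds.

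For part (ii), the BP half proceeds exactly as in part (i): ``all elements of $\BP(\A,\b)$ are subspace-preserving for all $\b \in \cS_0 \setminus \{\0\}$'' is guaranteed by the UDC via \cref{thm:UDC}, by the G-UDC via \cref{thm:G-UDC}, and by the G-USC via \cref{thm:G-USC}, while the UDC and URC coincide by \cref{thm:UCC-equivalent-condition}. The OMP half is analogous using equivalence (b): ``$\OMP(\A,\b)$ is subspace-preserving for all $\b \in \cS_0 \setminus \{\0\}$'' is guaranteed by the URC via \cref{thm:URC} and by the G-UDC or G-USC via \cref{thm:G-UDC-OMP}. Intersecting the BP and OMP conclusions shows that, under any one of the listed conditions, every $\c$ supported on $\A_0$ is simultaneously the unique element of $\BP(\A, \b)$ and equal to $\OMP(\A, \b)$, as claimed.

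I do not expect a genuine technical obstacle here: all the substantive content lives in the earlier subspace-preserving theorems. The one point calling for care is the first paragraph --- confirming that ``$\c$ supported on $\A_0$ with $\b = \A\c$'' determines $\c$ unambiguously (so that ``the unique solution in $\BP(\A,\b)$'' is well posed) and that this correspondence sweeps out all of $\cS_0 \setminus \{\0\}$ --- which is precisely where linear independence of the columns of $\A_0$ is used.
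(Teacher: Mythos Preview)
Your proposal is correct and follows essentially the same approach as the paper: the paper does not give a formal proof for this theorem, but simply states that it is obtained ``by applying'' \cref{thm:T-UDC,thm:UDC,thm:URC,thm:G-UDC,thm:G-USC,thm:G-UDC-OMP,thm:UCC-equivalent-condition} together with the equivalence (stated at the beginning of the section) between subspace-preserving recovery and sparse signal recovery when the columns of $\A_0$ are linearly independent. Your write-up makes explicit precisely the bookkeeping details the paper leaves implicit --- the uniqueness of the subspace-preserving representation, the equivalences (a) and (b), and the bijection between nonzero $\c$ supported on $\A_0$ and $\cS_0\setminus\{\0\}$ --- and is a faithful elaboration of the intended argument.
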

	
	\cref{thm:iur} \ref{itm:iur-T-UDC} provides a tight condition for universal non-uniform recovery by BP. 
	When the columns of $\A_0$ are linearly independent, the T-UDC is equivalent to a sparse signal recovery condition called the \emph{null space property} (\cite{Cohen:JAMS09}, see also \cite[Definition 4.1.]{Foucart:13}). 
	In particular, a matrix $\A=[\A_0, \A_-]$ is said to satisfy the null space property if for any $\0 \ne \v = (\v_0, \v_-)$ in the null space of $\A$, it holds that
	\begin{equation}\label{eq:nsp}
		\|\v_0\|_1 < \|\v_-\|_1. 
	\end{equation}
	It is straightforward to verify that \cref{eq:nsp} is equivalent to \cref{eq:equivalent-TUDC-null} by using the assumption that $\A_0$ has linearly independent columns. 
	Therefore, it follows from \cref{thm:T-UDC-equivalent} that \cref{eq:nsp} is equivalent to the T-UDC. 
	
	\cref{thm:iur} \ref{itm:iur-UDC} shows that the UDC (equivalently, the URC) is a sufficient condition for universal non-uniform recovery by BP and OMP. 	
	When the columns of $\A_0$ are linearly independent, we claim that the UDC is equivalent to the \emph{exact recovery condition} in
	sparse signal recovery condition \cite[Theorem A]{Tropp:TIT04} given by
	\begin{equation}\label{eq:exact-recovery-condition}
	\forall \w \in \cA_-
	\ \ \text{it holds that} \ \
	\|\A_0^\dag  \w\|_1 < 1,
	\end{equation}
	where $\A_0^\dag$ denotes the pseudo-inverse of $\A_0$. 
	To see that it is equivalent to the UDC, we use \cref{thm:sparse-recovery-dualpoint} and the relation $\A_0^\dag = (\A_0^\transpose \A_0)^{-1}\A_0^\transpose$ to conclude that
	\begin{equation}
	\begin{split}
	\text{UDC holds} \iff &~\forall \v \in \cD(\A_0) 
	\ \ \text{it holds that} \ \
	\|\A_-^\transpose \v\|_\infty < 1\\
	\iff &~\forall \w \in \cA_-, ~\forall \u \in U_{d_0}
	\ \ \text{it holds that} \ \
	|\langle \w, (\A_0^\dag)^\transpose  \u \rangle| < 1 \\
	\iff &~\forall \w \in \cA_-, ~\forall \u \in U_{d_0}
	\ \ \text{it holds that} \ \
	|\langle \u, \A_0^\dag  \w \rangle| < 1 \\
	\iff &~\forall \w \in \cA_-
	\ \ \text{it holds that} \ \
	\|\A_0^\dag  \w\|_1 < 1,
	\end{split}
	\end{equation}
	which is \cref{eq:exact-recovery-condition}. Consequently,  
	\cref{thm:iur} \ref{itm:iur-UDC} is not new for sparse signal recovery. 
	However, it does show that the UDC is a generalization of the exact recovery condition. 
	
	\cref{thm:iur} \ref{itm:iur-UDC} also provides new geometrically interpretable conditions for sparse signal recovery. 
	It states that universal non-uniform recovery is guaranteed if the points in $\cA_0$ are sufficiently well distributed as measured by the inradius $r_0$ and the points in $\cA_-$ are sufficiently well separated from (i) all points in $\cD(\A_0)$ for G-UDC, or (ii) all points in $\spann(\A_0)$ for G-USC.
	As for the instance non-uniform sparse signal recovery setting, one may check whether the G-UDC and G-USC are satisfied for a given $\A$ since all of the quantities in these conditions are easy to compute. 
	
	\subsection{Conditions for uniform recovery}
	
	To derive conditions that guarantee any $d_0$-sparse vector $\c$ can be recovered by BP and OMP regardless of the support of $\c$, we may require that the universal non-uniform recovery conditions in \cref{thm:iur} are satisfied for all possible partitions of the data $\cA$ into $\cA_0 \cup \cA_-$, where $\cA_0$ contains $d_0$ atoms. 
	This leads to the following theorem.
	
	
	\begin{theorem} \label{thm:ur}
		Given a dictionary $\cA$, any $d_0$-sparse vector $\c$ is the unique solution in $\BP(\A, \b)$ and $\OMP(\A, \b)$ where $\b = \A \c$ if for any partition $\cA=\cA_0\cup\cA_-$ where $\cA_0$ contains $d_0$ atoms, the columns of $\A_0$ are linearly independent and either the UDC in \cref{eq:UDC} (equivalently, the URC in \cref{eq:URC}) holds, or the G-UDC in \cref{eq:G-UDC} holds, or the G-USC in \cref{eq:G-USC} holds.
	\end{theorem}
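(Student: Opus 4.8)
The plan is to reduce \cref{thm:ur} to the universal non-uniform recovery statement already proved in \cref{thm:iur} \ref{itm:iur-UDC}, by choosing, for a given sparse vector, the ``right'' partition of $\cA$. Concretely, let $\c$ be an arbitrary $d_0$-sparse vector and set $\b = \A\c$. First I would locate its support $T := \supp(\c)$ inside the column index set of $\A$. Since $\card(T) \le d_0 \le N$, one can enlarge $T$ to an index set $S$ with $T \subseteq S$ and $\card(S) = d_0$; putting $\cA_0 := \{\a_j : j \in S\}$ and $\cA_- := \cA \setminus \cA_0$ yields a partition of $\cA$ into a set of exactly $d_0$ atoms together with the remaining points, and by construction $\c$ is supported on $\A_0$ in the sense of \cref{def:subspace-preserving}.

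The second step is simply to feed this partition into the hypothesis. By assumption, for \emph{this} choice of $\cA_0$ the columns of $\A_0$ are linearly independent and at least one of the conditions UDC \cref{eq:UDC} (equivalently URC \cref{eq:URC}), G-UDC \cref{eq:G-UDC}, or G-USC \cref{eq:G-USC} holds --- precisely the premises of \cref{thm:iur} \ref{itm:iur-UDC}. Invoking that result (with the dictionary split as $\A = [\A_0 \ \A_-]$) gives that every vector supported on $\A_0$ is the unique element of $\BP(\A, \b)$ and coincides with $\OMP(\A, \b)$; applying this to our $\c$, which is supported on $\A_0$, is exactly the conclusion of \cref{thm:ur}.

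I do not expect any genuine analytic obstacle here, since all the quantitative work is already packaged in \cref{thm:iur}; the only point requiring care is the degenerate case $\card(T) < d_0$, where the support must be padded to a size-$d_0$ set. Here it is essential that the hypothesis of \cref{thm:ur} is quantified over \emph{all} partitions with $\card(\cA_0) = d_0$, so that the padded $\cA_0$ is still guaranteed to have linearly independent columns and to satisfy one of the recovery conditions. I would also note, for cleanliness, that the conclusion does not depend on which padding (hence which partition) is chosen, as each such partition certifies the same uniqueness statement about $\BP(\A,\b)$ and $\OMP(\A,\b)$, and that the trivial case $\c = \0$ (whence $\b = \0$) needs no argument. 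No new estimates are introduced; the content of the theorem lies entirely in the uniform quantification over supports and the earlier analysis.
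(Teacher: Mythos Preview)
Your proposal is correct and matches the paper's approach exactly: the paper does not give an explicit proof of \cref{thm:ur}, but the sentence preceding it states that the result follows by requiring the universal non-uniform recovery conditions of \cref{thm:iur} to hold for every size-$d_0$ partition, which is precisely your reduction. Your careful handling of the padding step when $\card(\supp(\c)) < d_0$ and of the trivial case $\c = \0$ are details the paper leaves implicit but which your argument makes explicit and correct.
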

	
	Since we proved the UDC is equivalent to the exact recovery condition~\cref{eq:exact-recovery-condition} when the columns of $\A_0$ are linearly independent, the requirement in \cref{thm:ur} that the UDC holds is equivalent to requiring that the exact recovery condition holds. 
	
	Let us now compare the results in \cref{thm:ur} with existing results in sparse signal recovery. To this end, the \emph{mutual coherence} of $\cA$ is defined as 
	\begin{equation}
	\mu(\cA) := \max_{\a_i \in \cA, \a_j \in \cA, i \ne j} |\langle \a_i, \a_j \rangle|,
	\end{equation}
	which should not be confused with the \emph{coherence} in~\cref{eq:def-coherence}.  It is known~\cite{DonohoElad:PNAS03,Tropp:TIT04} that if
	\begin{equation}\label{eq:mu-condition}
	\mu(\cA) < \tfrac{1}{2{d_0}-1},
	\end{equation}
	then both BP and OMP correctly recover any $d_0$-sparse vector. The following result shows how the mutual coherence condition~\cref{eq:mu-condition} is related to \cref{thm:ur}.

	
	
	\begin{theorem}
		If the mutual coherence condition  \cref{eq:mu-condition} holds, then 
		the columns of $\A_0$ are linearly independent and the \text{G-USC} \cref{eq:G-USC} holds, which implies using \cref{thm:G-USC} that the \text{G-UDC} 
		holds, which implies using \cref{thm:G-UDC} that the \text{UDC} 
		holds.\label{thm:sparse-recovery-compare}
	\end{theorem}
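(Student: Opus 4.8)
The implications ``G-USC $\Rightarrow$ G-UDC $\Rightarrow$ UDC'' are already supplied by \cref{thm:G-USC,thm:G-UDC} (see \cref{fig:result-flowchart}), so the plan is to establish only the two remaining assertions: that the columns of $\A_0$ are linearly independent, and that the G-USC \cref{eq:G-USC} holds. Write $\mathbf{G} := \A_0^\transpose\A_0$. First I would note that $\cA_0$ contains exactly $d_0$ atoms — at least $d_0$ since it spans $\cS_0$, and at most $d_0$ because any $d_0{+}1$ of its unit atoms would have a Gram matrix with off-diagonal row sums at most $d_0\,\mu(\cA) < d_0/(2d_0-1) < 1$, hence strictly diagonally dominant and nonsingular, contradicting linear dependence in the $d_0$-dimensional $\cS_0$. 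The same Gershgorin argument applied to the $d_0\times d_0$ matrix $\mathbf{G}$ (unit diagonal, off-diagonal entries of modulus at most $\mu(\cA)$, off-diagonal row sums at most $(d_0-1)\mu(\cA) < 1$) shows $\mathbf{G}$ is positive definite, so $\A_0$ has full column rank, and moreover $\lambda_{\min}(\mathbf{G}) \ge \beta := 1-(d_0-1)\mu(\cA) > 0$; this eigenvalue bound drives the rest.

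For the G-USC I would reduce it to a projection estimate: for $\0\ne\v\in\cS_0$ and $\a\in\cA_-$, since $\v\in\cS_0$ and by Cauchy--Schwarz, $|\langle\v/\|\v\|_2,\a\rangle| = |\langle\v/\|\v\|_2, P_{\cS_0}\a\rangle| \le \|P_{\cS_0}\a\|_2$, so \cref{eq:G-USC} follows once $\|P_{\cS_0}\a\|_2 < r_0$ for every $\a\in\cA_-$. I would then bound both sides through $\mathbf{G}$. Writing $P_{\cS_0}\a = \A_0\mathbf{G}^{-1}\A_0^\transpose\a$ and noting that the entries of $\A_0^\transpose\a$ are inner products of distinct atoms, hence of modulus at most $\mu(\cA)$, gives $\|P_{\cS_0}\a\|_2^2 = (\A_0^\transpose\a)^\transpose\mathbf{G}^{-1}(\A_0^\transpose\a) \le d_0\,\mu(\cA)^2/\beta$. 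For the lower bound on $r_0$, by \cref{def:circumradius,thm:inradius-coveringradius-circumradius} and the symmetry of $\cK_0^o$ we have $r_0 = 1/R_0$ with $R_0 = \max\{\|\v\|_2 : \v\in\cK_0^o\}$; parametrising $\v\in\cK_0^o\subseteq\range(\A_0)$ as $\v = \A_0\mathbf{G}^{-1}\y$ with $\y = \A_0^\transpose\v$ and $\|\y\|_\infty\le1$ yields $\|\v\|_2^2 = \y^\transpose\mathbf{G}^{-1}\y \le d_0/\beta$, so $r_0^2 \ge \beta/d_0$.

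Putting the two estimates together, $\|P_{\cS_0}\a\|_2 < r_0$ is guaranteed whenever $\beta^2 > d_0^2\,\mu(\cA)^2$, i.e.\ (as $\beta>0$) $\beta > d_0\,\mu(\cA)$, i.e.\ $1 > (2d_0-1)\,\mu(\cA)$, which is precisely the hypothesis \cref{eq:mu-condition}; hence the G-USC holds and the cited implications finish the argument. I expect the main (and really only) delicate point to be the constant bookkeeping: the Gershgorin bound $\lambda_{\min}(\mathbf{G}) \ge 1-(d_0-1)\mu(\cA)$ must be used in \emph{both} the upper estimate for $\|P_{\cS_0}\a\|_2$ and the lower estimate for $r_0$, so that the threshold lands exactly at $\mu(\cA) = 1/(2d_0-1)$ rather than at a weaker value; the case $\mu(\cA)=0$ is subsumed (then $\|P_{\cS_0}\a\|_2=0<r_0$) and every other step is routine.
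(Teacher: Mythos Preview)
Your proof is correct and arrives at exactly the same threshold as the paper, but the route you take to bound the subspace coherence $\mu(\cS_0,\cA_-)$ is genuinely different. The paper does \emph{not} use the projection estimate $|\langle \v/\|\v\|_2,\a\rangle|\le\|P_{\cS_0}\a\|_2$; instead, for each unit $\v\in\cS_0$ it solves $\min_{\c}\|\c\|_1$ subject to $\A_0\c=\v$, invokes duality to show the optimal value is at most $1/r_0$, and then uses H\"older's inequality to get $\|\A_-^\transpose\v\|_\infty\le\|\A_-^\transpose\A_0\|_\infty\|\c^*\|_1\le\mu(\cA)/r_0$. After substituting the same Gershgorin-based lower bound $r_0^2\ge\beta/d_0$ that you derive, the paper's bound $\mu(\cA)/r_0\le\sqrt{d_0}\,\mu(\cA)/\sqrt{\beta}$ coincides numerically with your bound $\|P_{\cS_0}\a\|_2\le\sqrt{d_0}\,\mu(\cA)/\sqrt{\beta}$, and both reduce the G-USC to $(2d_0-1)\mu(\cA)<1$. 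Your argument is more elementary---it needs only Cauchy--Schwarz and the explicit formula $P_{\cS_0}=\A_0\mathbf{G}^{-1}\A_0^\transpose$---whereas the paper's argument produces the intermediate inequality $\mu(\cS_0,\cA_-)\le\mu(\cA)/r_0$, which ties the coherence bound directly to the inradius and is of independent interest. One minor framing comment: in the Section~5 setting $|\cA_0|=d_0$ by construction (the support of a $d_0$-sparse vector), so your ``exactly $d_0$ atoms'' detour is unnecessary there; the Gershgorin step on the $d_0\times d_0$ Gram matrix already gives linear independence and the eigenvalue bound in one stroke.
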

	\begin{proof}
		The fact that the columns of $\A_0$ are linearly independent when the mutual coherence condition holds is well known in sparse recovery~\cite{DonohoElad:PNAS03}.
		Therefore, we only need to show that the G-USC holds. 
		We first derive a bound on $r_0$. From \cref{thm:sparse-recovery-dualpoint}, any nonzero $\v \in \cK_0^o$ can be written as $\v = \A_0 (\A_0^{\transpose} \A_0) ^{-1} \u$ for some $0 < \|\u\|_\infty \le 1$. Letting $\lambda_{\min}(\A_0^{\transpose} \A_0)$ denote the minimum eigenvalue of $\A_0^{\transpose} \A_0$, it follows that
		\begin{align}\label{v-bound}
			\|\v\|_2 ^2 
			&= \u^\transpose (\A_0^{\transpose} \A_0)^{-1} \u 
			\leq \frac{\|\u\|_2^2}{\lambda_{\min}(\A_0^{\transpose}\A_0)}
			\leq \frac{d_0 \cdot \|\u\|_\infty^2}{\lambda_{\min}(\A_0^{\transpose}\A_0)}
			\leq \frac{d_0}{\lambda_{\min}(\A_0^{\transpose}\A_0)}.
		\end{align}
		Since the diagonal entries of $\A_0^{\transpose} \A_0$ are $1$ and the magnitude of the off-diagonal entries is bounded by $\mu(\cA)$, it follows from Gershgorin's disc theorem that $\lambda_{\min} (\A_0^{\transpose} \A_0) \ge 1 - ({d_0}-1)\mu(\cA)$, which then combined with~\cref{v-bound} gives 
		$\|\v\|_2 ^2 \le \frac{{d_0}}{1 - ({d_0}-1)\mu(\cA)}$ for all $\v\in\cK_0^o$.
		Consequently, \cref{thm:inradius-coveringradius-circumradius} implies that
		\begin{equation}\label{r0-bound}
		r_0 = 1/R_0 \ge \sqrt{1 - ({d_0}-1)\mu(\cA)} / \sqrt{d_0}.
		\end{equation}
		
		We now give an upper bound on the right-hand side of the G-USC. By definition,
		\begin{equation}\label{mu.span}
		\mu(\spann(\cA_0), \cA_-) 
		\equiv \max\{\|\A_-^\transpose \v\|_\infty : \v \in \spann(\cA_0) \ \text{and} \ \|\v\|_2 = 1\}.
		\end{equation}
		To bound $\|\A_-^\transpose \v\|_\infty$ for each $\v \in \spann(\cA_0)$ with $\|\v\|_2 = 1$, consider the problem
		\begin{equation}
		\c^* = \arg\min_{\c} \ \|\c\|_1 \st \v = \A_0 \c,
		\end{equation}
		and its dual problem
		\[
		\w^* = \arg\max_\w \ \langle \w, \v \rangle \st \|\A_0^{\transpose} \w\|_{\infty} \le 1.
		\]
		Since strong duality holds for linear problems and the primal problem is feasible, we known that $\|\c^*\|_1 = \langle \w^*, \v \rangle$.
		Moreover, we can decompose $\w^*$ into two orthogonal components given by $\w^* = \w^\perp + \w ^\parallel$, where $\w ^\parallel \in \spann(\cA_0)$. 
		Since $\v$ and all columns of $\A_0$ lie in $\spann(\cA_0)$, we have $\langle \w^*, \v \rangle = \langle \w^\parallel, \v \rangle$ and $\|\A_0^{\transpose} \w^\parallel\|_\infty = \|\A_0^{\transpose} \w^*\|_\infty \le 1$. 
		This establishes that $\w^\parallel \in \cK_0^o$, and therefore
		$\|\w^\parallel\|_2 \le R_0 = 1 /r_0$.
		It follows that
		\begin{equation}
		\|\c^*\|_1 = \langle \w^*, \v \rangle = \langle \w^\parallel, \v \rangle \le \|\w^\parallel\|_2 \|\v\|_2 \le 1/ r_0,
		\end{equation}
		which may then be used to show that
		\begin{equation}
		\|\A_-^\transpose \v\|_\infty = \|\A_-^\transpose \A_0 \c^*\|_\infty \le \|\A_-^\transpose \A_0\|_\infty \|\c^*\|_1
		\le \mu(\cA) / r_0.
		\end{equation}
		Combining this inequality with~\cref{mu.span}, \cref{r0-bound}, and \cref{eq:mu-condition} yields
		\begin{equation}
		\begin{split}
		\mu(\spann(\cA_0), \cA_-) &\le \mu(\cA) /r_0 = r_0  (\mu(\cA) /r_0^2)\\
		&\le r_0 \frac{{d_0} \mu(\cA)}{1 - ({d_0}-1)\mu(\cA)} = r_0  + r_0  \frac{\mu(\cA)(2{d_0}-1) - 1}{1-({d_0}-1)\mu(\cA)} < r_0.
		\end{split}
		\end{equation}
		This proves that the G-USC holds, and completes the proof. 
	\end{proof}


	\section{Conclusions and Future Work}
	\label{sec:conclusions}
	
	In this work, we studied the BP and OMP algorithms as a tool for the task of subspace-preserving recovery. 
	Our key results are sufficient conditions for both instance and universal subspace-preserving recovery characterized by the inradius and coherence properties of the data, which have clear geometric interpretations. 
	We further showed that our results apply to the traditional sparse signal recovery problem and that some of our conditions may be regarded as generalization of well-known conditions in sparse recovery. 
	We believe that these results provide new perspectives into the traditional sparse recovery problem.
	
	The analysis in this paper assumes the data are not corrupted by noise. As a follow-up task, it is important to understand whether BP and OMP are able to recover subspace-preserving solutions for noisy data. Prior work on subspace clustering has provided partial results for noisy data \cite{Soltanolkotabi:AS14,Wang:JMLR16,Tschannen:TIT18}, indicating that our conditions for subspace-preserving recovery may be generalizable to the case of noisy data as well. 
	
	\appendix
	\section{Proofs} 
	
	\subsection{A lemma for establishing the definition of $\cK_0^o$}
	
	\begin{lemma}\label{thm:polar-set-equivalent}
		For an arbitrary set $\cA_0 \subseteq \cS_0 \subseteq \Re^D$, we have
		\begin{equation}
		\{\v \in \cS_0: |\langle \v, \a \rangle|\le 1, \forall \a \in \cK_0\} = \{\v \in \cS_0: |\langle \v, \a \rangle|\le 1, \forall \a \in \pm\cA_0\}.
		\end{equation}
	\end{lemma}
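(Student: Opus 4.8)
The plan is a direct double-inclusion argument using only the definition $\cK_0 = \conv(\pm\cA_0)$ and the linearity of the inner product. Write $L := \{\v \in \cS_0: |\langle \v, \a \rangle|\le 1 \text{ for all } \a \in \cK_0\}$ and $R := \{\v \in \cS_0: |\langle \v, \a \rangle|\le 1 \text{ for all } \a \in \pm\cA_0\}$; the goal is $L = R$.

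For the inclusion $L \subseteq R$, I would simply observe that $\pm\cA_0 \subseteq \conv(\pm\cA_0) = \cK_0$, so any $\v$ that satisfies $|\langle\v,\a\rangle|\le 1$ for every $\a \in \cK_0$ in particular satisfies it for every $\a \in \pm\cA_0$. This requires no computation.

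For the reverse inclusion $R \subseteq L$, I would take $\v \in R$ and an arbitrary $\a \in \cK_0 = \conv(\pm\cA_0)$. By the definition of the convex hull, $\a$ is a \emph{finite} convex combination $\a = \sum_{i} \lambda_i \a_i$ with each $\a_i \in \pm\cA_0$, each $\lambda_i \ge 0$, and $\sum_i \lambda_i = 1$. Then by linearity of $\langle \v, \cdot\rangle$ and the triangle inequality,
\[
|\langle \v, \a \rangle| = \Bigl| \sum_i \lambda_i \langle \v, \a_i \rangle \Bigr| \le \sum_i \lambda_i |\langle \v, \a_i \rangle| \le \sum_i \lambda_i = 1,
\]
where the last inequality uses $\v \in R$ and $\a_i \in \pm\cA_0$. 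Since $\a \in \cK_0$ was arbitrary and $\v \in \cS_0$, this shows $\v \in L$, completing the proof.

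There is no real obstacle here; the only point worth a moment's care is that $\cA_0$ (hence $\pm\cA_0$) may be infinite, so one should invoke the fact that every point of a convex hull is a finite convex combination of points of the generating set rather than trying to write $\a$ as an integral or a limit. With $\cK_0$ taken literally as $\conv(\pm\cA_0)$ (not its closure), this is automatic, so the estimate above goes through verbatim.
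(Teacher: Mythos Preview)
Your proof is correct and is essentially the same double-inclusion argument the paper gives: the paper also uses $\pm\cA_0\subseteq\cK_0$ for one direction and, for the other, writes $\bar\a\in\cK_0$ as a finite convex combination of points in $\pm\cA_0$ and applies the triangle inequality. The only cosmetic difference is that the paper parameterizes the convex combination by pairs of nonnegative weights $c_j^\pm$ on $\pm\a_j$ rather than a single weight on each $\a_i\in\pm\cA_0$, which leads to the same bound.
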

	\begin{proof}
		Since $\pm\cA_0 \subseteq \cK_0$, it follows that $\{\v \in \cS_0: |\langle \v, \a \rangle|\le 1, \forall \a \in \cK_0\} \subseteq \{\v \in \cS_0: |\langle \v, \a \rangle|\le 1, \forall \a \in \pm\cA_0\}$.
		To prove the opposite inclusion, we take any $\bar{\v} \in \cS_0$ that satisfies $|\langle \bar{\v}, \a \rangle|\le 1$ for all $\a \in \pm\cA_0$, and any $\bar{\a} \in \cK_0$, and show that $|\langle \bar{\v}, \bar{\a} \rangle|\le 1$. 
		
		Since $\cK_0 = \conv(\pm \cA_0)$, there exist $\{c_j^+ \ge 0\}_{j: \a_j \in \cA_0}$ and $\{c_j^- \ge 0\}_{j: \a_j \in \cA_0}$ such that $\sum_{j: \a_j \in \cA_0} (c_j^+ + c_j^-) = 1$ and $\bar{\a} = \sum_{j: \a_j \in \cA_0} (c_j^+ \a_j - c_j^- \a_j)$. It follows that
		\begin{equation*}
			\begin{split}
				|\langle \bar{\v}, \bar{\a} \rangle| &= |\langle \bar{\v}, \sum_{j: \a_j \in \cA_0} (c_j^+ \a_j - c_j^- \a_j) \rangle|
				\le \sum_{j: \a_j \in \cA_0} |c_j^+ - c_j^-|  |\langle \bar{\v},  \a_j \rangle| \\
				&\le \sum_{j: \a_j \in \cA_0} |c_j^+ - c_j^-|
				\leq \sum_{j: \a_j \in \cA_0} (c_j^+ + c_j^-)
				= 1,
			\end{split}
		\end{equation*}
		which completes the proof.
	\end{proof}

	\subsection{Proof of \cref{thm:inradius-coveringradius-circumradius} (relationship between $r_0$, $R_0$ and $\gamma_0$)}
	\begin{proof}
		From \cite{Soltanolkotabi:AS12} we know that $r_0 = 1 / R_0$. Thus, it remains to show that $R_0 = 1 / \cos(\gamma_0)$. 
		From the definition of the covering radius $\gamma_0$ 
		we have
		\begin{equation}
		\gamma_0 = \sup_{\w \in \cS_0 \cap \Sp^{D-1}}\theta(\pm \cA_0, \w)
		=\sup_{\w \in \cS_0 \cap \Sp^{D-1}}\inf_{\v \in \pm\cA_0}\theta(\v, \w).
		\label{eq:prf-gamma0}
		\end{equation}
		By taking the cosine on both sides of \cref{eq:prf-gamma0}, we get
		\begin{equation}
		\cos(\gamma_0) 
		= \min_{\w \in \cS_0 \cap \Sp^{D-1}}\sup_{\v \in \pm\cA_0} \cos(\theta(\v, \w)) 
		= \min_{\w \in \cS_0 \cap \Sp^{D-1}} \|\A_0^{\transpose} \w\|_\infty.
		\label{eq:prf-cos-gamma0}
		\end{equation}
		On the other hand, by the definition of circumradius, we have
		\begin{equation}
		R_0
		= \max\{ \|\v\|_2 : \|\A_0^{\transpose} \v\|_\infty \le 1 \ \text{and} \ \v \in\cS_0\}.
		\label{eq:prf-circumradius}
		\end{equation}
		To complete the proof we show that the RHS of \cref{eq:prf-circumradius}  is the reciprocal of \cref{eq:prf-cos-gamma0}, i.e., 
		\begin{equation}
		\max\{ \|\v\|_2 : \|\A_0^{\transpose} \v\|_\infty \le 1 \ \text{and} \ \v \in\cS_0\}
		= 1/\min_{\w \in \cS_0 \cap \Sp^{D-1}} \|\A_0^{\transpose} \w\|_\infty.
		\label{eq:prf-equi-gamma-K0}
		\end{equation}	
		To see that \cref{eq:prf-equi-gamma-K0} holds, let $\v^*$ and $\w^*$ be, respectively, an arbitrary solution to the optimization problems on the LHS and RHS of  \cref{eq:prf-equi-gamma-K0}. 
		Defining $\bar{\v} = \w^* / \|\A_0^{\transpose} \w^*\|_\infty$, we get $1/\|\A_0^{\transpose} \w^*\|_\infty = \|\w^*\|_2 / \|\A_0^{\transpose} \w^*\|_\infty = \|\bar{\v}\|_2 \le \|\v^*\|_2$, where the inequality follows because $\bar{\v}$ satisfies the constraints of the optimization on the LHS of \cref{eq:prf-equi-gamma-K0}, i.e., $\|\A_0^{\transpose} \bar{\v}\|_\infty \le 1$ and $\bar{\v} \in \cS_0$. 
		On the other hand, we may define $\bar{\w} = \v^* / \|\v^*\|_2$ and get $ 1 / \|\v^*\|_2 \ge \|\A_0^{\transpose} \v^*\|_\infty / \|\v^*\|_2 = \|\A_0^{\transpose} \bar{\w}\|_\infty \ge \|\A_0^{\transpose} \w^*\|_\infty$, where the final inequality follows from the fact that $\bar{\w}$ satisfies the constraint of the optimization on the RHS of \cref{eq:prf-equi-gamma-K0}.
		Combining these two gives $1 / \|\A_0^{\transpose} \w^*\|_\infty = \|\v^*\|_2$ so that \cref{eq:prf-equi-gamma-K0} holds. 
	\end{proof}
	
	\section*{Acknowledgments}
	We would like to acknowledge that the equivalent form of the T-UDC that is given in \cref{eq:equivalent-TUDC-null} is established by Dr. Mustafa Devrim Kaba at Johns Hopkins University. 
	
	%
	%

	\bibliographystyle{siamplain}
	
	\bibliography{biblio/vidal,biblio/vision,biblio/math,biblio/learning,biblio/sparse,biblio/geometry,biblio/dti,biblio/recognition,biblio/surgery,biblio/coding,biblio/segmentation,biblio/dataset}
\end{document}